%% file: main.tex
\documentclass[10pt]{article}
\usepackage{fullpage}
\usepackage{microtype}
\usepackage[utf8]{inputenc}
\usepackage[T1]{fontenc}
\usepackage{lmodern}
\usepackage{natbib}
\usepackage{tabularx}

\usepackage{stmaryrd}
\usepackage{amsmath}
\usepackage{amsfonts}
\usepackage{amssymb}
\usepackage{dsfont}
\usepackage{euscript}
\usepackage{enumitem}
\usepackage{algpseudocode}
\usepackage{algorithm}

\usepackage{xcolor}

\usepackage{tikz}
\usepackage{tikz-cd}
\usepackage{mathtools}
\usepackage[bibliography=common]{apxproof}
\usepackage[normalem]{ulem}
\usepackage[english]{babel}
\usepackage{soul}

\input{commands}

\newcommand{\la}{\lambda}

\newcommand{\norm}[2][K]{\left\|#2\right\|_{#1}}
\newcommand{\nor}[1]{\left\|#1\right\|}

\newcommand{\gnorm}[1]{\left\|#1\right\|}

\newcommand{\inner}[2]{\left\langle#1,#2\right\rangle}
\providecommand{\scal}[2]{\left\langle{#1},{#2}\right\rangle}

\newcommand{\tr}[1]{\operatorname{Tr}\left[ #1 \right]}

\newcommand{\cA}{\mathcal{A}}
\newcommand{\cB}{\mathcal{B}}

\newcommand{\cE}{\mathcal{E}}
\newcommand{\cF}{\mathcal{F}}
\newcommand{\cG}{\mathcal{G}}
\newcommand{\cH}{\mathcal{H}}
\newcommand{\cI}{\mathcal{I}}

\newcommand{\cL}{\mathcal{L}}
\newcommand{\cM}{\mathcal{M}}
\newcommand{\cN}{\mathcal{N}}
\newcommand{\cP}{\mathcal{P}}

\newcommand{\cR}{\mathcal{R}}

\newcommand{\cU}{\mathcal{U}}

\newcommand{\cX}{\mathcal{X}}
\newcommand{\cY}{\mathcal{Y}}

\newcommand{\wh}{\widehat}

\newcommand{\bbE}{\mathbb{E}}
\newcommand{\bbP}{\mathbb{P}}

\usepackage[pdftex,colorlinks=true,linkcolor=blue, citecolor=blue,]{hyperref}
\usepackage{cleveref}

\theoremstyle{plain}
\newtheorem{theorem}{Theorem}[section]
\newtheorem{proposition}[theorem]{Proposition}
\newtheorem{lemma}[theorem]{Lemma}
\newtheorem{corollary}[theorem]{Corollary}
\newtheorem{remark}[theorem]{Remark}

\theoremstyle{definition}
\newtheorem{definition}[theorem]{Definition}
\newtheorem{assumption}[theorem]{Assumption}
\newtheorem{example}[theorem]{Example}

\theoremstyle{remark}

\newcommand{\be}{\begin{equation}}
\newcommand{\ee}{\end{equation}}

\usepackage{authblk}

\title{Learning ergodic dynamical systems from a finite trajectory}
\author[1]{Oleksii Kachaiev}
\author[1]{Silvia Villa}
\author[2,3]{Lorenzo Rosasco}

\affil[1]{MaLGa center, DIMA, Università degli Studi di Genova, Genoa, Italy}
\affil[2]{Istituto Italiano di Tecnologia, Genoa, Italy}
\affil[3]{MaLGa center, DIBRIS, Università degli Studi di Genova, Genoa, Italy}

\date{}

\begin{document}

\maketitle
\begin{abstract}
We consider the problem of learning from a single finite trajectory of an ergodic stochastic dynamical system. More precisely, we study discrete-time autonomous stochastic systems {defining} time-homogeneous Markov processes. We first focus on estimating the optimal {one-step prediction} function by nonlinear least squares, and derive high-probability guarantees measured with respect to the invariant measure of the process. These results make explicit how the non-independent and non-identically distributed nature of trajectory data modifies the classical statistical learning analysis. We then extend the framework to higher-order systems and finite-state spaces. Finally, we show that the same least squares and concentration arguments naturally extend to learning Koopman operators. Our approach combines tools from statistical learning theory and quantitative ergodic theory for Markov chains. It relies, in particular, on a concentration inequality for Hilbert-space-valued additive functionals of uniformly geometrically ergodic Markov chains.
\end{abstract}

\tableofcontents

\section{Introduction}

Learning from sequential data is a central problem, with applications spanning the analysis, identification, and control of dynamical systems \citep{brunton}, time-series forecasting \citep{lim2021time}, and language modeling \citep{bengio2003neural}, to name a few examples.
Sequential data pose a  challenge to classical statistical learning theory, which is developed primarily for independent and identically distributed (i.i.d.) data.
While theoretical frameworks avoiding explicit modeling assumptions are possible \citep{lugosicesa,orabona}, in this paper we postulate an explicit data-generating process in the form of a dynamical system. We further assume that the data are given by a single finite trajectory of the system. One objective of our study is to highlight the main challenges of this setting compared with classical statistical learning theory. Most works departing from the i.i.d. setting focus on relaxing independence, while often assuming stationarity. In these studies, dependence is quantified in terms of suitable mixing conditions, which enable the concentration results needed to derive learning guarantees; see, e.g., \citep{blanchard2019concentration} and Section~\ref{rel_wrk} for further references.

Instead, we consider ergodic systems and explicitly account for the transient phase before convergence to stationarity. The key observation enabling our study is that stochastic dynamical systems naturally induce Markov processes, providing a framework in which learning from a single trajectory can be analyzed. To make the comparison with classical statistical learning transparent, we begin with the {one-step prediction} problem for real-valued system states. While simplified, this setting is closest to classical supervised learning and already allows us to tackle the main statistical issues.
In this context, we study a nonlinear least squares estimator over possibly nonparametric model classes. We derive both universal consistency and finite-sample bounds, building on the rich literature on kernel methods, from foundational works \citep{devito2005learning,devito2005model,smale2005shannon,caponnetto2007optimal,smale2007learning} to more recent contributions; see, e.g., \citep{blanchard2018optimal,fischer2020sobolev,li2024towards,zhang2025optimal}.
We consider  natural assumptions adapted from supervised learning while also imposing assumptions specific to the dynamical-systems/Markov-process setting.
Notably, uniform geometric ergodicity (UGE) allows us to obtain non-asymptotic guarantees.
The main technical contribution is a set of concentration results for Markov processes, including a covariance-operator concentration bound.
These results leverage concentration inequalities for Hilbert-space-valued martingales \citep{pinelis1994optimum}, combined with a martingale decomposition enabled by the existence of a solution to the Poisson equation for Markov processes satisfying UGE.
This approach extends analogous real-valued results to Hilbert-space-valued random variables; see \citep{glynn2002hoeffding,gordin1969central}.

Using real-valued state {one-step prediction} as a starting point, we consider a number of extensions.
We study the more practical question of one-step prediction of vector-valued states and show it can be treated with minimal adjustments to the scalar-state framework and the corresponding analysis. Infinite-dimensional state spaces are also covered naturally. We then study higher-order Markov processes, where the transition to the next state depends on the previous \( p \) states rather than only on the current one. This scenario, which is close to order-\( p \) autoregressive models \citep{brockwell}, can be analyzed under a natural UGE condition after rewriting the process as a first-order Markov process on \( p \)-tuples of consecutive states. We then consider the special case of {one-step prediction} on finite-state spaces, which naturally relates to next-token prediction problems relevant to simple language models \citep{shannon1948mathematical,manning1999foundations}.
Here, we approach the problem through a least squares surrogate approach, akin to classical approaches to multi-category classification \citep{bartlett2006convexity,mroueh2012multiclass} and structured prediction \citep{ciliberto2020general,nowak2019sharp}. While many developments are possible in this setting, we derive and discuss some basic illustrative estimates and results.

Finally, beyond {one-step prediction}, we discuss how the approach and analysis developed can be extended to learning the Koopman operator from a finite trajectory.
The least squares approach we consider is equivalent to kernel Koopman regression, as studied for example in \citep{kostic2022learning}.
Our derivation is slightly different, and perhaps simpler, since it is based only on feature maps \citep{scholkopf2002learning} avoiding introducing the corresponding reproducing kernel Hilbert spaces (RKHS). Again, under the UGE assumption, we adapt the {one-step prediction} analysis to derive learning guarantees in operator norm. These estimates can be contrasted with previous results derived for i.i.d. or stationary data; see e.g. \citep{kostic2022learning} and Section~\ref{rel_wrk}.
We note that those works also consider norms other than the operator norm, possibly to derive spectral learning guarantees, see e.g. \citep{kostic2023sharp}.
While we focus here on operator-norm estimates, other norms could be considered adapting standard results from supervised learning, possibly under further assumptions. More generally, throughout the paper we strive to simplify the proofs and exposition, highlighting the main challenges in the dynamical systems scenario. We intentionally defer to future work a number of possible developments that are straightforward using known results in classic statistical learning theory for i.i.d. data but somewhat involved.

Before summarizing our contributions and describing the plan of the paper, we provide a detailed overview of relevant works.

\subsection{Related work}\label{rel_wrk}
In this section we provide a broader overview of works related to our study.
As noted above, a key aspect is the study of learning problems beyond the i.i.d. setting.
We begin by reviewing works that relax independence while retaining stationarity.

\paragraph{Learning from dependent data.} The topic is mostly built around the notion of \emph{mixing}, i.e., the mechanism which governs random variables to become more independent or decorrelated going into the future \citep{bradley1986basic,bradley2005basic}. Consistency and generalization properties under various mixing conditions using the empirical risk minimization (ERM) framework are studied in \citet{white1984nonlinear,zou2009generalization,steinwart2009learning,steinwart2009fast,hang2016learning,hang2017bernstein,roy2021empirical,NEURIPS2022_1dc9fbdb}. \citet{agarwal2012generalization} derives the bound for online learning from dependent data. Methodologically close to our work, \citet{blanchard2019concentration} presents learning sampling rate for kernel ridge regression (KRR) under \( \mathcal C \)-mixing assumption via kernel integral operator. Moving beyond mixing-type assumptions, \citet{massiani2024consistency} establish asymptotic consistency of kernel methods under empirical weak convergence. The notion of mixing is classically defined for stationary processes. Generalization results for non-stationary processes are studied in \citet{kuznetsov2017generalization}.

\paragraph{Nonparametric estimation from time series.} Nonparametric prediction and estimation for dependent data in the context of time series goes back to early results on strong uniform convergence rates for kernel regressor estimator for stationary real-valued Markov processes of a fixed order \citet{collomb1986strong}.
Another line of research dedicated to consistency under minimal dependence assumptions: \citet{yakowitz1999strongly} construct partition and kernel-series estimators for stationary ergodic sequences, while \citet{biau2010nonparametric} develop various sequential aggregation schemes, including the one based on kernels; these results avoid mixing assumptions, but are mainly consistency or universal-consistency statements, i.e., do not provide non-asymptotic rates. As for Markov chains, \citet{wu2010kernel} develops asymptotic normality and uniform rates for kernel density and regression estimators of nonlinear causal time series using predictive dependence measures rather than mixing in its classical sense. Extensions to non-stationary settings include \citet{vogt2012nonparametric} with the focus on locally stationary regressors and time-varying regression functions with kernel estimation to mitigate the curse of dimensionality. Learning-theoretic approaches such as \citet{mcdonald2017nonparametric} provide high-probability risk bounds for time series forecasting under \(\beta\)-mixing via capacity control. Closer to statistical learning theory for time series, \citet{kuznetsov2018theory} derive data-dependent generalization bounds for non-stationary and non-mixing processes via sequential complexity and a suitable discrepancy between source and target distributions, leading to discrepancy-weighted forecasting algorithms.

\paragraph{Learning dynamical systems.} The recovery of dynamical models from observed trajectories is the central objective of the classical field of \emph{system identification}. Classical theory studies the estimation of parametric or state-space models and establishes consistency and asymptotic normality under suitable assumptions; see, e.g.~\citet{soderstrom1988system,ljung1998system}. More recent work provides non-asymptotic guarantees from a single trajectory in various settings; see, e.g., \citet{stanhope2014identifiability,simchowitz2018learning,oymak2019non,fattahi2019learning,ziemann2022single,sattar2022finite,block2023smoothed,li2023non,musavi2024identification,zhang2025online} to mention a few. In contrast to this literature, our primary objective is not to recover a finite-dimensional parametrization of the transition rule. Instead, we formulate one-step prediction as a possibly nonparametric least squares problem and study the prediction error with respect to the invariant measure of the process.

\paragraph{Koopman operator framework.} A related operator-theoretic approach seeks to learn the so-called \emph{Koopman} operator associated with the dynamics; see, e.g., \citet{giannakis2019data,mezic2020koopman,bevanda2021koopman,brunton2021modern,colbrook2025introductory}. For finite-dimensional Koopman approximations (a.k.a. \emph{extended dynamic mode decomposition}), finite-data error bounds  under i.i.d.\ or ergodic sampling are studied in \citet{nuske2023finite}. Closer to the direction of our work, kernel-based estimators of Koopman operators and their statistical properties are studied in \citet{mollenhauer2022kernel,kostic2022learning,kostic2023sharp,philipp2024error}. In particular, \citet{kostic2022learning} considers learning from a single trajectory of a stationary ergodic Markov process under mixing assumptions, while convergence in expectation for an online nonparametric learning algorithm is established in \citet{hou2024nonparametric}. In the present work, learning the Koopman operator is studied as an extension of the one-step prediction framework. We show that the same least squares formulation and the same concentration arguments can be applied under uniform geometric ergodicity, including when the observed trajectory is not initialized at stationarity.

\paragraph{Learning from Markov trajectories.}
The literature most directly related to our setting studies regression and statistical learning when the observations are generated by a Markov process. Generalization and consistency results for regression from uniformly ergodic Markov chains are derived, for example, in \citet{zou2009learning,zou2012generalization}. In these works, the Markov process primarily acts as a dependent sampling mechanism for an otherwise standard learning problem. In our setting, by contrast, the input-output pairs, \( (X_t,X_{t+1}) \), are transitions of the same process whose dynamics are to be learned. Moreover, we do not assume that the process is initialized at stationarity. We define the population risk with respect to the invariant measure \(\pi\), and use uniform geometric ergodicity to quantify both temporal dependence and the transient discrepancy between the distribution of \(X_t\) and \(\pi\). This also distinguishes our formulation from single-trajectory bounds in which the prediction error is defined with respect to a finite-horizon, trajectory-dependent distribution, as in \citet{ziemann2022single}. Related but different statistical targets include the estimation of invariant and transition densities. For example, \citet{lacour2008nonparametric} derives minimax \(L^2\)-rates over Besov classes under minorization and geometric ergodicity.

\medskip
Finally, the remainder of the section reviews the literature concerning the main analytical tools used in this work rather than alternative formulations of the learning problem.

\paragraph{Kernel methods and reproducing kernel Hilbert spaces.}
Our non-linear least squares analysis builds on the classical theory of regularized learning in reproducing kernel Hilbert spaces (RKHS). Foundational results establish consistency and finite-sample rates for kernel least squares through the spectral properties of covariance and kernel integral operators; see, e.g., \citet{devito2005learning,devito2005model,smale2005shannon,caponnetto2007optimal,smale2007learning}. More recent developments include for example, \citet{blanchard2018optimal,fischer2020sobolev,li2024towards,zhang2025optimal}. In this paper, we use the same operator-theoretic formulation of regularized least squares. The main difference is that empirical covariance operators and feature averages are computed along a single Markov trajectory rather than from i.i.d. observations. Then the standard deterministic parts of the kernel least squares analysis remain applicable, while the probabilistic arguments require concentration results adapted to the trajectory setting.

\paragraph{Concentration inequalities.} The key technical tool for obtaining our result is a novel concentration inequality for the Hilbert-valued observables of a uniformly ergodic Markov chain. Hence we provide a quick overview of the related results available in the literature. While a large body of work is available for the concentration of partial sums on Markov chains, the focus is primarily on real-valued bounded observables. Notable mentions. \citet{a2004optimal,kontorovich2008concentration,kontorovich2014uniform} study finite and countably large state space chains. Hoeffding type inequality under minorization condition \citet{glynn2002hoeffding} , under \emph{spectral gap} (or generalization of thereof) \citet{miasojedow2014hoeffding,fan2021hoeffding}, under integral probability measure (IPM) contraction \citet{chen2023hoeffding}, and under Wasserstein contraction \citet{sandric2023hoeffding}. Bernstein-type inequalities \citet{huang2024bernstein,jiang2024bernsteinsinequalitiesgeneralmarkov} are derive under spectral gap condition. The case of more general vector- or Hilbert-valued observables remains rarely mentioned. In this block we can mention \citet{katselis2021concentration} for Lipschitz observables over finite-state space and \citet{adamczak2008tail} which derives tail inequality for suprema of empirical processes with application to geometrically ergodic Markov chains.

\subsection{Summary of contributions and plan of the paper}

In this work we study learning from a finite trajectory of an ergodic stochastic dynamical system, highlighting similarities and differences to standard supervised learning from i.i.d. data.
In summary, the main contributions of the paper are:

\begin{itemize}
\item \textbf{One-step prediction of real-valued states from a finite trajectory.}
We formulate forecasting for ergodic stochastic dynamical systems as a nonlinear least squares learning problem from a single finite trajectory. In this setting, we derive universal consistency and finite-sample bounds for possibly nonparametric model classes.
Ergodicity motivates defining the population risk with respect to the invariant measure, while uniform geometric ergodicity provides quantitative control of deviations between empirical averages along the finite trajectory and the corresponding invariant-measure expectations. Indeed, a key result is a concentration inequality for Hilbert-space-valued observables of uniformly geometrically ergodic Markov processes.

\item \textbf{Extensions of the setting.}
We show that the scalar-state analysis extends with minimal modifications to vector-valued, and possibly infinite-dimensional, state spaces.
We also consider higher-order Markov processes, by reducing them to first-order Markov processes on tuples of consecutive states.
Finally, we study one-step prediction on finite-state spaces through a least squares surrogate approach, connecting this setting to {multiclass classification and structured prediction}.

\item \textbf{Learning Koopman operators.}
Beyond {one-step prediction}, we show how the same least squares viewpoint can be used to study the problem of learning Koopman operators from a finite trajectory.
The resulting estimator is equivalent to kernel Koopman regression, for which we provide  a simplified derivation using feature maps rather than reproducing kernel Hilbert spaces and kernels. Under the same uniform geometric  ergodicity assumption, we derive learning guarantees in operator norm.

\item \textbf{Numerical illustration.}
We complement the theoretical results with simple numerical experiments illustrating the behavior of the proposed estimators and the role of temporal dependence, mixing, and the geometry of the invariant distribution.

\end{itemize}

Finally, the paper is organized as follows.
Section~2 recalls the basic notions on stochastic dynamical systems, Markov processes, invariant measures, and ergodicity.
Section~3 introduces the main {one-step prediction} problem for real-valued states, defines the least squares estimator, and states the corresponding learning guarantees.
Sections~4--6 develop the main extensions: vector-valued states, higher-order Markov processes, and finite-state spaces.
Section~7 studies learning the Koopman operator from a finite trajectory.
Section~8 reports numerical experiments, and Section~9 concludes the paper.
The appendices collect background material, operator-theoretic facts, concentration inequalities for Markov processes, and the proofs of the main results.

\section{Stochastic dynamical systems and Markov processes}

In this section, we introduce the notion of stochastic dynamical systems and their connection with Markov processes. 
Several useful notions and results are recalled, in particular invariant measures and ergodicity, playing a central role for learning.

Throughout, we let $\cX$ be a  measurable space  with  $\sigma$-algebra $\cB(\cX)$ and denote by $\cP(\cX)$ the set of all probability measures on $\cX$.

\subsection{Stochastic dynamical systems}

Let \( \cX \) be a measurable space, \( \cN \) a probability space with measure $\nu$ and consider $f:\cX\times \cN\to \cX$. For a sequence  $\eta_0, \eta_1 \dots,$ of i.i.d. samples of $\nu$, and $x_0\in \cX$, let, 
\begin{equation}
x_{t+1} = f(x_t, \eta_t), \quad t=0, 1, \dots
\label{eq:ds-def}
\end{equation}
The above recursion describes a sample of a stochastic evolution given an initial condition. {We will refer to the corresponding system as an \emph{autonomous stochastic dynamical system}.} Here, \( \cX \) is the space of visited states, the function $f$ drives the evolution, while the nuisance space \( (\cN, \nu) \) encodes the randomness which makes the evolution stochastic. 
Such randomness can have different nature. 

\begin{example}[Noisy deterministic evolution]
\label{ex:noisy-evolution}
Let $f_\star:\cX \to \cX$, $\cN=\cX=\mathbb R^d$, and for all $x, \eta \in \cX$,
\[
f(x, \eta)= f_\star(x)+\eta.
\]
In this case, the stochastic evolution can be interpreted as a deterministic evolution perturbed by additive stochastic noise. 
\end{example}
The above example provides perhaps the most classic perspective in the dynamical systems literature in engineering \citep{astrom1970stochastic,ljung1999system}. However, the recursion~\eqref{eq:ds-def} can also describe systems in which stochasticity is intrinsic rather than representing epistemic noise.

\begin{example}[Language model]
Let $\cX$ be a finite set of symbols, such as words or tokens.
The evolution map specifies how the next token is generated from the current one, with the randomness encoded by $\eta_t$ accounting for variability in the generation process. More realistic models allow the previous $p$ tokens, forming the context, to influence the generation of the next token; see Section~\ref{sec:high_ord}.
\end{example}
The above example corresponds to a first-order language model, commonly described in terms of a Markov process \citep{shannon1948mathematical,manning1999foundations}.

More generally, stochastic dynamical systems of the form~\eqref{eq:ds-def} induce Markov processes and, under suitable measurability assumptions, Markov processes admit an analogous representation, as discussed next.

\subsection{Markov processes \& dynamical systems}
\label{sec:MP-dynamical}

Let \( \cX \) be a measurable space and \( (\Omega, \cA, \bbP) \) a probability space.
A stochastic process is a family of random variables $(X_t)_{t\in \bbN_0}$ with \( X_t : \Omega \to \cX \), $t=0, 1, \dots$. It is called a \emph{Markov process} if, for all \( t \in \bbN \) and all \( B \in \cB(\cX) \),
\[
\bbP\big( X_{t+1} \in B \mid X_0, \dots, X_t \big)
=
\bbP\big( X_{t+1} \in B \mid X_t \big).
\]
The process is called homogeneous if the above conditional distribution does not depend on \( t \).

As shown in Proposition~\ref{prop:SDS_MP} in Appendix~\ref{app:SP}, a stochastic dynamical system of the form~\eqref{eq:ds-def} defines a Markov process under mild conditions, in particular when the initial condition is random and independent of the sequence of nuisances $(\eta_t)_t$.
 Conversely, {on a  Borel state space,} every Markov process can be realized by a stochastic dynamical system~\eqref{eq:ds-def}. 

An advantage of considering Markov processes is that they carry additional structure that helps characterizing the dynamics, see Appendix~\ref{app:SP} for references. In particular, Markov processes (hence stochastic dynamical systems) can be equivalently defined in terms of transition kernels. 

\subsection{Transition kernels and Kolmogorov-Chapman equations}
\label{sec:kernel-and-KC}

A transition kernel is a measurable map
\( P : \cX \times \cB(\cX) \to [0,1] \)
such that
\begin{itemize}
\item for every \( x \in \cX \), the map \( B \mapsto P(x,B) \) defines a probability measure on \( \cB(\cX) \);
\item for every \( B \in \cB(\cX) \), the map \( x \mapsto P(x,B) \) is measurable.
\end{itemize}
It is well known that, under Borel-type regularity assumptions on the space $\cX$ (Appendix~\ref{sec:backMP}), equivalences can be established between 
time-homogeneous Markov processes and  transition kernels, see {e.g.~\citet[Section~1]{douc2018markov}}.

Indeed, any time-homogeneous Markov process
$(X_t)_{t\in \bbN_0}$ on a measurable space $\cX$ which is Borel regular (see Appendix~\ref{sec:backMP}) admits an associated transition
kernel $P$, defined for all $x\in\cX$ and all $B\in\cB(\cX)$ by
\[
P(x,B) = \bbP\big( X_{t+1} \in B \mid X_t = x \big),
\]
where the right-hand side does not depend on $t$. Conversely, given a transition kernel $P$ and an initial distribution
$\mu_0 \in \cP(\cX)$, there exists a Markov process $(X_t)_{t\in \bbN_0}$ such that
\[
\bbP(X_0 \in B) = \mu_0(B),
\]
and
\[
\bbP(X_{t+1} \in B \mid X_t = x) = P(x,B),
\]
for all $t \in \bbN$, all $x \in \cX$, and all $B \in \cB(\cX)$.

Given a transition kernel $P$ on $\cX$, one can define recursively a family of
kernels $(P^t)_{t\in\bbN}$ by setting $P^1 = P$ and
\[
P^{t+1}(x,B)
=
\int_{\cX} P^t(x,dy)\,P(y,B),
\qquad
x\in\cX,\ B\in\cB(\cX).
\]
It can be shown that each $P^t$ is again a transition kernel.
Moreover, the family $(P^t)_{t\in\bbN}$ satisfies the Chapman--Kolmogorov equations,
that is, for all $s,t\in\bbN$,
\[
P^{t+s}(x,B)
=
\int_{\cX} P^t(x,dy)\,P^s(y,B),
\qquad
x\in\cX,\ B\in\cB(\cX).
\]

The above equations provide an explicit expression for the marginal laws of each variable $X_t$ in the Markov process. More precisely, for each $t\in\bbN$, each {marginal distribution} $\mu_t\in\cP(\cX)$ of the process at time $t$ is
defined as 
\[
\mu_t(B) = \bbP(X_t\in B),
\qquad
B\in\cB(\cX).
\]
Using the transition kernel, it can be shown that 
\[
\mu_t(B)
=
\int_{\cX} \mu_0(dx)\,P^t(x,B),
\qquad
t\in\bbN,\ B\in\cB(\cX).
\]
In the following, we use the shorthand notation
\be\label{FP_notations}
\mu_t = \mu_0 P^t
\ee
to denote the above relation.

\begin{remark}[Finite-state spaces and transition matrix]\label{trans_mat}
If \( \cX \) is finite with cardinality $N$, it is common to represent the transition kernel by a matrix \( P \in \bbR^{N \times N} \), called the transition matrix. More precisely, for \( \cX = \left\{ \mathbf x_1, \ldots, \mathbf x_N \right\} \), the entries of \( P \) are given by
\[
P_{ij} = \bbP(X_{t+1} = \mathbf x_j \mid X_t = \mathbf x_i),
\]
and the following properties hold
\(P_{ij} \ge 0\),
and 
\(\sum_{j=1}^N P_{ij} = 1\),
see e.g., \citet[Chapter~2.1]{bremaud2013markov}.
Later, when working on finite-state spaces, we will use the notation
\[
P(x,x')=\bbP(X_{t+1}=x' \mid X_t=x),
\qquad x,x'\in\cX,
\]
which is closer to the notation used for general state spaces.
\end{remark}

\subsection{Invariant measure and ergodicity}
\label{sec:ergodic-measure}

To learn dynamical systems with guarantees akin to classical supervised learning,
some regularity properties are needed. In particular, we restrict our attention
to Markov processes satisfying the following assumption.

\begin{assumption}[Uniform geometric ergodicity]\label{UGE}
We assume there exists $\pi\in \cP(\cX)$ and constants \( C < \infty \), \( \rho < 1 \) such that
\[
\sup_{x \in \cX} \| P^t(x, \cdot) - \pi \|_{\mathrm{TV}} \leq C \rho^{t},
\quad \forall \, t \ge 0.
\]
\end{assumption}

The above condition is standard in the Markov process literature, where other instances, e.g. considering norms other than total variation, are also considered; see e.g. \citet{roberts1997geometric,meyn2012markov, levin2017markov,gallegos2024equivalences}.
In the following, by ergodic we usually mean satisfying the above assumption.
We add several comments. First, note that $\pi$ is a unique invariant measure for the process, that is
\[
\pi = \pi P,
\]
where we used the notation~\eqref{FP_notations}. Second, we note that several conditions are necessary for Assumption~\ref{UGE} to hold; in particular, the process must be aperiodic and irreducible. Third, recalling~\eqref{FP_notations}, it is easy to see that Assumption~\ref{UGE} implies 
\[
\|\mu_t - \pi\|_{\mathrm{TV}} \to 0, \qquad \text{as } t\to\infty,
\]
for any initial distribution $\mu_0 \in \cP(\cX)$. 
Fourth, another less straightforward consequence of Assumption~\ref{UGE} is the ergodic theorem
(see, e.g.,~\citealp[Chapter 5]{douc2018markov} and \citealp[Chapter 2.1]{eberle2009markov}), 
stating that for every \(\pi\)-integrable function \(\varphi \), along \(\bbP\)-almost every trajectory,
\be\label{birkhoff}
\frac{1}{T}\sum_{t=1}^T \varphi(X_t)
\rightarrow \int \varphi\,d\pi, \quad \text{as } T\to\infty.
\ee 
Intuitively, this result shows that time averages along a sufficiently long trajectory approximate averages over the invariant distribution. This is the key property needed to derive learning guarantees.
Finally, we note that Assumption~\ref{UGE} is related to the notion of mixing time, see, e.g., \citet{montenegro2006mathematical,levin2017markov}, which, in total variation norm, is defined, for \( \varepsilon>0 \), by
\[
\tau(\varepsilon) = \min\left\{t\in \bbN_0: \sup_{x\in\cX} \nor{P^t(x,\cdot)-\pi}_{\mathrm{TV}} \le \varepsilon \right\}.
\]
The mixing time can be used to define the so-called burn-in time, i.e., the number of initial samples discarded before the chain is sufficiently close to stationarity. Assumption~\ref{UGE} implies
\[
\tau(\varepsilon) \le \left\lceil \frac{\log C+\log(1/\varepsilon)}{- \log \rho} \right\rceil.
\]
Finally, we provide a simple example of a process satisfying Assumption~\ref{UGE}.

\begin{example}[Bounded drift with gaussian noise]\label{the_example}
Let \( \cX=\bbR^d \) and define
\[
X_{t+1} = b(X_t) + \sigma \xi_t,
\qquad \text{with } \xi_t \sim \cN(0,1), \, \sigma > 0,
\]
where \( \cN(0, I_d) \) is a standard Gaussian and \( b: \cX \to \cX \) is bounded. Denote \( a := \norm[\infty]{b}/\sigma \). Then, the chain is uniformly ergodic and Assumption~\eqref{UGE} holds with \( C = 1 \). Moreover, the following approximation could be given,
\begin{equation}
\frac{1}{1 - \rho} \approx \begin{cases}
\sqrt{2 \pi} a \exp\!\left(a^2/2\right), & d = 1 \\[0.3em]
\exp\!\left( a\sqrt{d} + a^2/2 \right), & d \gg 1
\end{cases}
\label{eq:bounded-drift-tau-rel}
\end{equation}
For the details, see Appendix~\ref{sec:MPExamples}.
\end{example}

\section{Learning dynamical systems with scalar-valued states}
\label{sec:learning}

We begin by considering the most fundamental learning problem, namely prediction. For dynamical systems, this corresponds to one-step prediction, that is predicting the state at the next time step given the current state. This  problem is  commonly referred to as forecasting in the time-series literature. In later sections, we will consider more complex settings and tasks. In this section, to highlight the main elements of novelty compared to classical supervised learning, we start by considering scalar-valued states. This choice allows for a treatment close to standard regression, postponing complications arising from vector-valued states. In this context, we consider a basic nonlinear least squares estimator, for which we derive learning guarantees. A central point is the role of the ergodicity assumption, which is not only instrumental from a technical point of view but also plays a conceptual role in defining a notion of risk in this dynamical setting.

\subsection{Setting}
\label{sec:stats-learning}
Let \( \cX=\bbR \), and let \( (X_t)_{t\in\bbN_0} \) be a homogeneous Markov process with initial distribution \( \mu_0 \) and transition kernel \( P \). Assume that there exists a constant \( R<\infty \) such that, for all \( x\in\cX \),
\be\label{2mom}
\int |x'|^2 \, P(x, dx') \le R^2.
\ee
The problem we consider is that of estimating the optimal prediction function
\( f_\star : \cX \to \cX \),
defined by
\be\label{fstar}
f_\star(x) = \int x' \, P(x, dx'), \qquad x \in \cX,
\ee
given a truncated sample trajectory \( \{ x_0, x_1, \dots, x_{T} \} \).
Assuming the process is ergodic with invariant measure \( \pi \), in particular under Assumption~\eqref{UGE},
the quality of an empirical estimator \( \wh f \) is measured by
\begin{equation}
\label{eq:excess-risk-l2}
\cE(\wh f, f_\star) = \int (\wh f(x) - f_\star(x))^2 \, \pi(dx).
\end{equation}
In the following we will derive high-probability bounds for the above error measure. 

Before presenting the least squares estimator we consider, we add a few comments on the above setting.

\paragraph{Optimal prediction function.}
The optimal prediction function~\eqref{fstar} can be written as
\[
f_\star(x)= \bbE[X_{t+1}~|~X_t=x]
\]
and  is akin to the regression function in supervised learning. In particular, for every \( x \in \cX \), \( f_\star(x) \) minimizes the \emph{instantaneous} least squares risk
\[
\bbE[(X_{t+1}-f(X_t))^2~\mid~X_t=x]
= \int (x' - f(x))^2 \, P(x, dx').
\]
This risk and \( f_\star \) are well defined in view of Assumption~\ref{2mom}. This assumption is taken for simplicity to avoid introducing additional conditions, but the weaker assumption
\be\label{fin2mom}
\int \abs{x'}^2 P(x, dx')<\infty
\ee
suffices here.

\paragraph{Data models.}
The data consist of a realization of a truncated trajectory of the process, that is
\be\label{singletraj}
\{ x_0 = X_0(\omega), x_1 = X_1(\omega), \dots, x_{T} = X_{T}(\omega) \},
\qquad \text{for a fixed } \omega \in \Omega .
\ee
This is a natural observation model, that can be compared to other models considered in the literature. 

\begin{remark}[Stationary data]
A common simplification is to assume that the process is initialized at the invariant measure, that is \(X_0 \sim \pi\). In this case, \(X_t \sim \pi\) for every \(t \geq 0\), and \( (X_t)_t\) is a stationary process. 
Unlike in the i.i.d. case, the data are not independent, {since} the observations along a given trajectory still retain temporal dependence. The main role of the stationarity assumption is technical, since it simplifies the probabilistic analysis. However, exact initialization from \(\pi\) is often unrealistic. Indeed, the invariant distribution is typically unknown or not directly available for sampling. In this work, we do not assume stationarity, and our main result applies to trajectories started from an arbitrary initial distribution.
\end{remark}

\paragraph{Expected risk.}
Data from a single trajectory are neither identically distributed nor independent, which raises the question of how to define the risk. For a process with unique invariant measure \( \pi \), we let
\begin{equation}
\label{eq:risk-def}
\cR(f) = \int \left( x' - f(x) \right)^2 \, P(x, dx')\, \pi(dx).
\end{equation}
Assumption~\ref{2mom} ensures the risk  is well defined on \(
L^2(\pi) = \left\{ f : \cX \to \cX \; : \; \|f\|_\pi^2 := \int \abs{f(x)}^2 \pi(dx) < \infty \right\}
\)
and $f_\star\in L^2(\pi)$. Again, Assumption~\ref{2mom} is taken for simplicity, but the weaker assumption
\be\label{pi2mom}
\int|x|^2 \pi(dx)<\infty
\ee
would be sufficient. Moreover, it is easy to verify that~\eqref{eq:excess-risk-l2} is the excess risk, that is 
\[
\cE( f, f_\star) = \cR(f) - \cR(f_\star), \qquad f \in L^2(\pi) .
\]
The above relations find a direct analogy in supervised learning. 
Arguably, this setting provides the simplest and most natural extension of the classical notion of risk to dynamical systems which are not stationary.

\paragraph{Risks and ergodicity.}
Ergodicity offers a way to interpret the expected risk definition \eqref{eq:risk-def}.
Given data, the empirical risk computed along a fixed truncated trajectory is
\begin{equation}
\label{eq:emp-risk-def}
\wh \cR(f)=\frac 1 T \sum_{t=0}^{T-1} (x_{t+1}-f(x_t))^2.
\end{equation}
For i.i.d. data, both the definition of $\cR$ and its empirical approximation by $\wh \cR$ are essentially justified by the law of large numbers. In the setting considered in this paper, the ergodic theorem~\eqref{birkhoff} provides a corresponding justification for data generated by a Markov process. 

\subsection{Least squares estimator}

We consider possibly nonlinear estimators that are linearly parametrized, namely estimators of the form
\be\label{lin_est}
f(x) = \inner{w}{\Phi(x)}, \qquad x \in \cX,
\ee
where \( (\cH, \inner{\cdot}{\cdot}) \) is a separable Hilbert space, \( \Phi : \cX \to \cH \) a measurable map, and \( w \in \cH \).
The estimator coefficients are computed via regularized least squares by minimizing
\[
\frac{1}{T} \sum_{t=0}^{T-1} \big( x_{t+1} - \inner{w}{\Phi(x_t)} \big)^2
+ \la \|w\|^2, \qquad \la > 0 .
\]
The unique solution of the above problem is denoted by \( \wh w_\la \), and the corresponding estimator by \( \wh f_\la \). We add a few comments before describing the associated learning properties.

\paragraph{Linearly parametrized models.}
The class of estimators we consider includes, for example, linear models or models defined by a dictionary of atoms \citep{mallat2008wavelet}, e.g. a trigonometric (Fourier) basis where \( \cH = \ell^2 \times \ell^2 \) and
\[
\Phi(x)
= \big( \alpha_j \cos(2\pi j x),\; \alpha_j \sin(2\pi j x) \big)_{j \ge 1},
\]
for some fixed sequence \( (\alpha_j)_{j\in \bbN_0} \in \ell^2 \).
More generally, we may consider \( \cH \) to be the feature space of a reproducing kernel Hilbert space (RKHS) with kernel \( k : \cX \times \cX \to \bbR \) \citep{aronszajn1950theory}. In this case, one can take \( \Phi(x) = k(x,\cdot) \), the so-called canonical feature map.
A classical example is the RKHS induced by the Gaussian kernel
\[
k(x,y) = \exp\!\left( - \frac{|x-y|^2}{2\sigma^2} \right).
\]
The above Fourier model is also equivalent to an RKHS with kernel
\[
k(x,y) = \sum_{j\ge1} \alpha_j^2 \cos\!\big( 2\pi j (x-y) \big).
\]
We refer to \citet{scholkopf2002learning} for further discussion and examples, as well as for details about the computational aspects considered next.

\paragraph{Batch and online  computations.}
By a standard computation the solution can be written as
\be\label{ridgesol}
\wh w_\la = (\wh \Sigma + \la I)^{-1} \wh h ,
\ee
where
\be\label{sigmah_hat}
\wh \Sigma = \frac{1}{T} \sum_{t=0}^{T-1} \Phi(x_t) \otimes \Phi(x_t),
\qquad
\wh h = \frac{1}{T} \sum_{t=0}^{T-1} \Phi(x_t) \, x_{t+1}.
\ee
The above expression can be computed if \( \cH \) is finite dimensional. Moreover, for any kernel \( k:\cX\times \cX\to \bbR \) such that $k(x, y)= \inner{\Phi(x)}{\Phi(y)}$ for every $x, y \in \cX$, the solution can  also be written for all $ x \in \cX$ as
\begin{equation}
\label{eq:krr-estimator-r}
\wh f_\la(x) = \sum_{t=0}^{T-1} c_t \, k(x_t, x),
\quad x \in \cX,
\end{equation}
where \( c = (c_0,\dots,c_{T-1}) \in \bbR^T \) is given by
\(
c = (\wh K + T \la I)^{-1} \wh y
\)
with \( \wh y = (x_1, x_2, \dots, x_{T}) \in \bbR^T \),
and \( \wh K \in \bbR^{T \times T} \) is defined by
\( (\wh K)_{ts} = k(x_t, x_s) \), for \( t,s = 0,\dots,T-1 \).

We recall that in both cases the above \emph{batch} computations can be made \emph{online} when data are presented sequentially. For finite-dimensional models the Sherman–Morrison–Woodbury identity can be used to derive the recursive least squares updates \citep{kushner2003stochastic}
\[
w_{t+1}
:= w_t + \Gamma_{t+1} \, \Phi(x_t)\big( x_{t+1} - \inner{w_t}{\Phi(x_t)} \big),
\qquad w_0 := 0,
\]
where
\[
\Gamma_{t+1}
:= \Gamma_t - \frac{\Gamma_t \Phi(x_t)\otimes \Phi(x_t)\Gamma_t}{1 + \inner{\Phi(x_t)}{\Gamma_t \Phi(x_t)}} ,
\qquad \Gamma_0 := \frac{1}{T \la} I.
\]
It can be checked that, at step \(t=T-1\), the above recursion is equivalent to computing the regularized least squares estimator~\eqref{eq:krr-estimator-r} with \(T\) samples from the trajectory.

More generally, in the kernel setting analogous online updates can be derived \citep{engel2004kernel}. In this case, the estimator is represented as
\[
f_t(x) = \sum_{s=0}^{t-1} c_s^{(t)} k(x_s,x),
\quad x \in \cX,
\]
where the coefficient vector \( c^{(t)} \in \bbR^t \) satisfies \( c^{(t)} = (\wh K_t + T \la I)^{-1} \wh y_t \). The algorithm maintains the inverse regularized kernel matrix
\[
A_t = \left(\wh K_t + T \la I\right)^{-1}
\]
computed on the first $t$ observations $\left(x_0, x_1, \dots, x_{t-1}\right)$. Given a new observation \( x_{t} \), define
\[
k_t = \big( k(x_s,x_{t}) \big)_{s=0}^{t-1},
\quad \text{and} \quad
\beta_t = \left( T\la + k(x_{t},x_{t}) - k_t^\top A_t k_t \right)^{-1}.
\]
Then the update is performed as
\[
A_{t+1} = \begin{pmatrix}
A_t + \beta_t A_t k_t k_t^\top A_t & -\beta_t A_t k_t \\
-\beta_t k_t^\top A_t & \beta_t
\end{pmatrix},
\quad \text{and} \quad
c^{(t+1)} = \begin{pmatrix}
c^{(t)} \\
0
\end{pmatrix}
+ \beta_t
\begin{pmatrix}
- A_t k_t \\
1
\end{pmatrix}
\big(x_{t+1} - f_t(x_t)\big).
\]
Similarly to the finite dimensional case, \( f_{T} \) coincides with the batch estimator~\eqref{eq:krr-estimator-r}.

We end noting that the above computations require knowing a priori the number of points that will be presented sequentially. Such algorithms are called incremental fixed horizon algorithms in the online learning literature \citep{lugosicesa}.

\subsection{Learning guarantees}
We begin by collecting basic conditions to derive learning guarantees.

\begin{assumption}[Moment condition and boundedness] \label{bound} 
Assume that there exist constants $M, R > 0$ and $\sigma > 0$ such that for all $x \in \cX$, condition~\eqref{2mom} holds and, moreover,
\be\label{mom_bound}
\int \abs{x' - f_\star(x)}^m P(x,dx')
\le
\frac{m!}{2}\,\sigma^2 M^{m-2},
\qquad \forall m \ge 2 .
\ee
Finally, assume that there exists $\kappa > 0$ such that for all $x \in \cX$,
\be\label{bound_phi}
\nor{\Phi(x)} \le \kappa .
\ee
\end{assumption}
The above assumptions have direct analogues in supervised learning. In particular,
 the boundedness Assumption~\ref{bound_phi} on $\Phi$ is standard and satisfied by many models, for example feature maps defined by translation invariant kernels \citep{scholkopf2002learning}. 
The moment condition~\eqref{mom_bound} is also standard in statistical learning theory; see, for example, \cite{caponnetto2007optimal,fischer2020sobolev}. Here, it is naturally adapted to the dynamical systems setting.
As seen before, the second moment bound~\eqref{2mom} implies Conditions~\eqref{fin2mom} and~\eqref{pi2mom} needed for the least squares risks to be well defined. Further, it implies that 
$$
\nor{f_\star}_\pi\le \nor{f_\star}_\infty\le R,
$$
which is needed, together with Condition~\eqref{mom_bound}, to derive probabilistic estimates. The above assumptions suffice to derive general convergence results provided the following condition holds.
\begin{assumption}[Universal models] \label{universal_phi} 
Assume \( \Phi:\cX \to \cH \) is such that 
\[
\inf_{w\in \cH}\int \big(\scal{w}{\Phi(x)}-f_\star(x)\big)^2 \, \pi(dx) = 0.
\]
\end{assumption}
The above assumption can be interpreted in two ways. It may be viewed as a condition on \( f_\star \) for a given feature map \( \Phi \), or one may choose \( \Phi \) so that the condition holds for every \( f_\star \). We adopt the latter perspective and this is the  reason why we call this the universal model assumption. This is the case, for instance, for feature maps associated with \emph{universal} kernels \citep{christmann2008support}.
Examples include feature maps defining translation-invariant kernels whose Fourier transform has full support \citep{carmeli2010vector,sriperumbudur2011universality}. 

The first result we present is asymptotic and establishes universal consistency for one-step prediction under Assumption~\ref{universal_phi}.

\begin{theorem}[Universal consistency]\label{thm:universal}
Let Assumptions~\ref{bound},~\ref{UGE} and~\ref{universal_phi} hold. Let $(\lambda_T)_{T\geq 1}$ satisfy 
\[
\lim_{T\to \infty}\la_T=0
\quad \text{and} \quad
\lim_{T\to \infty}\frac{\log T}{\la_T^2 T}=0,
\]
then
\[
\bbP \left\{
\lim_{T \to \infty} \cE\big(\wh f_{\la_T}, f_\star\big) = 0
\right\} = 1.
\]
\end{theorem}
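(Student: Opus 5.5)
The plan is to control the excess risk through the standard bias--variance decomposition in $\cH$, estimating the variance with $\lambda$ powers chosen carefully enough to reach the rate $\log T/(\lambda_T^2 T)$, and then to pass from high probability to almost sure convergence by Borel--Cantelli.

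\textbf{Population quantities.} Let $\Sigma=\int \Phi(x)\otimes\Phi(x)\,\pi(dx)$ and $h=\int \Phi(x) f_\star(x)\,\pi(dx)$. Let $S:\cH\to L^2(\pi)$ be the operator $Sw=\scal{w}{\Phi(\cdot)}$, so that $\Sigma=S^*S$ and $h=S^*f_\star$. Define $w_\la=(\Sigma+\la I)^{-1}h$. Then
\[
\cE(\wh f_\la,f_\star)^{1/2}\le \nor{\Sigma^{1/2}(\wh w_\la-w_\la)}+\nor{Sw_\la-f_\star}_\pi .
\]

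\textbf{Approximation term.} Since $Sw_\la-f_\star=-\la(SS^*+\la)^{-1}f_\star$, spectral calculus and dominated convergence show that this term tends to the norm of the projection of $f_\star$ onto $\ker S^*$ as $\la\to0$. By Assumption~\ref{universal_phi}, $f_\star\in\overline{\mathrm{ran}\,S}$, so the limit is $0$. We also record the bound $\nor{w_\la}\le \nor{f_\star}_\pi/(2\sqrt\la)\le R/(2\sqrt\la)$.

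\textbf{Sample term.} Since $h=(\Sigma+\la)w_\la$, we have
\[
\wh w_\la-w_\la=(\wh\Sigma+\la)^{-1}\big[(\wh h-h)-(\wh\Sigma-\Sigma)w_\la\big].
\]
The key point is not to bound $\nor{\Sigma^{1/2}(\wh\Sigma+\la)^{-1}}$ crudely by $\kappa/\la$. Instead, factor it as
\[
\nor{\Sigma^{1/2}(\Sigma+\la)^{-1/2}}\,\nor{(\Sigma+\la)^{1/2}(\wh\Sigma+\la)^{-1/2}}\,\nor{(\wh\Sigma+\la)^{-1/2}}.
\]
The first factor is at most $1$ and the last is at most $\la^{-1/2}$. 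For the middle factor, its square equals $\nor{(\wh\Sigma+\la)^{-1/2}(\Sigma+\la)(\wh\Sigma+\la)^{-1/2}}\le 1+\nor{\Sigma-\wh\Sigma}/\la$, so it is at most $\sqrt2$ on the event $\nor{\wh\Sigma-\Sigma}\le\la$. On that event,
\[
\nor{\Sigma^{1/2}(\wh w_\la-w_\la)}\le \sqrt{2/\la}\,\Big(\nor{\wh h-h}+\tfrac{R}{2\sqrt\la}\nor{\wh\Sigma-\Sigma}_{HS}\Big).
\]
If both deviations are of order $\varepsilon_T=\sqrt{\log(1/\delta)/T}$ up to constants, the squared sample error is $O(\varepsilon_T^2/\la^2)$. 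The condition $\nor{\wh\Sigma-\Sigma}\le\la$ requires only $\varepsilon_T\lesssim\la$, which is implied by the hypothesis.

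\textbf{Concentration.} For $\wh\Sigma-\Sigma$, the observable $x\mapsto\Phi(x)\otimes\Phi(x)$ takes values in Hilbert--Schmidt operators and has norm at most $\kappa^2$. I will apply the paper's concentration inequality for bounded Hilbert-valued additive functionals of UGE chains, namely the Poisson-equation and martingale decomposition with Pinelis' inequality. The constants depend on $C,\rho$ and also absorb the non-stationary start, since $\sum_t\nor{\mu_t-\pi}_{TV}\le C/(1-\rho)$. For $\wh h$, split
\[
\wh h-h=\frac1T\sum_t\Phi(x_t)\big(x_{t+1}-f_\star(x_t)\big)+\Big(\frac1T\sum_t\Phi(x_t)f_\star(x_t)-h\Big).
\]
The second piece is a bounded observable, bounded by $\kappa R$, handled as above. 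The first piece is a sum of Hilbert-valued martingale differences with respect to $\sigma(X_0,\dots,X_{t+1})$, whose conditional moments satisfy the Bernstein condition \eqref{mom_bound} with scales $\kappa\sigma$ and $\kappa M$. Pinelis' Bernstein inequality then gives a deviation of order $\kappa\sigma\sqrt{\log(2/\delta)/T}+\kappa M\log(2/\delta)/T$.

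\textbf{Conclusion.} Take $\delta_T=T^{-2}$. Then, with probability at least $1-cT^{-2}$,
\[
\cE(\wh f_{\la_T},f_\star)\lesssim \frac{\log T}{\la_T^2T}+\nor{Sw_{\la_T}-f_\star}_\pi^2 .
\]
Both terms tend to zero by the hypotheses, and $\sum_T T^{-2}<\infty$, so Borel--Cantelli gives almost sure convergence.

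\textbf{Main obstacle.} I expect the hardest step to be the Hilbert-valued Markov concentration with explicit dependence on $\delta$ and with the transient phase included. The martingale term is the only place where the unbounded responses $x_{t+1}$ enter, and it must be kept separate from the bounded ergodic-average terms so that the Bernstein moment condition, rather than boundedness, is what is used there.
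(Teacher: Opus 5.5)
Your proposal is correct and follows the paper's proof essentially step for step. It has the same bias--variance split and the same approximation limit via spectral calculus with $f_\star\in\overline{\operatorname{ran} S}=(\ker S^*)^\perp$. It splits $\wh h-h$ in the same way, into a Pinelis--Bernstein martingale part and a Poisson-equation Hoeffding part, and it concludes with the same choice $\delta_T=T^{-2}$ and Borel--Cantelli. The only difference is cosmetic. You bound $\nor{\Sigma^{1/2}(\wh\Sigma+\la I)^{-1}}$ by a three-factor splitting through $(\Sigma+\la I)^{1/2}(\wh\Sigma+\la I)^{-1/2}$, obtaining $\sqrt{2/\la}$ on $\{\nor{\wh\Sigma-\Sigma}\le\la\}$. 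The paper instead uses the Neumann-series Lemma~\ref{lem:op_neu_est}, obtaining $1/\sqrt{\la}$ on $\{\nor{\wh\Sigma-\Sigma}\le\la/2\}$; this changes only constants.
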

The proof of the theorem follows from the results in the following subsection, which will be proved in Appendix~\ref{sec:learning_guar}. 

The almost-sure convergence in Theorem~\ref{thm:universal} is usually referred to as \emph{strong consistency}. Since Assumption~\ref{universal_phi} ensures that the model can approximate any regression function, the result can more precisely be viewed as a strong universal consistency statement within the class of processes considered here; see \citet{gyorfi2002distribution} for the corresponding terminology in supervised learning. Analogous consistency results for regularized least squares with i.i.d. observations are classical and can be derived for example by viewing learning as a regularized inverse problem \citep{devito2005learning}. Strong consistency results are also available under stationary ergodic sampling. In particular, \citet{yakowitz1999strongly} consider nonparametric regression and forecasting without mixing assumptions, under regularity conditions on the regression function, while \citet{biau2010nonparametric} establish universal consistency for sequential aggregation schemes. Consistency of kernel methods under more general dependent sampling conditions is studied in \citet{steinwart2009learning}. More directly related results for regression from uniformly ergodic Markov samples are given in \citet{zou2009learning,zou2012generalization}. Compared with these works, Theorem~\ref{thm:universal} concerns prediction of the dynamics generating the trajectory itself, allows a non-stationary initial distribution, and measures the prediction error with respect to the invariant measure~$\pi$, see Section~\ref{sec:stats-learning} and discussions therein.

Next, we present non-asymptotic finite sample bounds. To derive them, we make the following assumption.
\begin{assumption}[well specified model] \label{wellspec}
Assume that there exists $w_\star\in\cH$ such that
\[
f_\star(x)=\scal{w_\star}{\Phi(x)},
\qquad x \in \cX.
\]
\end{assumption}
Assumption~\ref{wellspec} implies that the infimum in Assumption~\ref{universal_phi} is attained. It corresponds to the well specified setting in regression, here considered for the optimal one-step prediction function. This assumption can be strengthened or weakened; see, for example, \cite{caponnetto2007optimal,fischer2020sobolev}. We adopt it because it provides a simple setting to derive and illustrate non-asymptotic learning bounds and leave the study of improved rates to  future work.
\begin{theorem}[Rates for well specified systems]
\label{thm:wellspecsys}
 Let Assumptions~\ref{bound},~\ref{wellspec}, and~\ref{UGE} hold. Let \( \delta \in (0,1) \), if 
 \[
\la \ge \Lambda_{T,\delta} = \frac{8 C \kappa^2}{1-\rho} \left( \sqrt{\frac{2 \log(6/\delta)}{T}}+\frac{1}{T}\right), 
\]
  then with probability at least \( 1 - \delta \), 
\begin{equation}
\label{eq:sample-rate-bound-simplified}
\cE\big(\wh f_{\la}, f_\star\big)
\lesssim \frac{ C^2 D }{(1-\rho)^2} \left(\frac{2\log (6/\delta)}{\la T}+\frac{1}{\la T^2}\right) + \lambda\nor{w_\star}^2,
\end{equation}
where the constant \( D \) can be derived from  Proposition~\ref{prop:est-error}. In particular, setting \( \la = \la_T \asymp T^{-1/2} \), with probability at least \( 1 - \delta \),
\[
\cE\big(\wh f_{\la_T}, f_\star\big)
\lesssim \frac{ C^2 D }{(1-\rho)^2} \left(\frac{2\log (6/\delta)}{\sqrt{T}}+\frac{1}{T^{3/2}}\right)
+ \frac{\nor{w_\star}^2}{\sqrt{T}}.
\]
\end{theorem}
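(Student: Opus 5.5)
The argument follows the classical operator-theoretic analysis of regularized least squares, with the probabilistic steps replaced by concentration along the trajectory. Define the population quantities
\[
\Sigma=\int \Phi(x)\otimes\Phi(x)\,\pi(dx), \qquad h=\int\!\!\int \Phi(x)\,x'\,P(x,dx')\,\pi(dx)=\Sigma w_\star ,
\]
where the last identity uses Assumption~\ref{wellspec} and the definition~\eqref{fstar}. Set $w_\la=(\Sigma+\la I)^{-1}h$. Since $\cE(f,f_\star)=\nor{\Sigma^{1/2}(w-w_\star)}^2$, we split
\[
\cE(\wh f_\la,f_\star)\le 2\nor{\Sigma^{1/2}(\wh w_\la-w_\la)}^2+2\nor{\Sigma^{1/2}(w_\la-w_\star)}^2 .
\]
The approximation term is deterministic: $w_\la-w_\star=-\la(\Sigma+\la I)^{-1}w_\star$, and the spectral calculus bound $\sup_s s\la^2/(s+\la)^2\le\la/4$ gives a contribution of order $\la\nor{w_\star}^2$.

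For the estimation term, note that $w_\la=(\Sigma+\la I)^{-1}\Sigma w_\star$ and write
\[
\wh w_\la-w_\la=(\wh\Sigma+\la I)^{-1}\big[(\wh h-\wh\Sigma w_\star)-(h-\Sigma w_\star)\big]+(\wh\Sigma+\la I)^{-1}(\Sigma-\wh\Sigma)(w_\la-w_\star).
\]
The first bracket reduces to $\wh h-\wh\Sigma w_\star=\frac1T\sum_{t}\Phi(x_t)(x_{t+1}-f_\star(x_t))$, because $h-\Sigma w_\star=0$. This is a genuine Hilbert-valued martingale with respect to the natural filtration, since $\bbE[x_{t+1}-f_\star(x_t)\mid X_t]=0$. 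Pinelis' inequality, combined with~\eqref{mom_bound} and~\eqref{bound_phi}, then gives a Bernstein bound of order $\kappa(\sigma\sqrt{\log(1/\delta)/T}+M\log(1/\delta)/T)$ with no dependence on mixing. For the covariance difference $\Sigma-\wh\Sigma$ we use the Hilbert–Schmidt-valued additive functional $g(x)=\Phi(x)\otimes\Phi(x)$, bounded by $\kappa^2$. Under Assumption~\ref{UGE} the Poisson equation $\hat g-P\hat g=g-\pi(g)$ has the solution $\hat g=\sum_k (P^k g-\pi g)$, with $\nor{\hat g}_\infty\le 2C\kappa^2/(1-\rho)$. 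This yields the decomposition
\[
\sum_t\big(g(X_t)-\pi g\big)=\sum_t\big(\hat g(X_{t+1})-P\hat g(X_t)\big)+\hat g(X_0)-\hat g(X_T),
\]
in which the first sum consists of bounded martingale increments. Applying Pinelis' inequality to that sum and bounding the two boundary terms deterministically gives
\[
\nor{\wh\Sigma-\Sigma}\le \frac{4C\kappa^2}{1-\rho}\left(\sqrt{\frac{2\log(6/\delta)}{T}}+\frac1T\right).
\]
This matches $\Lambda_{T,\delta}/2$, and it also explains why the initial distribution does not matter: the non-stationarity enters only through the $1/T$ boundary terms.

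The main obstacle is converting these norm bounds into control in the $\Sigma^{1/2}$-weighted norm. If $\la\ge\Lambda_{T,\delta}$, then on the covariance event $\nor{(\Sigma+\la I)^{-1/2}(\Sigma-\wh\Sigma)(\Sigma+\la I)^{-1/2}}\le 1/2$, using the crude bound $\nor{\Sigma-\wh\Sigma}/\la$. A Neumann series then gives $\nor{\Sigma^{1/2}(\wh\Sigma+\la I)^{-1}(\Sigma+\la I)^{1/2}}\le 2$. Consequently the estimation term is bounded by a constant times
\[
\frac{1}{\la}\Big(\nor{\wh h-\wh\Sigma w_\star}^2+\nor{\Sigma-\wh\Sigma}^2\nor{w_\la-w_\star}^2\Big).
\]
Inserting the two concentration bounds, and using $\nor{w_\la-w_\star}\le\nor{w_\star}$ with constants absorbed into $D$, gives $\frac{C^2D}{(1-\rho)^2}\big(\frac{\log(6/\delta)}{\la T}+\frac1{\la T^2}\big)$. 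A union bound over the events, with $\delta/3$ allotted to each, completes the proof of~\eqref{eq:sample-rate-bound-simplified}. Finally, choosing $\la\asymp T^{-1/2}$, which is admissible since $\Lambda_{T,\delta}\asymp T^{-1/2}$, yields the stated rate.
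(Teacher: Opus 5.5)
Your argument is correct and follows the paper's overall structure: the split $\cE(\wh f_\la,f_\star)\le 2\cE(\wh f_\la,f_\la)+2\cE(f_\la,f_\star)$, Neumann-series control of $\Sigma^{1/2}\wh\Sigma_\la^{-1}$ on the event $\la\ge 2\nor{\wh\Sigma-\Sigma}$, the Poisson-equation Hoeffding bound for the covariance, and Pinelis--Bernstein for the transition noise. Where you differ is in how the estimation error is organized. The paper writes $\wh w_\la-w_\la=\wh\Sigma_\la^{-1}[(\wh h-h)+(\Sigma-\wh\Sigma)w_\la]$. It then splits $\wh h-h$ into the martingale $\frac1T\sum_t\Phi(x_t)(x_{t+1}-f_\star(x_t))$ plus the additive functional $\frac1T\sum_t(\Phi(x_t)f_\star(x_t)-h)$. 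That additive functional needs a second Markov Hoeffding bound (Lemma~\ref{lem:probbounds}), and it is where $R$ enters $D_4=96\kappa^2(R+\kappa\nor{w_\star})^2$. Under well-specification the additive functional equals $(\wh\Sigma-\Sigma)w_\star$, and you have in effect merged it with the paper's second term into $(\Sigma-\wh\Sigma)(w_\la-w_\star)$. This gives you four things:
\begin{itemize}
\item a noise term that does not depend on mixing;
\item only two concentration events;
\item a constant free of $R$;
\item a mixing-dependent term weighted by $\nor{w_\la-w_\star}=\la\nor{\Sigma_\la^{-1}w_\star}$ rather than by $\nor{w_\star}$.
\end{itemize}
Under an additional source condition on $w_\star$, the last point would make the part of the bound carrying $C^2/(1-\rho)^2$ of lower order, although the admissibility threshold $\Lambda_{T,\delta}$ still depends on $C/(1-\rho)$. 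The paper's organization buys modularity instead. Its bound on $\nor{\wh h-h}$ does not use $h=\Sigma w_\star$, so together with $\nor{w_\la}\le R/\sqrt{\la}$ it also yields the misspecified estimate in Proposition~\ref{prop:est-error} used for Theorem~\ref{thm:universal}. In that setting no $w_\star$ need exist, and the non-martingale part of $\wh h$ cannot be folded into $\wh\Sigma-\Sigma$. Your symmetric Neumann bound $\nor{\Sigma^{1/2}\wh\Sigma_\la^{-1}\Sigma_\la^{1/2}}\le 2$ is equivalent here to Lemma~\ref{lem:op_neu_est}. As in the paper, absorbing the $\kappa^2\sigma^2\log(6/\delta)/(\la T)$ and $\kappa^2M^2\log^2(6/\delta)/(\la T^2)$ noise terms into $C^2D/(1-\rho)^2$ relies on the $\lesssim$ and on discarding higher-order terms.
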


The proof of this theorem follows from the results in the following sections, proved in Appendix~\ref{sec:learning_guar}.

The first finite-sample bound holds for any fixed \(\lambda\) above a threshold depending on the number of observations and the confidence level. In particular, this condition excludes the minimum-norm interpolation regime, corresponding to \(\lambda\downarrow 0\). This restriction could potentially be relaxed using recent results on benign overfitting and interpolation; see, for example, \cite{bartlett2020benign,bartletttsigler}. We do not pursue this direction and leave it for future work. The second bound is obtained by choosing \(\lambda\) to balance the leading estimation and approximation terms. The resulting rate matches the optimal supervised-learning rate under analogous assumptions; see e.g. \cite{caponnetto2007optimal}.

We add a few more comments about the comparison with results for i.i.d. data.
Our result for a single finite trajectory of length \(T\) highlights the presence of an additional term  \(1/(\lambda T^2)\), which appears as a boundary term from truncating a certain martingale over the considered trajectory. For fixed \(\lambda\), this boundary term decays as \(T^{-2}\);  it is smaller than the leading contribution \(1/(\lambda T)\) by a factor \(T^{-1}\), and therefore has no effect on the learning rate. It remains an open question whether such a boundary correction is intrinsic to the Markovian error bound or simply an artifact of the chosen decomposition.

A more notable difference is the multiplicative constant factor
\[
\left(\frac{C}{1-\rho}\right)^2,
\]
which ultimately reflects the Markovian nature of the observations. This factor is tied to the uniform geometric ergodicity Assumption~\ref{UGE}. Although it does not change the asymptotic rate in \(T\), it diverges as \(\rho\to 1\), as happens, for example, in the high-dimensional example~\eqref{eq:bounded-drift-tau-rel}. The presence of a multiplicative factor of this form is expected from the viewpoint of Markov-chain theory. The quantity \((1-\rho)^{-1}\), often interpreted as a \emph{relaxation-time} scale, controls the persistence of correlations along the trajectory. {The quadratic dependence on this time scale is unlikely to be optimal}. 
Indeed, squared deviations of empirical averages typically scale proportionally to the integrated correlation time, suggesting a dependence of order \((1-\rho)^{-1}\) rather than \((1-\rho)^{-2}\); see, for instance, \citet[Theorem~1.2]{chatterjee2025spectral}. Hence, the appearance of \((1-\rho)^{-2}\) may be a consequence of the proof technique. Similar quadratic dependences occur in related concentration estimates for empirical measures of Markov chains; see, e.g., \citet[Theorem~5.4]{kloeckner2020empirical}.

Finally, we note that the closest result in the literature is given in \citet{zou2012generalization}, where generalization bounds are derived for least squares  regularized regression with uniformly ergodic Markov samples. Their setting is closely related to ours, since one-step prediction from a Markov trajectory can be formulated as a regression problem with dependent observations. However, their bounds are derived under different assumptions and expressed in terms of different complexity quantities, making a direct comparison with Theorem~\ref{thm:wellspecsys} difficult. As mentioned in the introduction, further results are available if stationarity is assumed.

The derivation of the above results relies on a classical decomposition into approximation and estimation errors, which we recall and analyze next.

\subsection{Approximation and estimation errors} 

First we introduce the basic decomposition into estimation and approximation error and then provide the corresponding estimates. As we will see it is in the analysis of the estimation error that the dynamic nature of the data plays a role and novel technical aspects are found.

Let \(w_\la\) be the unique solution of the problem
\[
\min_{w\in\cH}
\int \big( x' - \inner{w}{\Phi(x)} \big)^2 \, P(x,dx') \, \pi(dx)
\;+\; \la \|w\|^2,
\qquad \la > 0 
\]
and let $f_\la$ be the corresponding estimator.
Consider the decomposition 
\be\label{err_dec}
\wh f_\la - f_\star = (\wh f_\la -f_\la)+ (f_\la - f_\star ) .
\ee
The first term on the right-hand side is the estimation error, while the second is the approximation error. The approximation can be analyzed exactly as in supervised learning. We recall the following basic results. 
\begin{proposition}\label{prop:apprx}
If Assumptions~\eqref{bound} and~\eqref{universal_phi} hold then
\be\label{apprx_conv}
\lim_{\la\to 0}
\cE(f_\la, f_\star)
=0.
\ee
Moreover, if Assumption~\ref{wellspec} holds then 
\be\label{apprx_bound}
\cE(f_\la, f_\star)
\le \la \nor{w_\star}^2.
\ee
\end{proposition}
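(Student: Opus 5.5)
The plan is to reduce both claims to the standard spectral analysis of Tikhonov regularization. We work with the population covariance operator $\Sigma=\int \Phi(x)\otimes\Phi(x)\,\pi(dx)$ and the vector $h=\int \Phi(x)\,x'\,P(x,dx')\,\pi(dx)$. By \eqref{bound_phi}, $\Sigma$ is positive, self-adjoint and trace class, with $\operatorname{Tr}\Sigma\le\kappa^2$. By \eqref{2mom} and Cauchy--Schwarz, $h$ is well defined.

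Integrating $x'$ against $P(x,dx')$ first gives $h=\int\Phi(x)f_\star(x)\,\pi(dx)=S^*f_\star$, where $S:\cH\to L^2(\pi)$ is $Sw=\inner{w}{\Phi(\cdot)}$. This operator is bounded with $\|S\|\le\kappa$, and $S^*S=\Sigma$. The population regularized problem then has the solution $w_\la=(\Sigma+\la I)^{-1}S^*f_\star$. Since $\cE(f,f_\star)=\nor{f-f_\star}_\pi^2$, as already noted for the excess risk, we have $f_\la=Sw_\la$ and
\[
f_\la-f_\star=\big(S(S^*S+\la I)^{-1}S^*-I\big)f_\star=-\la(SS^*+\la I)^{-1}f_\star,
\]
using the identity $S(S^*S+\la)^{-1}S^*=SS^*(SS^*+\la)^{-1}$.

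For \eqref{apprx_bound}, we substitute $f_\star=Sw_\star$ from Assumption~\ref{wellspec}. This gives $f_\la-f_\star=-\la S(\Sigma+\la I)^{-1}w_\star$, so
\[
\cE(f_\la,f_\star)=\la^2\big\|\Sigma^{1/2}(\Sigma+\la I)^{-1}w_\star\big\|^2\le \la^2\sup_{s\ge0}\frac{s}{(s+\la)^2}\nor{w_\star}^2 .
\]
The supremum equals $1/(4\la)$, so the bound is $\la\nor{w_\star}^2/4\le\la\nor{w_\star}^2$. Alternatively, and more elementarily, one can compare the regularized risk at $w_\la$ and at $w_\star$: $\cE(f_\la,f_\star)+\la\|w_\la\|^2\le \cE(f_\star,f_\star)+\la\|w_\star\|^2=\la\|w_\star\|^2$. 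This holds because $\cR(f)+\la\|w\|^2$ differs from $\cE(f,f_\star)+\la\|w\|^2$ only by the constant $\cR(f_\star)$.

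For \eqref{apprx_conv}, we write $f_\star=g+r$, where $g$ is the orthogonal projection onto $\overline{\operatorname{ran} S}$ and $r\in(\operatorname{ran}S)^\perp$. Assumption~\ref{universal_phi} says precisely that $\inf_w\|Sw-f_\star\|_\pi=0$, i.e.\ $f_\star\in\overline{\operatorname{ran}S}$, so $r=0$. We then apply the spectral theorem to the positive bounded operator $SS^*$ on $L^2(\pi)$: $\la(SS^*+\la)^{-1}f_\star$ converges strongly to the projection of $f_\star$ onto $\ker(SS^*)=\ker S^*=(\operatorname{ran}S)^\perp$, which is zero. Concretely, we use dominated convergence in the spectral measure of $f_\star$. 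The integrand $\la^2/(s+\la)^2$ is bounded by $1$ and tends to $\mathds{1}_{\{s=0\}}$, and the spectral mass at $0$ vanishes because $f_\star\perp\ker S^*$.

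Hence $\cE(f_\la,f_\star)=\|\la(SS^*+\la)^{-1}f_\star\|_\pi^2\to0$. If one prefers to avoid spectral measures, an $\varepsilon/2$ argument works as well: choose $w_\varepsilon$ with $\|Sw_\varepsilon-f_\star\|_\pi\le\varepsilon$, use that the operator norm of $\la(SS^*+\la)^{-1}$ is at most $1$, and apply the well-specified bound to $Sw_\varepsilon$. This gives
\[
\cE(f_\la,f_\star)^{1/2}\le \varepsilon+\sqrt{\la}\,\|w_\varepsilon\|,
\]
which yields the limit.

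The only point needing care is the identification $h=S^*f_\star$ and the consequent reduction of the regularized risk to $\cE+\la\|w\|^2$ up to a constant. This needs Fubini, justified by the boundedness of $\Phi$ and \eqref{2mom}. The rest is routine, so I expect no real obstacle.
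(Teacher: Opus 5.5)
Your proposal is correct and follows essentially the same route as the paper: both identify $h=S^*f_\star$, write $f_\lambda-f_\star=-\lambda(SS^*+\lambda I)^{-1}f_\star$, deduce the limit from $f_\star\in\overline{\operatorname{ran}S}=(\ker SS^*)^\perp$ by dominated convergence, and bound the well-specified case through $\lambda^2\|\Sigma^{1/2}(\Sigma+\lambda I)^{-1}w_\star\|^2$ (the paper uses the eigenbasis of the compact operator $L=SS^*$ in Lemma~\ref{dom_conv}, while your spectral-measure version does not need compactness). Your sharper constant $\lambda/4$ and your alternative variational and $\varepsilon$-approximation arguments are valid minor refinements.
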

\noindent The proof is a direct adaptation of standard results in supervised learning \cite{devito2005model} and inverse problems \cite{engl1996regularization}. Since it is short, we report it in Appendix for completeness. For the estimation error we  prove the following result.

\begin{proposition} \label{prop:est-error}
Let  Assumptions~\ref{bound} and~\ref{UGE} hold. Let \( \delta \in (0,1) \) if
\[
\la \ge \frac{8 C \kappa^2}{1-\rho} \left( \sqrt{\frac{2 \log(6/\delta)}{T}}+\frac{1}{T}\right).
\]
then with probability at least \( 1 - \delta \),
\[
\cE\big(\wh f_\la, f_\la\big)
\le \frac{C^2 D_1 }{(1-\rho)^2} \left(
1
+ \frac{\kappa^2}{\lambda}
\right) \left(\frac{2\log (6/\delta)}{\la T}+\frac{1}{\la T^2}\right)
+ \frac{D_2 \log^2(6/\delta)}{\la T^2}
+\frac{D_3 \log(6/\delta)}{\la T},
\]
where \( D_1 := 192 \kappa^2 R^2 \), \( D_2 := 12 \kappa^2 M^2 \), and \( D_3 := 6 \kappa^2 \sigma^2 \). Moreover, if in addition Assumption~\ref{wellspec} holds, then
\[
\cE\big(\wh f_\la, f_\la\big)
\le 
\frac{ C^2 D_4 }{(1-\rho)^2} \left(\frac{2\log (6/\delta)}{\la T}+\frac{1}{\la T^2}\right)
+ \frac{D_2 \log^2(6/\delta)}{\la T^2}
+ \frac{D_3 \log(6/\delta)}{\la T},
\]
where \( D_4 := 96 \kappa^2 \, (R + \kappa \nor{w_\star})^2 \).
\end{proposition}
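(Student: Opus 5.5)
We follow the operator-theoretic route of supervised learning, with Markov-chain concentration in place of i.i.d.\ concentration. Write $\Sigma = \int \Phi(x)\otimes\Phi(x)\,\pi(dx)$ and $h = \int \Phi(x) f_\star(x)\,\pi(dx)$, so that $w_\la = (\Sigma+\la I)^{-1}h$ and $\cE(\wh f_\la,f_\la) = \nor{\Sigma^{1/2}(\wh w_\la - w_\la)}^2$. Using \eqref{ridgesol}, we decompose
\[
\wh w_\la - w_\la = (\wh\Sigma+\la I)^{-1}\big[(\wh h - \wh\Sigma w_\la) - (h - \Sigma w_\la)\big]
= (\wh\Sigma+\la I)^{-1}\big[(\wh h - \tilde h) + (\tilde h - h) - (\wh\Sigma - \Sigma)w_\la\big],
\]
where $\tilde h = \frac1T\sum_{t} \Phi(x_t) f_\star(x_t)$. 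The term $\wh h-\tilde h = \frac1T\sum_t \Phi(x_t)(x_{t+1}-f_\star(x_t))$ is a sum of martingale differences with respect to the natural filtration. The remaining two terms are additive functionals $\frac1T\sum_t g(X_t) - \int g\,d\pi$ of the chain. Here $g(x)=\Phi(x)f_\star(x)$ takes values in $\cH$, and $g(x)=\Phi(x)\otimes\Phi(x)$ takes values in Hilbert--Schmidt operators; in the well-specified case the latter is applied to $w_\la$, with $\nor{w_\la}\le\nor{w_\star}$. Each of the three terms gets confidence $\delta/3$, which accounts for the $\log(6/\delta)$ factors.

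\textbf{Step 1: operator factor.} We bound $\nor{\Sigma^{1/2}(\wh\Sigma+\la I)^{-1}} \le \nor{\Sigma^{1/2}(\Sigma+\la I)^{-1/2}}\,\nor{(\Sigma+\la I)^{1/2}(\wh\Sigma+\la I)^{-1/2}}\,\nor{(\wh\Sigma+\la I)^{-1/2}}$. The first factor is at most $1$ and the last at most $\la^{-1/2}$. For the middle factor we use the standard Neumann-series argument: if $\nor{\wh\Sigma-\Sigma}\le\la/2$, it is at most $\sqrt2$. The condition $\la\ge\Lambda_{T,\delta}$ is exactly what guarantees $\nor{\wh\Sigma-\Sigma}_{HS}\le \la/2$ with probability $1-\delta/3$. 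This follows from the Hilbert-valued concentration inequality for UGE chains: $\frac{4C\kappa^2}{1-\rho}(\sqrt{2\log(6/\delta)/T}+1/T)$ with $\nor{g}_\infty\le\kappa^2$. As a result, $\cE(\wh f_\la,f_\la)\lesssim \la^{-1}\nor{\text{bracket}}^2$.

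\textbf{Step 2: concentration for additive functionals.} This is the main obstacle and the novel ingredient. For bounded $g:\cX\to\cH$, UGE gives a solution of the Poisson equation $\hat g - P\hat g = g - \pi(g)$, namely $\hat g=\sum_{k\ge0}(P^k g - \pi g)$, with $\nor{\hat g}_\infty\le 2C\nor{g}_\infty/(1-\rho)$ by the total variation bound applied in Bochner form. Then
\[
\sum_{t=0}^{T-1}(g(X_t)-\pi g) = \sum_{t=1}^{T}\big(\hat g(X_t)-P\hat g(X_{t-1})\big) + \hat g(X_0) - \hat g(X_T),
\]
which is a bounded $\cH$-valued martingale plus a boundary term of size $O(C\nor g_\infty/(1-\rho))$. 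Pinelis' inequality for martingales in $2$-smooth spaces yields $\nor{\cdot}\le \frac{4C\nor g_\infty}{1-\rho}\sqrt{2\log(2/\delta)/T}$, and the boundary term contributes $1/T$. Squaring these, the boundary pieces produce the $1/(\la T^2)$ terms, and the prefactor $(1-\rho)^{-2}$ together with $\nor{f_\star}_\infty\le R$ gives the $D_1$ and $D_4$ constants. Without well-specification, $\nor{w_\la}\le \kappa R/\la$ holds, which explains the extra $\kappa^2/\la$ factor.

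\textbf{Step 3: martingale noise term.} For $\wh h-\tilde h$, Assumption \eqref{mom_bound} and \eqref{bound_phi} give conditional Bernstein moments $\bbE[\nor{\xi_t}^m\mid\mathcal F_t]\le \frac{m!}{2}\kappa^2\sigma^2(\kappa M)^{m-2}$. A Bernstein-type Pinelis inequality for Hilbert martingales then gives $\nor{\wh h-\tilde h}\lesssim \kappa\sigma\sqrt{\log(6/\delta)/T} + \kappa M\log(6/\delta)/T$; note that no $(1-\rho)$ factor appears here. Squaring and multiplying by $2/\la$ gives the $D_3$ and $D_2$ terms. Finally, we combine the three terms with $(a+b+c)^2\le 3(a^2+b^2+c^2)$ and the $\sqrt2$ factor from Step 1, and track the numerical constants.
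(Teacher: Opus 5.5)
Your architecture matches the paper's. You use the same decomposition of $\wh w_\la-w_\la$, Neumann-series control of $\Sigma^{1/2}\wh\Sigma_\la^{-1}$ on the event $\nor{\wh\Sigma-\Sigma}\le\la/2$, Poisson-equation/martingale concentration for the additive functionals, Pinelis--Bernstein for the noise term, and a $\delta/3$ union bound. The well-specified half goes through, up to constants (see below).

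The gap is in the misspecified bound. You use $\nor{w_\la}\le\nor{h}/\la\le\kappa R/\la$. With that, the bound on $\nor{\wh\Sigma-\Sigma}\,\nor{w_\la}$ is $\kappa^2/\la$ times the bound on $\nor{\tilde h-h}$. After squaring you get the factor $(1+\kappa^2/\la)^2\le 2(1+\kappa^4/\la^2)$, i.e.\ a leading term of order $\log(6/\delta)/(\la^3T)$. The statement has the factor $(1+\kappa^2/\la)$ and order $\log(6/\delta)/(\la^2T)$. So, contrary to what you write, $\kappa R/\la$ does not explain the factor $\kappa^2/\la$. The difference matters, e.g., for the condition $\log T/(\la_T^2T)\to0$ in Theorem~\ref{thm:universal}.

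The missing idea is that $h$ is not an arbitrary vector of norm $\kappa R$; it lies in the range of $S^*$. With $S:\cH\to L^2(\pi)$, $Sw=\scal{w}{\Phi(\cdot)}$, one has $\Sigma=S^*S$ and $h=S^*f_\star$, hence $w_\la=(S^*S+\la I)^{-1}S^*f_\star$. By polar decomposition and spectral calculus, $\nor{(S^*S+\la I)^{-1}S^*}\le\sup_{t\ge0}\sqrt{t}/(t+\la)\le 1/\sqrt{\la}$ (Corollary~\ref{cor:op_est}). Together with $\nor{f_\star}_\pi\le R$ this gives $\nor{w_\la}\le R/\sqrt{\la}$. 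The ratio then becomes $\kappa/\sqrt{\la}$, and $(1+\kappa/\sqrt{\la})^2\le 2(1+\kappa^2/\la)$ yields exactly the stated factor.

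A smaller point, since the statement has explicit constants: your three-factor split in Step~1 only gives $\nor{\Sigma^{1/2}\wh\Sigma_\la^{-1}}\le\sqrt{2}/\sqrt{\la}$, which doubles every constant. The paper instead writes $\wh\Sigma_\la^{-1}=\Sigma_\la^{-1}\bigl(I+(\wh\Sigma-\Sigma)\Sigma_\la^{-1}\bigr)^{-1}$ and uses $\nor{\Sigma^{1/2}\Sigma_\la^{-1}}\le 1/(2\sqrt{\la})$ to obtain $1/\sqrt{\la}$ (Lemma~\ref{lem:op_neu_est}).

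The constants $D_1,\dots,D_4$ also depend on how the terms are grouped. The paper merges the two additive-functional contributions into one term with coefficient $4C\kappa(R+\kappa\nor{w_\star})/(1-\rho)$ (resp.\ $4C\kappa R(1+\kappa/\sqrt{\la})/(1-\rho)$). It takes the $\sigma$- and $M$-parts of the martingale bound as the other two terms in $(a+b+c)^2\le 3(a^2+b^2+c^2)$. It then uses $(\sqrt{2\log(6/\delta)/T}+1/T)^2\le 2\,(2\log(6/\delta)/T+1/T^2)$. Your grouping into martingale, $\tilde h-h$, and $(\wh\Sigma-\Sigma)w_\la$ leads to different numerical constants.
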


The proof is given in Appendix~\ref{sec:learning_guar}. The proposition provides two bounds. The first does not require the model to be well specified, but has a worse dependence on \(\lambda\), of order \(1/(\lambda^2 T)\), in the leading term. This bound is sufficient to establish the asymptotic consistency result in Theorem~\ref{thm:universal}. Under Assumption~\ref{wellspec}, the second bound improves this dependence to \(1/(\lambda T)\), which is needed to obtain the finite-sample rate in Theorem~\ref{thm:wellspecsys}.

Both bounds consist of four main terms. The last two arise from the analysis of a quantity analogous to the noise term in supervised learning, here induced by the stochastic transitions between states, see Lemma~\ref{lem:probbounds}. This quantity has a martingale structure, which is reflected in the form of the bound. 
Under the moment condition~\eqref{mom_bound}, a Bernstein-type martingale concentration inequality yields the classical variance and scale contributions, of order \(\log(1/\delta)/T\) and \(\log^2(1/\delta)/T^2\), corresponding respectively to the sub-Gaussian and sub-exponential parts of the bound.

The first two terms on the right-hand side arise from probabilistic estimates, in particular from concentration of the empirical covariance operator; see again Lemma~\ref{lem:probbounds}. This is the part of the estimation error most substantially affected by the dynamical nature of the data and, specifically, by the uniform geometric ergodicity property of the process. The term proportional to \(\log(6/\delta)/(\lambda T)\) is obtained from a Hoeffding-type bound under the boundedness condition~\eqref{bound_phi}, while the term proportional to \(1/(\lambda T^2)\) is the aforementioned boundary term. Both carry the factor \(C^2/(1-\rho)^2\), reflecting the uniform geometric ergodicity Assumption~\ref{UGE}.

\section{Learning dynamical systems with vector-valued states}
\label{sec:vvds}

The extension to vector-valued dynamical systems is straightforward provided the state space $\cX$ has a linear structure. In particular, we let $\cX$ be a Hilbert space, which allows us to consider $\cX = \bbR^d$ for some $d \in \mathbb N$. Considering possibly infinite-dimensional spaces can cover a wider range of settings and will be useful when discussing Koopman operators. We next describe how to adapt the discussion in the previous section to vector-valued states.

\subsection{Setting}

Let \( (\cX, \scal{\cdot}{\cdot}_\cX) \) be a real separable Hilbert space, and let \( (X_t)_{t \in \bbN_0} \) be a {time-}homogeneous Markov process with initial distribution \(\mu_0\) and transition kernel \(P\). Assume that there exists \(R<\infty\) such that, for all \(x \in \cX\),
\begin{equation}\label{2mom_vv}
\int \nor{x'}_{\cX}^2 \, P(x, dx') \le R^2.
\end{equation}
The optimal prediction function in this setting is 
\( f_\star : \cX \to \cX \) given by
 \[ f_\star(x) = \int x' \, P(x, dx'), \quad  x \in \cX,
  \] 
where the integral is in the sense of Bochner.
The goal is to estimate the optimal prediction function from a single trajectory $x_0, \dots, x_T$. Assuming the process is ergodic with invariant measure \( \pi \), see Assumption~\ref{UGE}, the quality of an empirical estimator \( \wh f \) is measured by
\begin{equation}
\label{eq:excess-risk-l2_vv}
\cE(\wh f, f_\star) = \int \nor{\wh f(x) - f_\star(x)}_{\cX}^2 \, \pi(dx).
\end{equation}

The comments we made for scalar valued states adapt to the vector states setting with the obvious changes. The optimal prediction function can be written as  $f_\star(x)=\bbE[X_{t+1}~|~X_t=x]$. 
For every \(x\in \cX\), $f_\star(x)$ minimizes the instantaneous least squares risk
\[
\bbE[\nor{X_{t+1}-f(X_t)}_\cX^2~|~X_t=x]= \int \nor{x' - f(x)}_\cX^2 \, P(x, dx').
\]
Both $f_\star$ and the {instantaneous risk} are well defined by Condition~\eqref{2mom_vv}, which is imposed for simplicity and could be relaxed, see~\eqref{fin2mom}.
Moreover, if the process is ergodic with invariant measure \( \pi \), we define the expected risk
\[
\cR(f) = \int \nor{x' - f(x) }_\cX^2 \,  P(x, dx')\, \pi(dx)
\]
which is well defined on  \(
L^2(\pi) := \left\{ f : \cX \to \cX \; : \; \nor{f}_\pi^2 := \int \nor{f(x)}_\cX^2 \, \pi(dx) < \infty \right\}
\) by Condition~\eqref{2mom_vv}, and again a weaker condition like~\eqref{pi2mom} would suffice. Then it is easy to check that $f_\star$ is the minimizer of the expected risk in $L^2(\pi)$ and moreover 
\[
\cE( f, f_\star) = \cR(f) - \cR(f_\star), \qquad f \in L^2(\pi) .
\]
\begin{remark}[On the  \(L^2(\pi)\) space notation]
The space \(L^2(\pi)\) introduced above is the Bochner
space \(L^2(\pi;\cX)\) of strongly measurable, square-integrable \(\cX\)-valued functions, identified up to \(\pi\)-almost-everywhere equality. Since \(\cX\) is separable, Borel \(\cB(\cX)\)-measurability implies
strong measurability. We use the
shorter notation \(L^2(\pi)\), since the meaning will always be clear from the context.
\end{remark}
As shown next, the least squares estimator can also be naturally generalized to the case of vector-valued states.

\subsection{Least squares estimator}

In this setting, we consider nonlinear estimators of the form
\be\label{lin_est_vv}
f(x) = W^*\Phi(x), \qquad x \in \cX,
\ee
where \( (\cH, \inner{\cdot}{\cdot}) \) is a separable Hilbert space, \( \Phi : \cX \to \cH \) a measurable map, and \( W \in \cL_2(\cX, \cH)\). The estimator coefficients are computed via regularized least squares by minimizing
\be\label{rls_vv}
\frac{1}{T} \sum_{t=0}^{T-1} \nor{x_{t+1} - W^*\Phi(x_t)}_{\cX}^2
+ \la \nor{W}_{\cL_2(\cX, \cH)}^2, \qquad \la > 0 .
\ee
The error is measured by the norm of $\cX$ and the regularization term is the (squared) Hilbert--Schmidt norm. The unique solution of the above problem is denoted by \( \wh W_\la \), and the corresponding estimator by \( \wh f_\la \). By standard computations \( \wh W_\la \) can be written as
\be\label{ridgesol_vv}
\wh W_\la = (\wh \Sigma + \la I)^{-1} \wh h ,
\ee
where
\be\label{sigmah_vv}
\wh \Sigma = \frac{1}{T} \sum_{t=0}^{T-1} \Phi(x_t) \otimes \Phi(x_t),
\qquad
\wh h = \frac{1}{T} \sum_{t=0}^{T-1} \Phi(x_t) \otimes x_{t+1}.
\ee
Note that the only difference with the scalar case is the expression for $\wh h$. As in the scalar case, the above expression can be computed if \( \cH \) is finite dimensional.
Moreover, for any kernel \( k:\cX\times \cX\to \bbR \) such that $k(x, y)= \inner{\Phi(x)}{\Phi( y)}$ for every $x, y \in \cX$, the solution can also be written for all $ x \in \cX$ as
\[
\wh f_\la(x) = \sum_{t=0}^{T-1} C_t \, k(x_t, x), 
\]
where \( C = (C_0,\ldots,C_{T-1}) \in \cX^T \) is given by \( C = (\wh K + T \la I)^{-1} \wh y \), with \( \wh y = (x_1, x_2, \dots, x_T) \in \cX^T \) and \( \wh K\) is a linear operator from $\cX^T$ to $\cX^T$ defined, for every \(C\in\cX^T\), by
\[
(\wh K C)_t=\sum_{s=0}^{T-1} k(x_t,x_s) \, C_s,
\qquad t=0,\ldots,T-1 .
\]
If $\cX= \bbR^d$, then $\cX^T= (\bbR^d)^T$, \( \wh y , C \in (\bbR^d)^T\),
and \( \wh K \in \bbR^{dT \times dT} \) is a block matrix with 
\[
\wh K_{ts}=k(x_t,x_s)I_d,
\qquad t,s=0,\ldots,T-1 .
\]
Importantly, the representation of the estimator remains valid when \(\cX\) is infinite dimensional; see, e.g., \citet{micchelli2005learning,grunewalder2012modelling} for details.

Online computations can also be derived analogously to the scalar setting but we omit this discussion. We add instead a few comments on possible choices for the map $\Phi$ to connect with ideas in time series analysis and multi-task learning. 

\paragraph{Component-wise {one-step prediction}.}

We specialize the discussion to the case of finite-dimensional state spaces, that is, $\cX= \bbR^d$.
In this case, each state is a vector $x= (x^1, \dots, x^d)$. The estimator {\em coefficient} $W$ can be identified with  $w_1, \dots, w_d $ in $\cH$, and each function of the form~\eqref{lin_est_vv} can be equivalently described in terms of its components
\be\label{vv_comp}
f_j(x)=\scal{w_j}{\Phi(x)}, \quad \quad j=1, \dots, d,
\ee
where $f_j(x)= e_j^T f(x)$ and $(e_j)_{j=1}^d$ is the canonical orthonormal basis of $\bbR^d$. Indeed, for every \(j=1,\dots,d\), if \(w_j = W e_j\), then for every \(x\in\bbR^d\),
\[
f_j(x)= e_j^T W^*\Phi(x)
= \scal{W e_j}{\Phi(x)}
= \scal{w_j}{\Phi(x)} .
\]
This general situation can be contrasted with a more restrictive model where each component is treated separately. Indeed, this corresponds to considering maps
\(
\Phi_j:\bbR\to \cH_j\), \( j=1,\dots,d,
\)
where each \(\cH_j\) is a Hilbert space, and letting
\[
\cH=\bigoplus_{j=1}^d \cH_j,
\qquad
\Phi(x)=\big(\Phi_1(x^1),\dots,\Phi_d(x^d)\big).
\]
In this case, choosing \(w_j\in \cH_j\) yields
\(
f_j(x)=\scal{w_j}{\Phi_j(x^j)}_{\cH_j},
\)
so that the \(j\)-th component of the prediction depends only on the \(j\)-th component of the current state. 

\paragraph{Joint estimation and multitask learning.}
{Returning to the formulation~\eqref{vv_comp}, we can observe that the estimation of each \(w_j\), \(j=1,\dots,d\) is done separately.} Indeed,
\be\label{indep_reg}
\nor{W}_{\cL_2(\bbR^d,\cH)}^2
=
\sum_{j=1}^d \nor{w_j}^2,
\ee
so that Problem~\eqref{rls_vv} is equivalent to
\[
\min_{w_1,\dots,w_d\in\cH}
\frac{1}{T}\sum_{t=0}^{T-1}
\sum_{j=1}^d
\left( x_{t+1}^j-\scal{w_j}{\Phi(x_t)}\right)^2
+
\lambda \sum_{j=1}^d \nor{w_j}^2 ,
\]
which is itself equivalent to the collection of \(d\) separate problems
\[
\min_{w_j\in\cH}
\frac{1}{T}\sum_{t=0}^{T-1}
\left(x_{t+1}^j-\scal{w_j}{\Phi(x_t)}\right)^2
+
\lambda \nor{w_j}^2,
\qquad j=1,\dots,d .
\]
It is natural to ask whether these different problems can be coupled, so that the estimation of each component is influenced by the others. This is a classical question in multitask learning \cite{micchelli2004kernels,alvarez2012kernels}, which we next revisit within the {one-step prediction} setting.

The most intuitive idea is to couple the estimation problems by designing suitable regularizers. In particular, it is natural to consider regularizers of the form
\be\label{mtl_reg}
\operatorname{Tr}(W A W^*)
=
\sum_{i,j=1}^d A_{ij}\scal{w_i}{w_j}_\cH,
\ee
for a suitable symmetric positive semidefinite matrix \(A\in\bbR^{d\times d}\). Different choices of \(A\) induce different forms of coupling. {For example, \(A=I\) gives the independent regularizer~\eqref{indep_reg}.} For \(a\in[0,1]\), the choice
\[
A=(1-a)\left(I-\frac1d\mathbf 1\mathbf 1^\top\right)+aI
\]
gives
\[
(1-a)\sum_{j=1}^d \nor{w_j-\bar w}^2
+
a\sum_{j=1}^d \nor{w_j}^2,
\qquad
\bar w=\frac1d\sum_{j=1}^d w_j .
\]
Thus \(a=0\) corresponds to maximal coupling, while \(a=1\) recovers the independent regularization. More generally, if \(C_1,\dots,C_m\) is a partition of \(\{1,\dots,d\}\), the choice
\[
A=(1-a)\left(
I-\sum_{\ell=1}^m |C_\ell|^{-1}
\mathbf 1_{C_\ell}\mathbf 1_{C_\ell}^\top
\right)+aI
\]
gives the regularizer
\[
(1-a)\sum_{\ell=1}^m\sum_{j\in C_\ell}
\nor{w_j-\bar w_\ell}^2
+
a\sum_{j=1}^d \nor{w_j}^2,
\qquad
\bar w_\ell
=
\frac{1}{|C_\ell|}\sum_{j\in C_\ell} w_j .
\]
This couples only components belonging to the same cluster.

As a final example, consider a symmetric matrix \(S\in\bbR^{d\times d}\) with nonnegative entries, and let \(L_S=D_S-S\) be the corresponding graph Laplacian, with \((D_S)_{ii}=\sum_{j=1}^d S_{ij}\). Taking \(A=L_S+\Gamma\), where \(\Gamma\) is diagonal with nonnegative entries, gives the regularizer
\[
\frac12\sum_{i,j=1}^d S_{ij}\nor{w_i-w_j}^2
+ \sum_{j=1}^d \Gamma_{jj}\nor{w_j}^2 .
\]

Least squares estimators using regularization of the form~\eqref{mtl_reg} can be rewritten to highlight their relation with estimators using the independent regularization~\eqref{indep_reg}. Since we do not develop this point further, we restrict the discussion to \(A\succ 0\), for which the relation is particularly simple. Indeed, consider the least squares problem 
\be\label{mtl_ridge}
\min_{W\in \cL_2(\cX,\cH)}
\frac1T\sum_{t=0}^{T-1}
\nor{x_{t+1}-W^*\Phi(x_t)}_{\cX}^2
+
\lambda\,\operatorname{Tr}(WAW^*).
\ee
Recall that \( A = A^\ast \), therefore it admits the eigendecomposition
\[
A=UDU^*,
\qquad D=\operatorname{diag}(d_1,\dots,d_d),
\qquad d_j>0,
\]
where \(U\) is orthogonal. Define
\[
V=WU,
\qquad \widetilde x_t=U^*x_t,
\qquad t=0,\dots,T.
\]
Then \(W=VU^*\), and therefore
\[
W^*\Phi(x_t)=UV^*\Phi(x_t),
\qquad \nor{x_{t+1}-W^*\Phi(x_t)}_{\cX}^2
= \nor{\widetilde x_{t+1}-V^*\Phi(x_t)}_{\cX}^2.
\]

Writing \(V=(v_1,\dots,v_d)\), we have,
\[
\operatorname{Tr}(WAW^*)=\operatorname{Tr}(VDV^*)=\sum_{j=1}^d d_j\nor{v_j}^2 .
\]
Thus Problem~\eqref{mtl_ridge} is equivalent to
\[
\min_{v_1,\dots,v_d\in\cH}
\frac{1}{T}\sum_{t=0}^{T-1}
\sum_{j=1}^d
\left(\widetilde x_{t+1}^j-\scal{v_j}{\Phi(x_t)}\right)^2
+ \lambda \sum_{j=1}^d d_j\nor{v_j}^2 ,
\]
which is itself equivalent to the collection of \(d\) separate problems
\[
\min_{v_j\in\cH}
\frac{1}{T}\sum_{t=0}^{T-1}
\left(\widetilde x_{t+1}^j-\scal{v_j}{\Phi(x_t)}\right)^2
+ \lambda d_j\nor{v_j}^2,
\qquad j=1,\dots,d .
\]
Consequently, the optimization decouples in the eigenbasis of \(A\), with coordinate-dependent regularization parameters \(\lambda d_j\), while the prediction in the original coordinates is recovered by rotating the resulting vector through \(U\).

Finally we note that the learning guarantees below are stated for \(A=I\). For a fixed \(A \succ 0\), the same arguments can be applied in its eigenbasis, but the resulting bounds would explicitly depend on the spectrum of \(A\). We leave this extension for future work.

\paragraph{Vector-valued reproducing kernel Hilbert spaces.}
The class of estimators introduced above can be equivalently described in terms of vector-valued reproducing kernel Hilbert spaces and operator-valued kernels
\[
\Gamma:\cX\times\cX\to \cL(\cX).
\]
Our basic setting corresponds to kernels of the form \(\Gamma(x,x')=k(x,x')I\), with \(k(x,x')=\scal{\Phi(x)}{\Phi(x')}\). Regularizers of the form~\eqref{mtl_reg} correspond to kernels of the form \(\Gamma(x,x')=k(x,x')A^{-1}\); see again \cite{micchelli2004kernels,alvarez2012kernels}. Considering more general vector-valued reproducing kernel Hilbert spaces is another possible direction for future work.

Given the above discussion, in the next section we discuss learning guarantees for least squares estimators with vector-valued states.

\subsection{Learning guarantees}

The assumptions for learning and the corresponding guarantees remain the same as in the scalar case, up to straightforward adjustments. We therefore omit the universal consistency result, and focus on the rates for well specified systems. The boundedness assumption is given below.

\begin{assumption}[Moment condition and boundedness] \label{bound_vv} 
Assume that there exist constants $M, R > 0$ and $\sigma>0$ such that for all $x \in \cX$, 
condition~\eqref{2mom_vv} holds and moreover
\be\label{res_bound_vv}
\int_{\cX}\nor{x' - f_\star(x)}^m\,P(x,dx')
\le
\frac{m!}{2}\,\sigma^2 M^{m-2},
\qquad \forall m \ge 2 .
\ee
Finally, assume that there exists $\kappa > 0$ such that for all $x \in \cX$,
\be\label{bound_phi_vv}
\nor{\Phi(x)} \le \kappa .
\ee
\end{assumption}

Well specified systems are described by the following assumption. Let $\cL_2(\cX, \cH)$ be the space of Hilbert--Schmidt operators from $\cX$ to $\cH$, see Appendix~\ref{app_operators}.

\begin{assumption}[Well specification] \label{wellspec_vv} 
Assume there exists $W_\star\in \cL_2(\cX, \cH)$ such that
\[
f_\star(x)
= W_\star^\ast \Phi(x),
\qquad \forall x\in \cX .
\]
\end{assumption}
The above assumption is a direct analogue of  Assumption~\ref{wellspec} for scalar-valued states,  here extended to the  vector-valued states setting. We are ready to state the learning guarantees for the least-squares estimator in this setting.  Universal  consistency  can be obtained by the very same argument as in the scalar-valued case; see Theorem~\ref{thm:universal}. We omit this derivation and focus on finite sample bounds for well-specified models.

\begin{theorem}[Rates for well specified systems]
\label{thm:wellspecsys_vv}
Let Assumptions~\ref{bound_vv}~\ref{wellspec_vv}, and~\ref{UGE} hold.
Let \( \delta \in (0,1) \), if 
\[
\la \ge \frac{8 C \kappa^2}{1-\rho} \left( \sqrt{\frac{2 \log(6/\delta)}{T}}+\frac{1}{T}\right), 
\]
then with probability at least \( 1 - \delta\), 
\begin{equation}
\label{eq:sample-rate-bound-simplified_vv}
\cE\big(\wh f_{\la}, f_\star\big)
\lesssim \frac{ C^2 D }{(1-\rho)^2} \left(\frac{2\log (6/\delta)}{\la T}+\frac{1}{\la T^2}\right) + \lambda\nor{W_\star}_{\cL_2(\cX,\cH)}^2,
\end{equation}
where the constant \( D \) can be derived from the proof. In particular, setting \( \la = \la_T \asymp T^{-1/2} \), with probability at least \( 1 - \delta \),
\[
\cE\big(\wh f_{\la_T}, f_\star\big)
\lesssim \frac{ C^2 D }{(1-\rho)^2} \left(\frac{2\log (6/\delta)}{\sqrt{T}}+\frac{1}{T^{3/2}}\right)
+ \frac{\nor{W_\star}_{\cL_2(\cX,\cH)}^2}{\sqrt{T}}.
\]
\end{theorem}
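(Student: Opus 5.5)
The proof will mirror the scalar case of Theorem~\ref{thm:wellspecsys}, replacing scalar quantities by Hilbert--Schmidt operators. Let \(W_\la = (\Sigma+\la I)^{-1} h\) be the population regularized solution, with \(\Sigma = \int \Phi(x)\otimes\Phi(x)\,\pi(dx)\) and \(h = \int \Phi(x)\otimes f_\star(x)\,\pi(dx) = \Sigma W_\star\). Write \(f_\la = W_\la^*\Phi\). Since \(\nor{a+b}^2\le 2\nor{a}^2+2\nor{b}^2\) in \(L^2(\pi)\), the decomposition~\eqref{err_dec} gives
\[
\cE(\wh f_\la,f_\star)\le 2\cE(\wh f_\la,f_\la)+2\cE(f_\la,f_\star).
\]
We also use the identity \(\cE(f,f_\star)=\nor{\Sigma^{1/2}(W-W_\star)}_{\cL_2(\cX,\cH)}^2\) for \(f=W^*\Phi\). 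The approximation term is handled as in Proposition~\ref{prop:apprx}: spectral calculus gives
\[
\nor{\Sigma^{1/2}\bigl((\Sigma+\la)^{-1}\Sigma - I\bigr)W_\star}_{\cL_2}^2 \le \la\nor{W_\star}_{\cL_2}^2,
\]
because \(\sup_s s\la^2/(s+\la)^2\le\la/4\). Only the Hilbert--Schmidt norm replaces the Hilbert norm.

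For the estimation term, write \(x_{t+1} = f_\star(x_t)+\varepsilon_{t+1}\), where \(\varepsilon_{t+1}\) is a martingale difference with respect to the natural filtration. This splits \(\wh h\) as \(\wh\Sigma W_\star + \frac1T\sum_t \Phi(x_t)\otimes\varepsilon_{t+1}\). Then
\[
\wh W_\la - W_\la = (\wh\Sigma+\la)^{-1}\Bigl[(\Sigma-\wh\Sigma)(W_\la - W_\star)\Bigr] + (\wh\Sigma+\la)^{-1}\frac1T\sum_t\Phi(x_t)\otimes\varepsilon_{t+1}.
\]
Multiplying by \(\Sigma^{1/2}\), we use \(\nor{\Sigma^{1/2}(\wh\Sigma+\la)^{-1/2}}\le\sqrt2\), which holds on the event \(\nor{\Sigma-\wh\Sigma}\le\la/2\). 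This event is guaranteed by the lower bound on \(\la\) together with the covariance concentration bound for UGE chains used in Lemma~\ref{lem:probbounds}. It reduces everything to \(\la^{-1/2}\) times the HS norms of two random terms.

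The first term is \(\nor{(\Sigma-\wh\Sigma)(W_\la-W_\star)}_{\cL_2}\). Since \(\nor{W_\la-W_\star}_{\cL_2}\le\nor{W_\star}_{\cL_2}\), it suffices to control \(\nor{\Sigma-\wh\Sigma}_{\cL_2}\). Alternatively, one can apply the Hilbert-valued Markov concentration directly to the \(\cL_2(\cX,\cH)\)-valued observable
\[
x\mapsto \Phi(x)\otimes\bigl(W_\la-W_\star\bigr)^*\Phi(x),
\]
which is bounded by \(\kappa^2(\kappa\nor{W_\star}+R)\)-type constants. Either route yields the \(C^2/(1-\rho)^2\bigl(\log(6/\delta)/T+1/T^2\bigr)\) rate, where the \(1/T^2\) comes from the Poisson-equation boundary term. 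The second term is an \(\cL_2(\cX,\cH)\)-valued martingale, with increments of norm \(\le\kappa\nor{\varepsilon_{t+1}}_\cX\) and moment condition~\eqref{res_bound_vv}. Pinelis' Bernstein inequality then gives the \(\sigma^2\log/T\) and \(M^2\log^2/T^2\) terms. A union bound over the three events, with \(\delta/3\) each, explains the \(6/\delta\) factor. Since these concentration tools are stated for general separable Hilbert spaces, and \(\cL_2(\cX,\cH)\) is separable, no new probabilistic result is needed.

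The main obstacle is verifying that the UGE concentration inequality applies verbatim to the HS-valued observables. This requires checking measurability and boundedness of \(x\mapsto\Phi(x)\otimes\Phi(x)\) and of the mixed term in \(\cL_2\), together with existence of the Poisson solution with the bound \(\nor{\hat g}_\infty\le 2C\nor{g}_\infty/(1-\rho)\). Once that is done, collecting terms with \(\la\ge T^{-1/2}\)-scale absorbs the \(D_2,D_3\) contributions into the leading \(\log(6/\delta)/(\la T)\) term, yielding~\eqref{eq:sample-rate-bound-simplified_vv}. Choosing \(\la\asymp T^{-1/2}\) gives the final rate.
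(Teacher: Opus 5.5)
Your argument is correct and has the same architecture as the paper's proof. Both use the split $\cE(\wh f_\la,f_\star)\le 2\cE(\wh f_\la,f_\la)+2\cE(f_\la,f_\star)$, the spectral bound $\la\nor{W_\star}_{\cL_2(\cX,\cH)}^2$ for the approximation term, the event $\la\ge 2\nor{\wh\Sigma-\Sigma}$ secured by the Poisson-equation Hoeffding bound, and Pinelis' Bernstein inequality for the noise. Bounding $\nor{\Sigma^{1/2}\wh\Sigma_\la^{-1}}$ through $\wh\Sigma_\la^{-1/2}$ instead of via Lemma~\ref{lem:op_neu_est} only changes constants.

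The genuine difference is how you group the estimation error. The paper writes $\wh W_\la-W_\la=\wh\Sigma_\la^{-1}[(\wh h-h)+(\Sigma-\wh\Sigma)W_\la]$. It then splits $\wh h-h$ into a martingale and the centered functional $\eta_1(x)=\Phi(x)\otimes f_\star(x)-h$, which needs a separate UGE--Hoeffding bound with $\nor{\eta_1}_\infty\le 2\kappa R$. You use well-specification to observe that $\frac1T\sum_t\eta_1(x_t)=(\wh\Sigma-\Sigma)W_\star$. This cancels against the covariance term and leaves $(\Sigma-\wh\Sigma)(W_\la-W_\star)$ plus the martingale.

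Your grouping buys three things:
\begin{itemize}
\item It saves one concentration event. Your main route needs only two, so $\log(6/\delta)$ is conservative.
\item It removes $R$ from the leading constant, which is of order $\kappa^4\nor{W_\star}_{\cL_2(\cX,\cH)}^2$ instead of $\kappa^2(R+\kappa\nor{W_\star}_{\cL_2(\cX,\cH)})^2$.
\item It exposes the factor $W_\la-W_\star=-\la\Sigma_\la^{-1}W_\star$, which is what one would exploit for faster rates under source conditions.
\end{itemize}
The paper's grouping instead keeps the analytic bound of Proposition~\ref{prop:est_one} in terms of the two generic deviations $\nor{\wh h-h}$ and $\nor{\wh\Sigma-\Sigma}$. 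That bound is reused verbatim in the misspecified case needed for Theorem~\ref{thm:universal}, and in the higher-order, finite-state and Koopman extensions, where only $\nor{\wh h-h}$ is re-bounded.

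Two harmless slips remain. First, the Poisson solution satisfies $\nor{g}_\infty\le C\nor{f}_\infty/(1-\rho)$, without your factor $2$. Second, absorbing the noise terms does not rely on $\la$ being of order $T^{-1/2}$. The $\sigma^2$ term already has the form $\log(6/\delta)/(\la T)$, and the $M^2$ term is smaller by a factor $\log(6/\delta)/T$.
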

The proof of the above result is given in Appendix~\ref{sec:furth_proofs} and follows from a straightforward adaptation of the scalar-valued analysis. The required modifications concern only the vector-valued nature of the variables and are unrelated to the main difficulties arising from dynamical systems and single-trajectory data. This is why we chose to present the scalar-valued setting first for clarity, although it could also be obtained as a special case of the vector-valued analysis. The above bound has the same structure discussed for  the bound for scalar valued states up-to the constants definition, and the dependence on  $\nor{W_\star}_{\cL_2(\cX,\cH)}$. Since we already discussed this structure and  the nature of the different contributions, we don't develop this discussion further. 

\section{Learning higher-order systems dynamical systems }\label{sec:high_ord}

Thus far, we considered dynamical systems whose evolution at each time step depends only on the current state, and saw that they are equivalent to Markov chains. Next, we generalize the discussion to higher-order dynamical systems and the corresponding higher-order Markov chains. That is, we consider dynamical systems where the evolution at each time step is governed by the previous $p$ visited states.

\subsection{Higher-order dynamical systems \& Markov chains}

We return to the dynamical systems description in terms of evolution maps~\eqref{eq:ds-def}.
Let \( \cX \) be a measurable space, \( \cN \) a probability space with measure \(\nu\), and consider \(f:\cX^p\times \cN\to \cX\). For a sequence \(\eta_0,\eta_1,\dots\) of i.i.d. samples from \(\nu\), and initial states \(x_0,\dots,x_{p-1}\in\cX\), let
\begin{equation}
x_{t+p} = f(x_t, \ldots, x_{t+p-1}, \eta_t), \qquad t=0,1,\dots.
\label{eq:ds-def_p}
\end{equation}
The above recursion describes a sample of a stochastic evolution given an initial condition, but now the evolution at each time step is governed by the previous \(p\) states. The corresponding system is called an autonomous stochastic dynamical system of order \(p\). Choosing \(p=1\), the so called first-order system, we recover the setting  in~\eqref{eq:ds-def}.

\begin{remark}
An alternative indexing convention, commonly used in the time-series and forecasting literature, is
\[
x_{t+1}
= f(x_t,\ldots,x_{t-p+1},\eta_t),
\qquad t=p-1,p,\ldots.
\]
The difference is ultimately notational. We use the convention in~\eqref{eq:ds-def_p} for consistency with the subsequent Markov process formulation and the subsequent definition of the estimator.
\end{remark}

Again, we can  consider a corresponding Markov process. More precisely, if \( (\Omega, \cA, \bbP) \) is a probability space, an order-\( p \) Markov process is a family of random variables \((X_t)_{t\in \bbN_0}\), with \(X_t:\Omega\to\cX\), such that, for all \(t \in \bbN_0\) and all \(B\in\cB(\cX)\),
\[
\bbP\big( X_{t+p} \in B \mid X_0, \dots, X_{t+p-1} \big)
= \bbP\big( X_{t+p} \in B \mid X_t, \dots, X_{t+p-1}\big).
\]
We take the process to be time-homogeneous, that is, the above conditional distribution does not depend on \(t\). The corresponding transition kernel is a measurable map
\(
P:\cX^p\times \cB(\cX)\to [0,1]
\)
such that
\begin{itemize}
\item for every \(\overline x\in \cX^p\), the map
\(B\mapsto P(\overline x,B)\) defines a probability measure on \(\cB(\cX)\);
\item for every \(B \in \cB(\cX)\), the map
\(\overline x\mapsto P(\overline x,B)\) is measurable.
\end{itemize}

For a time-homogeneous order-\( p \) Markov process, the kernel is given by
\[
P(x_t,\ldots,x_{t+p-1},B)
= \bbP\big(X_{t+p} \in B \,\big|\, X_t=x_t,\ldots,X_{t+p-1}=x_{t+p-1} \big),
\qquad B \in \cB(\cX),
\]
where the right-hand side does not depend on \(t\). Conversely, given a transition kernel \(P:\cX^p\times\cB(\cX)\to[0,1]\) and an initial distribution \(\overline \mu_0\in\cP(\cX^p)\), there exists a Markov
process of order \(p\) such that
\[
\bbP\big((X_{p-1},\dots,X_0)\in A\big)=\overline \mu_0(A),
\qquad A\in\cB(\cX^p),
\]
and
\[
\bbP\big( X_{t+p}\in B \,\big|\, X_t,\ldots,X_{t+p-1} \big)
= P(X_t,\ldots,X_{t+p-1},B),
\qquad B \in \cB(\cX).
\]
For an order-\(p\) Markov chain it is possible to consider a corresponding \emph{lifted} first-order Markov process on \(\cX^p\); see, e.g., \citet{doob1990,xu2026can} for details. More precisely, the process
\be\label{liftd_MP}
\overline X_t=(X_t,\ldots,X_{t+p-1})\in\cX^p, \qquad t\in\bbN_0
\ee
is a first-order Markov process with transition kernel
\(Q:\cX^p \times \cB(\cX^p) \to [0,1]\) given by
\be\label{lift_tk}
Q(\overline x,A)
= \int_{\cX}
\mathds{1}_A(x_1,\ldots,x_{p-1}, x')\,
P(\overline x,dx'),
\ee
for every \(\overline x=(x_0, \ldots,  x_{p-1})\in\cX^p\) and \(A\in\cB(\cX^p)\).

Since \(Q\) is a \emph{proper} Markov transition kernel on \( \cX^p \), all the standard constructions and identities developed in Section~\ref{sec:kernel-and-KC} apply. We briefly recall the relevant ones here.

The family of iterated kernels is defined by \(Q^1=Q\) and
\[
Q^{t+1}(\overline x,A)
= \int_{\cX^p} Q^t(\overline x,d\overline y)\,Q(\overline y,A),
\qquad
\overline x\in\cX^p,\ A\in\cB(\cX^p),
\]
and satisfies the Chapman--Kolmogorov equations
\[
Q^{t+s}(\overline x,A)
=
\int_{\cX^p} Q^t(\overline x,d\overline y)\,Q^s(\overline y,A),
\qquad
\overline x\in\cX^p,\ A\in\cB(\cX^p).
\]
If \(\overline \mu_t\in\cP(\cX^p)\) denotes the marginal distribution of \(\overline X_t\), then
\[
\overline \mu_t(A)=\bbP(\overline X_t\in A),
\qquad A\in\cB(\cX^p),
\]
and
\[
\overline\mu_t(A)
=
\int_{\cX^p}\overline \mu_0(d\overline x)\,Q^t(\overline x,A).
\]
We write this relation shortly as
\be\label{FPL_notations}
\overline \mu_t= \overline \mu_0 Q^t .
\ee

To learn higher-order dynamical systems we impose regularity properties on the lifted process.
More precisely, we restrict our attention to higher-order Markov processes satisfying the following assumption.
\begin{assumption}[Uniform geometric ergodicity]\label{UGE_p}
Let \(Q\) be the transition kernel~\eqref{lift_tk}.
We assume there exists \(\overline \pi\in \cP(\cX^p)\) and constants \(C>0\), \(\rho<1\)
such that, for all \(t\in\bbN\),
\[
\sup_{\overline x\in\cX^p}
\big\| Q^t(\overline x,\cdot)-\overline \pi \big\|_{\mathrm{TV}}
\le C\rho^t .
\]
\end{assumption}
The above condition is the uniform geometric ergodicity Assumption~\ref{UGE} for the lifted Markov process~\eqref{liftd_MP}. Here, \(\overline \pi\) is the unique invariant measure for the lifted process, that is,
\[
\overline\pi=\overline\pi Q, 
\]
where we used the notation~\eqref{FPL_notations}.
Moreover, Assumption~\ref{UGE_p} implies that
\[
\| \overline \mu_t- \overline \pi\|_{\mathrm{TV}}\to 0
\qquad \text{as } t\to\infty,
\]
for any initial distribution \(\overline \mu_0\in\cP(\cX^p)\).

\subsection{Setting}

Let \( (\cX, \scal{\cdot}{\cdot}_\cX) \) be a real separable Hilbert space, and let \( (X_t)_{t \in \bbN_0} \) be a time-homogeneous Markov process of order \(p\), with transition kernel \(P:\cX^p\times\cB(\cX)\to[0,1]\). Let
\( (\overline X_t)_{t\in\bbN_0} \) be the corresponding lifted process~\eqref{liftd_MP}, with transition kernel \(Q\) and initial distribution \(\overline \mu_0\in\cP(\cX^p)\). Assume that there exists \(R<\infty\) such that, for all \(\overline x\in\cX^p\),
\begin{equation}\label{2mom_p}
\int \nor{x'}_{\cX}^2 \, P(\overline x, dx') \le R^2.
\end{equation}
The optimal prediction function in this setting is \( f_\star : \cX^p \to \cX \), given by
\[
f_\star(\overline x) = \int x' \, P(\overline x, dx'),
\qquad \overline x\in\cX^p,
\]
where the integral is in the sense of Bochner. The goal is to estimate the optimal prediction function from a single trajectory \(x_0,\dots,x_T\) of the process \( (X_t)_{t \in \bbN_0} \).  Assuming the lifted process is ergodic with invariant measure 
\(\overline \pi\in\cP(\cX^p)\), see Assumption~\ref{UGE_p}, the quality of an empirical estimator \(\wh f\) is measured by
\begin{equation}
\label{eq:excess-risk-l2_p}
\cE(\wh f, f_\star)
=
\int_{\cX^p}
\nor{\wh f(\overline x)-f_\star(\overline x)}_{\cX}^2
\,\overline \pi(d\overline x).
\end{equation}

The observations made for first-order systems adapt to this setting with the obvious changes. The optimal prediction function can be written as 
\[
f_\star(\overline x)
=
\bbE[X_{t+p}\mid \overline X_t=\overline x].
\]
For every \(\overline x\in\cX^p\),
 \(f_\star(\overline x)\) minimizes the instantaneous least squares risk
\[
\bbE\!\left[
\nor{X_{t+p}-f(\overline X_t)}_\cX^2
\,\middle|\,
\overline X_t=\overline x
\right]
=
\int \nor{x'-f(\overline x)}_\cX^2\,P(\overline x,dx').
\]
Both \(f_\star\) and the {instantaneous risk}  are well defined by Condition~\eqref{2mom_p}, which is taken for simplicity and could be relaxed by extending the reasoning explained for scalar states. 

Moreover, if the lifted process has invariant measure \(\overline \pi\), we define the expected risk
\[
\cR(f)
= \int \nor{x'-f(\overline x)}_\cX^2 \,P(\overline x,dx')\,\overline \pi(d\overline x).
\]
This functional is well defined on
\[
L^2(\overline\pi, \cX^p;\cX)
:= \left\{
f:\cX^p \to \cX:
\nor{f}_{\overline \pi}^2
:= \int \nor{f(\overline x)}_\cX^2\,\overline \pi(d\overline x)
<\infty \right\}
\]
by Condition~\eqref{2mom_p}, and again a weaker moment condition would suffice, see~\eqref{pi2mom}. Then \(f_\star\) is the minimizer of the expected risk in \(L^2(\overline \pi, \cX^p;\cX)\), and
\[
\cE(f,f_\star)=\cR(f)-\cR(f_\star),
\qquad f\in L^2(\overline \pi, \cX^p;\cX).
\]
As shown next, the least squares estimator naturally generalizes to this higher-order setting.

\subsection{Least squares estimator}

The discussion closely follows that for first order systems with vector-valued states. 
We consider nonlinear estimators of the form
\be\label{lin_est_vv_op}
f(\overline x) = W^*\Phi(\overline x), \qquad \overline x \in \cX^p,
\ee
where \( (\cH, \inner{\cdot}{\cdot}) \) is a separable Hilbert space, \( \Phi : \cX^p \to \cH \) a measurable map, and \( W \in \cL_2(\cX, \cH)\). Note that the main difference here is that the input is a  tuple $\overline x\in \cX^p$ and the domains and ranges of $\Phi$ and $W$ are updated accordingly. 
Given a  truncated trajectory $x_0, \dots, x_T$, for each $t= 0, \dots T-p$ let 
\be\label{augm_state}
\overline x_t = (x_t, \ldots, x_{t+p-1}). 
\ee
Then, estimator coefficients are computed via regularized least squares by minimizing
\be\label{rls_vv_high_ord}
\frac{1}{T-p+1} \sum_{t=0}^{T-p} \nor{x_{t+p} - W^*\Phi(\overline x_t)}_{\cX}^2
+ \la \nor{W}_{\cL_2(\cX, \cH)}^2, \qquad \la > 0 .
\ee
The error is measured by the norm of $\cX$ and the regularization term is the (squared) Hilbert--Schmidt norm. The unique solution of the above problem is denoted by \( \wh W_\la \), and the corresponding estimator by \( \wh f_\la \). By standard computations \( \wh W_\la \) can be written as
\be\label{ridgesol_p}
\wh W_\la = (\wh \Sigma + \la I)^{-1} \wh h ,
\ee
where
\be\label{sigmah_p}
\wh \Sigma = \frac{1}{T-p+1} \sum_{t=0}^{T-p} \Phi(\overline x_t) \otimes \Phi( \overline x_t),
\qquad
\wh h = \frac{1}{T-p+1} \sum_{t=0}^{T-p} \Phi( \overline x_t) \otimes x_{t+p}.
\ee
As in the scalar case, the above expression can be computed if  \( \cH \) is finite dimensional.
Moreover, for any kernel \( k:\cX^p\times \cX^p\to \bbR  \) such that $k(\overline x, \overline y)= \inner{\Phi(\overline x)}{\Phi( \overline y)}$ for every $\overline x, \overline y \in \cX^p$, the solution can also be written  for all $ \overline x \in \cX^p$ as
\[
\wh f_\la(\overline x)
= \sum_{t=0}^{T-p} C_t \, k(\overline x_t, \overline x), 
\]
where \( C = (C_0,\ldots,C_{T-p}) \in \cX^{T-p+1} \) is given by \( C = (\wh K + (T-p+1) \la I)^{-1} \wh y \), with  \( \wh y = (x_p,x_{p+1}, \dots,x_T) \in \cX^{T-p+1} \) and \( \wh K\)  is a linear operator from $\cX^{T-p+1}$ to $\cX^{T-p+1}$ defined, for every \(C\in\cX^{T-p+1}\), by
\[
(\wh K C)_t=\sum_{s=0}^{T-p} k(\overline x_t, \overline x_s)C_s,
\qquad t=0,\ldots,T-p.
\]

As before, online computations can be derived straightforwardly. We conclude, as in the vector-valued setting, with a few remarks on possible choices of the feature map \(\Phi\).

\paragraph{Order-\(p\) autoregressive models}

The basic observation is that the structure of the augmented states~\eqref{augm_state} offers an opportunity to design ad hoc models.

An augmented state $\overline x= (x_{1}, \dots, x_p)\in \cX^p$ can be seen as one vector, but also as a collection of $p$ vectors in $\cX$.
Following this latter perspective, one can consider maps
$\Phi_j:\cX\to \cH_j$, \( j=1,\dots,p\), where each \(\cH_j\) is a Hilbert space. Then we can  let
\[
\cH=\bigoplus_{j=1}^p \cH_j, 
\]
and define $\Phi: \cX^p\to \cH$ by
\[
\Phi(\overline x)=\big(\Phi_1(x_1),\dots,\Phi_p(x_p)\big),
\qquad \overline x=(x_1,\dots,x_p).
\]
For $W_j\in \cL_2(\cX, \cH_j)$, \( j=1,\dots,p\), define $W:\cX^p\to\cH$ by
\[
W\overline x=(W_1x_1,\dots,W_px_p).
\]
Then \(W\in \cL_2(\cX^p,\cH)\) and 
\[
f(\overline x)
= W^*\Phi(\overline x)
= \sum_{j=1}^p W_j^* \Phi_j(x_j).
\]
When, for all $j=1,\dots,p$, $\cH_j=\cX=\bbR^d$ and the maps $\Phi_j$ are identities, {the above model becomes the prediction function associated with a vector autoregressive model of order \(p\), or \( \mathrm{VAR}(p) \)} \citep{lim2021time}
\[
f(\overline x)
= \sum_{j=1}^p W_j^* x_j.
\]
It becomes nonlinear when more general maps are considered.

After discussing  specific models for estimating higher-order systems, we  go back to the general model to state and discuss corresponding learning guarantees.

\subsection{Learning guarantees}

The learning guarantees for higher-order systems have the same structure as those obtained for first-order systems with vector-valued states. The main difference is that the  inputs are the augmented states
\(\overline X_t\in\cX^p\), which form a first-order Markov process with transition kernel \(Q\), and that a trajectory \(x_0,\dots,x_T\) provides \(T-p+1\) input-output pairs.

\begin{assumption}[Moment condition and boundedness]
\label{bound_p}
Assume that there exist constants \(M,R>0\), \(\sigma>0\), and \(\kappa>0\) such that Condition~\eqref{2mom_p} holds and, for every \(\overline x\in\cX^p\),
\be\label{res_bound_p}
\int_{\cX}
\nor{x'-f_\star(\overline x)}_{\cX}^m
,P(\overline x,dx')
\le
\frac{m!}{2}\sigma^2M^{m-2},
\qquad \forall m\ge 2.
\ee
Moreover, assume that
\be\label{bound_phi_p}
\nor{\Phi(\overline x)}\le\kappa,
\qquad \forall \overline x\in\cX^p.
\ee
\end{assumption}

Well specified systems are described by the following assumption.

\begin{assumption}[Well specification]
\label{wellspec_p}
Assume that there exists \(W_\star\in\cL_2(\cX,\cH)\) such that
\[
f_\star(\overline x)
= W_\star^*\Phi(\overline x),
\qquad \forall \overline x\in\cX^p.
\]
\end{assumption}

The above assumptions are the direct analogues of Assumptions~\ref{bound_vv} and~\ref{wellspec_vv}, with the state \(x\) replaced by the augmented state \(\overline x\).

\begin{theorem}[Rates for higher-order systems]
\label{thm:wellspecsys_p}
Let Assumptions~\ref{UGE_p}, \ref{bound_p}, and~\ref{wellspec_p} hold. For every \(\delta\in(0,1)\), if \(T \ge p \) and 
\[
\lambda
\ge \frac{8C\kappa^2}{1-\rho}
\left( \sqrt{\frac{2\log(6/\delta)}{T-p+1}} + \frac{1}{T-p+1} \right),
\]
then, with probability at least \(1-\delta\),
\begin{equation}
\label{eq:sample-rate-bound-simplified_p}
\cE\bigl(\wh f_\lambda,f_\star\bigr)
\lesssim
\frac{C^2D}{(1-\rho)^2}
\left(\frac{2\log(6/\delta)}{\lambda(T-p+1)} + \frac{1}{\lambda(T-p+1)^2}\right)
+ \lambda \nor{W_\star}_{\cL_2(\cX,\cH)}^2,
\end{equation}
where the constant \(D\) is defined in the proof.
In particular, setting
\[
\lambda=\lambda_T\asymp (T-p+1)^{-1/2},
\]
with probability at least \(1-\delta\),
\[
\cE\bigl(\wh f_{\lambda_T},f_\star\bigr)
\lesssim
\frac{C^2D}{(1-\rho)^2}
\left(\frac{2\log(6/\delta)}{\sqrt{T-p+1}}+\frac{1}{(T-p+1)^{3/2}}\right)
+ \frac{\nor{W_\star}_{\cL_2(\cX,\cH)}^2}{\sqrt{T-p+1}}.
\]
\end{theorem}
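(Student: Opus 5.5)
The plan is to reduce the statement to the vector-valued first-order result, Theorem~\ref{thm:wellspecsys_vv}, applied to the lifted process $(\overline X_t)_{t\in\bbN_0}$ from~\eqref{liftd_MP}.

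Set $T' := T-p+1 \ge 1$. The observed trajectory $x_0,\dots,x_T$ yields the lifted states $\overline x_0,\dots,\overline x_{T-p}$, which form a truncated trajectory of length $T'$ of a first-order Markov process on $\cX^p$ with kernel $Q$. The responses are $x_{t+p}$, which is the last coordinate of $\overline x_{t+1}$. Thus the pairs $(\overline X_t, X_{t+p})$ play exactly the role of $(X_t, X_{t+1})$ in Section~\ref{sec:vvds}, except that the output is read through the coordinate projection $\overline x \mapsto$ last component. Equivalently, the conditional law of the output given $\overline X_t=\overline x$ is $P(\overline x,\cdot)$. With this identification:
\begin{itemize}
\item Assumption~\ref{UGE_p} is Assumption~\ref{UGE} for $Q$ with invariant measure $\overline\pi$.
\item Assumption~\ref{bound_p} is Assumption~\ref{bound_vv}, since the moment conditions only involve $P(\overline x,\cdot)$ and the bound on $\Phi$ is unchanged.
\item Assumption~\ref{wellspec_p} is Assumption~\ref{wellspec_vv}.
\end{itemize}
The estimator~\eqref{ridgesol_p} is formally~\eqref{ridgesol_vv} with $T$ replaced by $T'$.

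Next I would retrace the proof of Theorem~\ref{thm:wellspecsys_vv} rather than cite it as a black box, because the output variable is not the next lifted state itself. The analysis uses the decomposition~\eqref{err_dec}. The approximation term is bounded by Proposition~\ref{prop:apprx}, giving $\lambda\nor{W_\star}^2_{\cL_2}$, and it involves only $\overline\pi$. For the estimation term, the argument of Proposition~\ref{prop:est-error} relies on two probabilistic ingredients:
\begin{itemize}
\item \emph{Covariance concentration.} We need concentration of $\wh\Sigma=\frac1{T'}\sum_t\Phi(\overline x_t)\otimes\Phi(\overline x_t)$ around $\Sigma=\int \Phi\otimes\Phi\,d\overline\pi$. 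This is the Hilbert-valued UGE concentration inequality applied to the bounded observable $\overline x\mapsto\Phi(\overline x)\otimes\Phi(\overline x)$ of the chain $Q$, and it produces the threshold on $\lambda$ with $T'$ in place of $T$.
\item \emph{Noise term.} We need to control $\frac1{T'}\sum_t \Phi(\overline x_t)\otimes (x_{t+p}-f_\star(\overline x_t))$. This is a martingale difference sum with respect to the filtration $\cF_t=\sigma(X_0,\dots,X_{t+p-1})$, because $\bbE[X_{t+p}\mid\cF_t]=f_\star(\overline X_t)$ by the order-$p$ Markov property. The Bernstein-type Pinelis inequality then applies with the moment bound~\eqref{res_bound_p}.
\end{itemize}
Combining the two as in Lemma~\ref{lem:probbounds} and a union bound gives the first displayed bound with $T'$, and choosing $\lambda\asymp T'^{-1/2}$ gives the second.

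The main obstacle is checking that the covariance concentration result really applies to the lifted chain. Its proof goes through the Poisson equation for $Q$, and I need to confirm it only uses UGE of $Q$ and boundedness of the observable, not any product structure of $\cX$. A second point is the initialization: the non-stationary initial law $\overline\mu_0$ on $\cX^p$ enters only through $\sup_{\overline x}$ in UGE, so it causes no trouble. Finally, the cross term mixing $\wh\Sigma-\Sigma$ with $W_\star$ in the well-specified bound has to be handled exactly as in the scalar case, now with Hilbert--Schmidt norms. I expect no new difficulty there beyond bookkeeping of constants, which are absorbed into $D$.
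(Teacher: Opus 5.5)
Your proposal is correct and follows the paper's proof essentially step for step. Like you, the paper does not cite Theorem~\ref{thm:wellspecsys_vv} as a black box but re-runs the vector-valued argument on the lifted chain, with $\Sigma$ and $h$ defined through $\overline\pi$ and $P(\overline x,\cdot)$. The deterministic decomposition and the approximation and estimation bounds carry over unchanged. Theorem~\ref{thm:ugeimplieshoeffding} handles $\wh\Sigma-\Sigma$, Theorem~\ref{thm:pinelis} handles the martingale part, and $T$ is replaced by $T-p+1$ throughout.

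The one thing your list of probabilistic ingredients leaves implicit is that $\wh h-h$ is not just your noise sum. It also contains the centred additive functional $\frac{1}{T-p+1}\sum_{t=0}^{T-p}\bigl(\Phi(\overline x_t)\otimes f_\star(\overline x_t)-h\bigr)$. The paper controls this term (its $\eta_1$) with Theorem~\ref{thm:ugeimplieshoeffding} on the lifted chain. Alternatively, under Assumption~\ref{wellspec_p} this sum equals $(\wh\Sigma-\Sigma)W_\star$, so your covariance bound already controls it.
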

The proof of the above result is given in Appendix~\ref{sec:furth_proofs} and follows from a straightforward adaptation of the scalar- and vector-valued analysis.
The bound has the same structure as the one obtained for first-order systems with vector-valued states, with the number of available observations reduced from \(T\) to \(T-p+1\). The order \(p\) also enters implicitly through the constants \(C\) and \(\rho\), which describe the uniform geometric ergodicity of the lifted process, and possibly through the constants in Assumptions~\ref{bound_p} and~\ref{wellspec_p}. Thus, for fixed \(p\) and uniformly controlled constants, the rate remains of order \((T-p+1)^{-1/2}\).

We conclude by noting that while we only state the finite-sample rate,  universal consistency  results can be obtained by the same argument as in the scalar-valued case; see Theorem~\ref{thm:universal}.

\section{Learning dynamical systems with finite-state spaces}
In this section, we focus on the setting where the state space is finite, that is 
\( \cX = \{\mathbf x_1, \dots, \mathbf x_N\} \). We consider a {time-}homogeneous Markov process \( (X_t)_{t\in \bbN_0} \) with initial distribution \( \mu_0 \), where the transition kernel can be seen as a map \( P:\cX \times \cX \to [0,1] \) or alternatively as an \( N \times N \) transition matrix.

\subsection{Setting}
If for all \( x \in \cX \) and all \( x' \neq x'' \in \cX \),
\be\label{unique}
P(x,x') \neq P(x,x''),
\ee
then the optimal prediction function \( f_\star : \cX \to \cX \) is given, for each \(x\in \cX\), by 
\be\label{bayes_rule}
f_\star(x)=
\arg\max_{x' \in \cX } P(x, x').
\ee
The goal is to estimate $f_\star$ from a single trajectory $x_0, x_1, \dots, x_{T}$. Assuming the process to be ergodic with invariant measure \( \pi \), see Assumption~\ref{UGE}, the quality of an empirical estimator \( \wh f \) is measured by
\begin{equation}
\label{eq:excess-risk-ind}
\cE(\wh f, f_\star) = \sum_{x\in \cX} 
\mathds{1}\{ \wh f(x) \neq f_\star(x) \} \,
\pi(x).
\end{equation}
Note that here the invariant measure $\pi$ is a probability mass function. We add a few comments.

\paragraph{Margin.}
Condition~\eqref{unique} is a simplifying assumption. In particular, since $\cX$ is finite, it implies that there exists \( \gamma>0 \) such that for all \( x\in\cX \),
\be\label{gap}
P(x,f_\star(x)) - \max_{x'\neq f_\star(x)} P(x,x') \ge \gamma.
\ee
This is akin to margin conditions in multi-category classification \citep{tsybakov2004optimal,vigogna2022multiclass} and gap conditions in bandits 
Condition~\eqref{gap} can be restrictive for some common examples like random walk. \citep{lattimore2020bandit}.
As discussed later, it can be avoided by considering an error metric other than~\eqref{eq:excess-risk-ind}.

\paragraph{Optimal prediction function.}
The optimal prediction function \( f_\star \) is the analogue of the Bayes classification rule in multi-category classification. In particular, it minimizes the \emph{instantaneous} risk
\[
\sum_{x' \in \cX} \mathds{1}\{ x' \neq f(x) \} \, P(x, x'),
\qquad x \in \cX.
\]

\paragraph{Risks and ergodicity.}
For a  process  with invariant measure \(\pi\), let
\[
\cR(f) = \sum_{x \in \cX} \sum_{x' \in \cX}
\mathds{1}\{ x' \neq f(x) \} \, \pi(x)\, P(x, x').
\]
It is easy to see that \( f_\star \) is a minimizer of \( \cR \). Under Condition~\eqref{unique} (and hence~\eqref{gap}), it is also easy to see that 
\[
 \cE(f, f_\star) \le \frac 1 \gamma \left(\cR(f) - \cR(f_\star)\right).
\]
Moreover, in general, for all \( f : \cX \to \cX \),
\[
\cR(f) - \cR(f_\star)\le \cE(f, f_\star).
\]
Hence, considering \( \cR(f) - \cR(f_\star) \) provides a weaker way to measure the error of a prediction function \( f \), and allows avoiding Condition~\eqref{unique} and hence~\eqref{gap}. We will further comment on this choice later.

The above comments highlight many analogies between one-step prediction with finite-state spaces and multi-category classification. The main difference is that in one-step prediction with finite states both input and output coincide with the same finite-state space.

A possible way to derive an empirical estimator is to use the data to obtain an approximation \( \wh P \) of the transition matrix and plug it into Equation~\eqref{bayes_rule} to obtain a prediction function. Here, we follow a different route, exploring a surrogate loss function approach common in multiclass learning. Again, we focus on least squares.

\subsection{Surrogate least squares approach}
\label{sec:surrogate-least-squares}

Next, we describe an approach inspired by multicategory classification and structured prediction; see, e.g., \citet{mroueh2012multiclass, ciliberto2020general}.

To adopt a least-squares approach, we introduce an encoder--decoder pair.
\begin{equation}
\label{eq:finite-states-encoder-decoder}
E: \cX \to \bbR^q,\quad D: \bbR^q\to \cX,
\end{equation}
where $q$ is the encoder dimension.
The simplest example is the {\em one-hot encoding}, that corresponds to \(q=N\) and for \(j=1, \dots, N\)
\[
E(\mathbf x_j)= e_j,
\]
where \(e_1, \dots, e_N\) is the canonical basis in \(\bbR^N\). The corresponding decoding is given, for any \(s \in \bbR^N\), by
\[
D(s)\in \argmax_{x\in \cX} \langle E(x), s\rangle_{\bbR^N},
\]
where one maximizer is taken arbitrarily in case of ties. Given an encoding and following the discussion for vector-valued states, the optimal prediction function is defined as
\be\label{sur_gstar}
g_\star(x)= \sum_{x'\in \cX} E(x') P(x,x'),
\qquad x \in \cX.
\ee
Using a surrogate least squares approach we aim at estimating $g_\star$ from an encoded truncated trajectory $E(x_0), \dots, E(x_{T})$. We add several comments.

\paragraph{Optimal prediction functions and risks.}
Given the encoder, from the scalar and vector-valued settings discussed in previous sections, we know that the prediction function~\eqref{sur_gstar} minimizes the \emph{instantaneous} least squares risk
\[
\sum_{x'\in\cX} \nor{E(x') - g(x)}_{\bbR^q}^2 \, P(x,x'),
\qquad x\in\cX.
\]
Moreover, for an uniformly geometrically ergodic process (see Assumption~\ref{UGE}), it minimizes the risk
\[
\cI(g)
:= \sum_{x\in\cX}\sum_{x'\in\cX} \nor{E(x') - g(x)}_{\bbR^q}^2 \, \pi(x)\, P(x,x'),
\qquad g: E(\cX) \to \bbR^q.
\]
and the following relation holds
\[
\cI(g) - \cI(g_\star)
= \nor{g - g_\star}_\pi^2,
\]
where \(\nor{g}_\pi^2 = \sum_{x\in\cX} \nor{g(x)}_{\bbR^q}^2 \,\pi(x)\).

\paragraph{Explicit formulas via the transition matrix.}
In the finite-state setting we can derive more explicit formulas using the transition matrix. In particular, considering the one-hot encoding, then for every \(x\in \cX\),
\be\label{WS_finitestate}
g_\star(x) = P^\top E(x),
\ee
where \(P\) is the transition matrix. As we discuss later, this observation ensures that Assumption~\ref{wellspec_vv} can always be satisfied in this setting taking \(W_\star= P\). Further insights can be derived restricting the minimization of the risk \(\cI\) to functions of the form \(g(x)= W^\top E(x)\), \(x \in \cX\). It is easy to see that the optimal $W_\star$ satisfies the normal equation 
\[
\Sigma W_\star= h,
\]
where
\[
\Sigma = \operatorname{diag}(\pi), \qquad h=\operatorname{diag}(\pi)\,P,
\]
where we identified \(\pi\) with the vector \((\pi(\mathbf x_1),\dots,\pi(\mathbf x_N)) \in \bbR^N\). Moreover,
the matrix \( P \) solves the normal equation \( \Sigma W = h \). If, in addition, \( \pi \) has full support, for example, if the finite-state chain is \emph{irreducible}, then \( \Sigma \) is invertible and \( P \) is the unique solution.

\paragraph{Surrogate consistency and comparison inequalities}
The following lemma relates the optimal prediction functions and corresponding error measures when considering one-hot encoding.

\begin{lemma}
\label{comp_ineq1}
If \(E\) is the one-hot encoding and \(D\) the corresponding decoding, then for every \( x \in \cX \),
\be\label{fisher}
f_\star(x) = D \circ g_\star(x).
\ee
Moreover, if Condition~\eqref{unique}, and hence~\eqref{gap}, holds, then for all \( g:\cX\to \bbR^N \), and \( f=D\circ g \), the following inequality holds
\begin{equation}\label{eq:boundE}
\cE(f,f_\star)
\le \frac{2}{\gamma^2} \bigl(\cI(g)-\cI(g_\star)\bigr).
\end{equation}
\end{lemma}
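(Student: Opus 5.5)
Both claims reduce to pointwise statements at each state \(x\in\cX\), after which we average against \(\pi\). The key structural fact is that, with one-hot encoding, \(g_\star(x)=\sum_{x'}e_{x'}P(x,x')\) is exactly the probability vector \(P(x,\cdot)\in\bbR^N\). Its \(j\)-th coordinate is \(\langle E(\mathbf x_j), g_\star(x)\rangle=P(x,\mathbf x_j)\).

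\textbf{Fisher consistency~\eqref{fisher}.} The decoder evaluated at \(g_\star(x)\) returns an element of \(\argmax_{x'}P(x,x')\). Under Condition~\eqref{unique} this maximizer is unique, so no tie-breaking is involved and \(D(g_\star(x))=f_\star(x)\). Without~\eqref{unique}, the identity holds with the same tie-breaking convention used in~\eqref{bayes_rule}. I would state this caveat briefly.

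\textbf{Comparison inequality~\eqref{eq:boundE}.} First recall the identity \(\cI(g)-\cI(g_\star)=\nor{g-g_\star}_\pi^2=\sum_x\pi(x)\nor{g(x)-g_\star(x)}^2\), proved in the paragraph before the lemma (expand the square; the cross term vanishes because \(g_\star(x)\) is the conditional mean). Next compare with \(\cE(f,f_\star)=\sum_x\pi(x)\mathds 1\{f(x)\neq f_\star(x)\}\). It then suffices to prove the pointwise implication
\[
f(x)\neq f_\star(x)\ \Longrightarrow\ \nor{g(x)-g_\star(x)}^2\ge \gamma^2/2 .
\]
Multiplying by \(\pi(x)\) and summing then gives \(\cE(f,f_\star)\le \frac{2}{\gamma^2}\nor{g-g_\star}_\pi^2\). (A sharper constant may be available; \(2/\gamma^2\) is what we aim for.)

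\textbf{Pointwise step.} Fix \(x\) and write \(s=g(x)\), \(p=g_\star(x)\), \(i=f_\star(x)\), \(j=f(x)\neq i\). By definition of the decoder, \(s_j\ge s_i\), while by~\eqref{gap}, \(p_i-p_j\ge\gamma\). Hence
\[
\gamma\le (p_i-p_j)-(s_i-s_j)=(p_i-s_i)+(s_j-p_j)\le |p_i-s_i|+|p_j-s_j|\le\sqrt2\,\bigl(|p_i-s_i|^2+|p_j-s_j|^2\bigr)^{1/2}\le\sqrt2\,\nor{s-p},
\]
using Cauchy--Schwarz on two coordinates. Squaring gives \(\nor{s-p}^2\ge\gamma^2/2\).

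This pointwise margin argument is the only real step, and it is elementary. The main thing to watch is that ties in the decoder do not break it. Even when \(s_j=s_i\) and \(D\) picks \(j\), the inequality \(s_j\ge s_i\) still holds, so the argument goes through.
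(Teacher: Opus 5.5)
Your proof is correct and follows essentially the same route as the paper. The paper pairs $g_\star(x)-g(x)$ with $E(f_\star(x))-E(f(x))$ and applies Cauchy--Schwarz using $\nor{E(f_\star(x))-E(f(x))}\le\sqrt2$; this is exactly your two-coordinate bound $(p_i-s_i)+(s_j-p_j)\le\sqrt2\,\nor{s-p}$, and it is followed by the same averaging against $\pi$. Your remark that~\eqref{fisher} requires Condition~\eqref{unique} or a consistent tie-breaking convention is a small but accurate refinement of the paper's statement.
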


\begin{remark}[Comparison inequality without a margin condition]
If we drop Condition~\eqref{unique} and ~\eqref{gap} the following inequality holds for every function \( g: \cX \to \bbR^q \) and \( f := D \circ g \),
\begin{equation}\label{eq:boundR}
\cR(f)-\cR(f_\star)
\le \sqrt{2}\,\nor{g-g_\star}_\pi
= \sqrt{2 \bigl(\cI(g)-\cI(g_\star)\bigr)}.
\end{equation}
This inequality has the same structure as  known comparison inequalities between the excess prediction risk and the excess least squares surrogate risk in multicategory classification and structured prediction; see, e.g.,~\citet{mroueh2012multiclass, ciliberto2020general}.
\end{remark}

\subsection{Least squares estimator}

Given a truncated trajectory \(x_0,\dots,x_T\), consider an encoder/decoder pair \(E:\cX\to \bbR^q\), \(D:\bbR^q\to \cX\), a feature map \(\Phi:\bbR^q\to \cH\), where \(\cH\) is a Hilbert space. 

We consider surrogate estimators of the form \(g(E(x))=W^*\Phi(E(x))\), \(x\in\cX\), where \(W\in \cL_2(\bbR^q,\cH)\). The corresponding decoded estimator is \(f=D\circ g\).

The estimator is computed via regularized least squares by minimizing
\be\label{ridge_finite}
\frac1T\sum_{t=0}^{T-1}\|E(x_{t+1})-W^*\Phi(E(x_t))\|_{\bbR^q}^2
+\lambda \|W\|_{\cL_2(\bbR^q,\cH)}^2,
\qquad \lambda>0.
\ee
We denote by \(\wh W_\lambda\) the unique minimizer, with corresponding estimators \(\wh g_\lambda(E(x))=\wh W_\lambda^*\Phi(E(x))\) and \(\wh f_\lambda=D\circ \wh g_\lambda\). Before discussing learning guarantees for the above estimator, we discuss how it can be written in a more explicit way that allows us to draw comparisons with basic transition matrix estimators.

\paragraph{Explicit formulas and comparison with empirical transition matrix estimate.}
Given a truncated trajectory \(x_0,\dots,x_{T}\), for \(i,j\in\{1,\dots,N\}\) consider the quantities
\[
n(i)= \sum_{t=0}^{T-1} \mathds{1}\{x_t=\mathbf x_i\},
\qquad
n(i,j)= \sum_{t=0}^{T-1} \mathds{1}\{x_t=\mathbf x_i,\ x_{t+1}=\mathbf x_j\}.
\]
Then we can define an empirical estimate of the transition matrix \(P\) by
\[
\wh P_{ij}=
\begin{cases}
\displaystyle \frac{n(i,j)}{n(i)}, & \text{if } n(i)>0,\\[1ex]
0, & \text{if } n(i)=0.
\end{cases}
\]
A simple estimator is then 
\( \wh f (\mathbf x_i)= \argmax_{j=1, \dots, N} \wh P _{ij} \), with ties broken arbitrarily. Next we contrast this estimator with the least squares estimator defined by~\eqref{ridge_finite}, when considering one-hot encoding and $\Phi$ to be the identity in $\bbR^N$. Towards this end recall that from the discussion of vector-valued states we know that, the minimizer of Problem~\eqref{ridge_finite} can be written as
\be\label{ridgesol_vv_fs}
\wh W_\la = (\wh \Sigma + \la I)^{-1} \wh h ,
\ee
and when choosing $\Phi$ to be the identity
\be\label{sigmah_finite}
\wh \Sigma = \frac{1}{T} \sum_{t=0}^{T-1} E(x_t) \otimes E(x_t),
\qquad
\wh h = \frac{1}{T} \sum_{t=0}^{T-1} E(x_t) \otimes E(x_{t+1}).
\ee
Then it is easy to see that 
\be\label{sigmah_finite_count}
\wh \Sigma = \frac1T \operatorname{diag}\bigl(n(1),\dots,n(N)\bigr),
\qquad
\wh h = \frac1T (n(i,j))_{i,j=1}^N, 
\ee
so that for all $i,j=1, \dots, N$
\[
(\wh W_\la)_{ij}=\frac{n(i,j)}{n(i)+\la T},
\]
or equivalently 
\[
(\wh W_\lambda)_{ij}=
\frac{n(i)}{n(i)+\lambda T}\,\wh P_{ij}.
\]
A few comments are in order. If \(n(i)>0\) for all \(i=1,\dots,N\), and we take \(\la=0\), then \(\wh W_\la=\wh P\). The comparison is more interesting when some \(n(i)\) can be zero, as may in particular happen when \(N\) is larger than \(T\). Indeed, in this case, the entries of \(\wh P\) corresponding to \(n(i)=0\) are set to zero by convention, whereas \(\wh W_\la\) is still well defined for every \(\la>0\). By replacing the empirical normalization \(n(i)\) by \(n(i)+\la T\), states visited only a few times are shrunk more strongly, while states with many observations are affected less. 

\subsection{Learning guarantees}
In this section we derive learning guarantees for the least squares estimator in the finite-state setting.
For simplicity, we consider a one-hot encoding and the corresponding decoding, and take \(\Phi\) to be the identity map. While the analysis extends beyond this setting, these are basic and natural choices that allow us to identify some features of the finite-state case.
We leave to future work more general treatments based on more informed algorithmic choices.

As discussed in the previous sections, the surrogate least squares approach yields guarantees in two steps: first, comparison inequalities relate the finite-state one-step prediction error to the surrogate least squares error, see Lemma~\ref{comp_ineq1}; second, the least squares error is controlled by error estimates analogous to those for vector-valued state prediction. We begin by reviewing how the assumptions in this latter setting adapt to the finite-state setting. 

Recall that, with one-hot encoding we have \( q = N \). Then, note that Assumption~\ref{2mom_vv} is always satisfied with \(R=1\), since, for all \(x\in \cX\), \(\nor{E(x)}=1\), so that
\[
\sum_{x'\in \cX} \nor{E(x')}_{\bbR^N}^2P(x,x')=1.
\]
Similarly, Assumption~\ref{bound_phi_vv} becomes
\[
\nor{\Phi(E(x))}_{\bbR^N} = \nor{E(x)}_{\bbR^N} \le \kappa
\]
so that it is always satisfied with \(\kappa=1\). We still need the following moment condition.

\begin{assumption}[Moment condition] \label{bound_finite} 
Assume that there exist constants \(M,\sigma>0\) such that, for all \(x \in \cX\),
{
\be\label{res_bound}
\sum_{x' \in \cX}\nor{E(x') - g_\star(x)}_{\bbR^N}^m\,P(x, x')
\le
\frac{m!}{2}\,\sigma^2 M^{m-2},
\qquad \forall m \ge 2 .
\ee
}
\end{assumption}
One can immediately derive some basic bounds on \(M\) and \(\sigma\). Since, for all \(x,x'\in\cX\),
\[
\nor{E(x')-g_\star(x)}_{\bbR^N}^2\le 2,
\]
and
\[
\sum_{x' \in \cX}\nor{E(x')-g_\star(x)}_{\bbR^N}^2\,P(x,x')
= 1-\sum_{x'\in\cX}P(x,x')^2
\le 1,
\]
one can always take \(M=\sqrt 2\) and \(\sigma=1\). The above equality follows by noting that
\[
g_\star(x)
=\bigl(P(x,\mathbf x_1),\ldots,P(x,\mathbf x_N)\bigr).
\]
These worst-case bounds can be refined depending on the transition kernel. For example, if \(P(x,x')=1/N\) for every \(x,x'\in\cX\), then one can take
\[
M=\sigma=\sqrt{1-\frac1N}.
\]
Keeping these possible choices in mind, we next keep the values generic. Finally, as seen from Equation~\eqref{WS_finitestate}, Assumption~\ref{wellspec_vv} is satisfied with \(W_\star=P\), the transition matrix.
Note that the Hilbert--Schmidt/Frobenius norm of \(P\) satisfies the basic bounds
\begin{equation}
\label{eq:p-hs-upper}
1 \le \nor{P}_{\cL_2(\bbR^N)} \le \sqrt N .
\end{equation}
The lower bound is attained when all transition probabilities are uniform, while the upper bound is attained when each transition is deterministic. This follows by noting that 
\[
\nor{P}_{\cL_2(\bbR^N)}^2
= \sum_{x,x'\in \cX} P(x,x')^2
= \sum_{x\in\cX} \nor{P(x,\cdot)}_{\bbR^N}^2,
\]
and \( 1/N \le \nor{P(x,\cdot)}_{\bbR^N}^2 \le 1 \).

Given the above discussion we can state the main result of this section.

\begin{theorem}[Rates for finite-state spaces] \label{thm:fs}
Let \( \wh g_\lambda \) be an estimator corresponding to the solution of~\eqref{ridge_finite} obtained using the one-hot encoding \(E:\cX\to\bbR^N\) and the identity feature map \(\Phi(x) = x\). Set \( \wh f_\lambda := D \circ \wh g_\lambda \), where \( D \) is the corresponding decoder.

Suppose that Assumptions~\ref{UGE} and~\ref{bound_finite} hold. Assume, in addition, Condition~\eqref{unique} holds and \(\gamma>0\) is the corresponding margin defined in~\eqref{gap}. Let \(\delta\in(0,1)\), if
\[
\lambda
\geq \frac{8C}{1-\rho} \left( \sqrt{\frac{2\log(6/\delta)}{T}} + \frac{1}{T} \right),
\]
then, with probability at least \(1-\delta\),
\[
\cE(\wh f_\lambda,f_\star)
\lesssim
\frac{ C^2 D }{\gamma^2 (1-\rho)^2} \left(\frac{2\log (6/\delta)}{\la T}+\frac{1}{\la T^2}\right)
+ \frac{\lambda}{\gamma^2} \nor{P}_{\cL_2(\bbR^N)}^2.
\]
In particular, setting \(\lambda=\lambda_T\asymp T^{-1/2}\) sufficiently large so that the condition above is satisfied gives with probability at least \( 1 - \delta \),
\[
\cE\big(\wh f_{\la_T}, f_\star\big)
\lesssim
\frac{ C^2 D }{\gamma^2 (1-\rho)^2} \left(\frac{2\log (6/\delta)}{\sqrt{T}}+\frac{1}{T^{3/2}}\right)
+ \frac{\nor{P}_{\cL_2(\bbR^N)}^2}{\gamma^2 \sqrt{T}}.
\]
\end{theorem}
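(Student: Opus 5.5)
The proof reduces Theorem~\ref{thm:fs} to two earlier results: the comparison inequality of Lemma~\ref{comp_ineq1}, and the vector-valued rate of Theorem~\ref{thm:wellspecsys_vv}, specialized to $\cX=\bbR^N$.

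\emph{Step 1: lift the chain to one-hot states.} Consider the encoded process $Y_t:=E(X_t)\in\bbR^N$. Since $E$ is a bijection onto the canonical basis, $(Y_t)_t$ is a time-homogeneous Markov chain whose kernel is the push-forward of $P$. It satisfies Assumption~\ref{UGE} with the same $C,\rho$, and its invariant measure is $E_\#\pi$, because total variation is preserved under the bijection. The regression function of this chain is $y\mapsto \sum_{x'}E(x')P(E^{-1}(y),x')=g_\star(E^{-1}(y))$. With $\Phi=\mathrm{id}$, the estimator~\eqref{ridge_finite} is exactly the vector-valued estimator~\eqref{rls_vv} applied to the trajectory $Y_0,\dots,Y_T$.

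\emph{Step 2: check the vector-valued assumptions.} As discussed before the statement:
\begin{itemize}
\item condition~\eqref{2mom_vv} holds with $R=1$;
\item condition~\eqref{bound_phi_vv} holds with $\kappa=1$;
\item condition~\eqref{res_bound_vv} is Assumption~\ref{bound_finite};
\item Assumption~\ref{wellspec_vv} holds with $W_\star=P$ by~\eqref{WS_finitestate}, since $g_\star(x)=P^\top E(x)=W_\star^*\Phi(E(x))$. Here the adjoint with respect to the Euclidean structure is the transpose.
\end{itemize}
The threshold on $\lambda$ in Theorem~\ref{thm:wellspecsys_vv} with $\kappa=1$ is exactly the one assumed here. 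Theorem~\ref{thm:wellspecsys_vv} therefore gives, with probability at least $1-\delta$,
\[
\nor{\wh g_\lambda-g_\star}_\pi^2\lesssim \frac{C^2D}{(1-\rho)^2}\left(\frac{2\log(6/\delta)}{\lambda T}+\frac{1}{\lambda T^2}\right)+\lambda\nor{P}_{\cL_2(\bbR^N)}^2 .
\]
The excess risk is measured w.r.t.\ $E_\#\pi$, which is the same as $\nor{\cdot}_\pi$ on $\cX$.

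\emph{Step 3: decode.} By the identity $\cI(g)-\cI(g_\star)=\nor{g-g_\star}_\pi^2$, and by~\eqref{eq:boundE} with $f=\wh f_\lambda=D\circ\wh g_\lambda$,
\[
\cE(\wh f_\lambda,f_\star)\le \frac{2}{\gamma^2}\nor{\wh g_\lambda-g_\star}_\pi^2 .
\]
Absorbing the factor $2$ into $\lesssim$ gives the first bound. Choosing $\lambda_T\asymp T^{-1/2}$ with a large enough constant satisfies the threshold condition for $T$ large, and substituting it gives the second bound.

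\emph{Main obstacle.} The one point that is not purely formal is Lemma~\ref{comp_ineq1} itself, if it must be justified rather than invoked. For each $x$, if $D(g(x))\neq f_\star(x)$, then some $x'\neq f_\star(x)$ has $g(x)_{x'}\ge g(x)_{f_\star(x)}$. Together with~\eqref{gap}, this forces
\[
\gamma\le \bigl(g_\star(x)-g(x)\bigr)_{f_\star(x)}-\bigl(g_\star(x)-g(x)\bigr)_{x'}\le\sqrt2\,\nor{g(x)-g_\star(x)}_{\bbR^N}.
\]
Squaring gives $\mathds 1\{f(x)\ne f_\star(x)\}\le 2\gamma^{-2}\nor{g(x)-g_\star(x)}^2$, and integrating against $\pi$ yields~\eqref{eq:boundE}.

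A secondary technical check is that Theorem~\ref{thm:wellspecsys_vv} applies when the state space is the finite subset $E(\cX)\subset\bbR^N$. Its proof only uses the chain through the concentration inequalities under Assumption~\ref{UGE}, so this restriction is harmless.
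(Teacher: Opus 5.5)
Your proposal is correct and follows the paper's proof essentially step for step. You encode the chain as $Y_t=E(X_t)$ (the paper invokes Lemma~\ref{lem:emb-markov-conv} for this) and check the vector-valued assumptions with $R=\kappa=1$ and $W_\star=P$, noting, as Remark~\ref{rem:effective-boundedness} does, that boundedness is only needed on $E(\cX)$; you then apply Theorem~\ref{thm:wellspecsys_vv} and decode via Lemma~\ref{comp_ineq1}, whose proof you sketch along the same lines as the paper.
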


The proof of the above result is given in Appendix~\ref{sec:appendix-finite-states} and leverages the result for vector-valued state spaces. Indeed, the bound strongly resembles the one given in Theorem~\ref{thm:wellspecsys_vv}. In particular, it yields the same rate of the order \(T^{-1/2}\). A first   observation regards the  (hidden) dependence on the cardinality of the state space. Indeed, under one-hot encoding, the well-specified parameter is the transition matrix \(P\), and both the approximation term and the constant \(D\) depend on \(\nor{P}_{\cL_2(\bbR^N)}\); more precisely, \(D\) contains a term of order \(\bigl(1+\nor{P}_{\cL_2(\bbR^N)}\bigr)^2\). According to Equation~\eqref{eq:p-hs-upper}, \( \nor{P}_{\cL_2(\bbR^N)}^2 \leq N \), so
the resulting bound has, in the worst case, a linear dependence on \(N\). This dependence is however adaptive to the structure of the transition matrix: it is largest for nearly deterministic transitions and smaller when the transition probabilities are more diffuse, i.e., closer to uniform. Unlike guarantees for estimating the full transition matrix uniformly over its rows~\citep{wolfer2019minimax,chan2021learning}, our bound explicitly involves neither the minimum invariant probability (i.e., \( \min_{x \in \cX} \pi(x) \)), nor a \emph{cover time} (i.e., the first time by which the trajectory has visited every state at least once). The reason is simply that the prediction error \( \cE(f,f_\star) \) is averaged with respect to \(\pi\), so that states that are rarely visited are assigned less weight. Closely related results on optimal next-state prediction are given by \citet{han2021optimal}, although they consider stationary trajectories, prediction risk using Kullback–Leibler divergence, and estimators which are tailored specifically to finite-state Markov chains. Finally, we note that while we only discuss finite sample bounds, the universal learning results in  Theorem~\ref{thm:universal} can be adapted to the finite-state setting by exploiting the derived comparison inequalities.

\section{Learning  Koopman operators}

Thus far we have considered only the {one-step prediction} problem in different settings. In this section, we consider how to learn the so-called Koopman, or Markov, operator, which can be shown to provide a complete description of the stochastic dynamics of a system. In particular, we revisit the approach proposed in \cite{kostic2022learning}, and derive guarantees for learning the Koopman operator of a uniformly ergodic system with a least squares estimator using a single trajectory.

\subsection{Setting}

Let \( (\cX, \cB(\cX)) \) be a measurable space, and let \( (X_t)_{t\in \bbN_0} \) be a homogeneous Markov process with initial distribution \(\mu_0\) and transition kernel \(P\).
Let \(\cM_b(\cX)\) be the space of real-valued bounded measurable functions on \(\cX\) with the uniform norm. The Koopman operator \(A_\star : \cM_b(\cX) \to \cM_b(\cX)\) is defined by
\be\label{koop_op}
A_\star f(x) = \int f(x') \, P(x, dx'), \qquad f \in \cM_b(\cX), \ x \in \cX.
\ee
 The goal is to estimate the Koopman operator from a single trajectory $x_0, \dots, x_T$. Assuming the process is ergodic with invariant measure \( \pi \), see Assumption~\ref{UGE}, the quality of an empirical estimator \( \wh A \) is measured by
\begin{equation}
\label{eq:excess-risk-koop_vv}
\cE_{\cF}(\wh A, A_\star) = \sup_{f\in \cF} \int (\wh A f(x) - A_\star f(x))^2 \, \pi(dx),
\end{equation}
where $\cF\subset \cM_b(\cX)$ such that the above supremum is well defined and finite.   Later, we will take $\cF$ to be a unit ball of a suitable normed space of observables.

We add a few comments.

\paragraph{Properties of the Koopman operator.}
The integral in \eqref{koop_op} is well defined for all \(f \in \cM_b(\cX)\), and moreover
\(
\nor{A_\star f}_\infty \le \nor{f}_\infty.
\)
Then, \(A_\star\) is a bounded linear operator on \(\cM_b(\cX)\), with \(\nor{A_\star}=1\). In particular, this ensures that the error measure~\eqref{eq:excess-risk-koop_vv} is well defined whenever \(\wh A : \cF\to \cM_b(\cX)\) is also a bounded linear operator.

\paragraph{Optimality of Koopman operator.}
The Koopman operator~\eqref{koop_op} can be written as 
\be\label{koop_op2}
A_\star f(x)= \bbE[f(X_{t+1}) \mid X_t=x].
\ee
In particular, for every \(f\in \cM_b(\cX)\) and every \(x\in \cX\), \(A_\star f(x)\) minimizes the {\em instantaneous} least squares risk
\be\label{koop_inst_risk}
\bbE[(f(X_{t+1})-a)^2 \mid X_t=x]
= \int (f(x')-a)^2 \, P(x,dx'),
\qquad a\in \bbR.
\ee
This risk and \(A_\star\) are well defined for all \(f\in \cM_b(\cX)\) as discussed next.

\paragraph{Risks and ergodicity.}
We next discuss analogues of the expected risk for the Koopman operator. For an ergodic process with invariant measure \( \pi \), and for every \(f \in \cF\), let
\begin{equation}
\label{eq:risk-def-koop}
\cR_f(A)
= \int \left( f(x') - A f(x) \right)^2 \,P(x, dx')\, \pi(dx).
\end{equation}
If \(A : \cF \to \cM_b(\cX)\) is bounded and linear, then the above functional is well defined. Moreover, since for every \(f \in \cF\) and every \(x \in \cX\), \(A_\star f(x)\) minimizes the instantaneous least squares risk~\eqref{koop_inst_risk}, we have
\[
\int (f(x')-A_\star f(x))^2\,P(x,dx')
\le \int (f(x')-A f(x))^2\,P(x,dx'),
\qquad x\in \cX.
\]
Integrating both sides with respect to \(\pi\), we obtain
\[
\cR_f(A_\star)\le \cR_f(A),
\]
so that \(A_\star\) minimizes \(\cR_f\) for every \(f\in \cF\). Moreover, taking the supremum over \(f \in \cF\), we can define the class-wise risk on \(\cF\) by
\be\label{F_risk}
\cR_\cF(A)=\sup_{f\in\cF}\cR_f(A).
\ee
Since \(\cR_f(A_\star)\le \cR_f(A)\) for every \(f\in\cF\), it follows that \(A_\star\) is a minimizer of \(\cR_\cF\). In this view, we could consider the error measure
\[
\cR_\cF(A)-\cR_\cF(A_\star).
\]
This choice and the error measure in~\eqref{eq:excess-risk-koop_vv} are related by the following inequality
\be\label{koop_excess}
\cR_\cF(A)-\cR_\cF(A_\star)\le \cE_\cF(A,A_\star).
\ee
Indeed, it is then easy to verify that, for every \(f \in \cF\),
\[
\int \left(Af(x)-A_\star f(x)\right)^2\,\pi(dx)
= \cR_f(A)-\cR_f(A_\star), 
\]
so that,
\[
\cE_\cF(A,A_\star)
= \sup_{f\in\cF}\big(\cR_f(A)-\cR_f(A_\star)\big).
\]
Then~\eqref{koop_excess} follows by noting that
\[
\sup_{f\in\cF}\cR_f(A)-\sup_{f\in\cF}\cR_f(A_\star)
\le \sup_{f\in\cF}\big(\cR_f(A)-\cR_f(A_\star)\big).
\]

Finally, we can also show that
\begin{equation}\label{ex2risk}
\cE_\cF(A,A_\star)
= \sup_{f\in\cF}
\bigl(\cR_f(A)-\cR_f(A_\star)\bigr)
\le \sup_{f\in\cF}\cR_f(A)
= \cR_\cF(A).
\end{equation}
Indeed,  for every \(f\in\cF\)
\[
\cR_f(A)-\cR_f(A_\star)\le \cR_f(A),
\]
because \(\cR_f(A_\star)\ge 0\),and  taking the supremum over \(f\in\cF\) yields inequality~\eqref{ex2risk}.

\paragraph{Risks and norms.}
We  note that if \(\cF\) is the unit ball in a normed space $\overline{\cF}$, then the metric~\eqref{eq:excess-risk-koop_vv} corresponds to the squared operator norm, that is,
\[
\cE_\cF(\wh A,A_\star)
= \nor{\wh A-A_\star}_{\cL(\overline{\cF},L^2_\pi)}^2.
\]
In this view, even stronger norms such as the Hilbert--Schmidt norm could be considered.

\paragraph{Observables evolution.}
The Koopman operator can be seen to describe the evolution of \emph{all} observables of the system. In this view, an observable is any \(f \in \cM_b(\cX)\), and its evolution is given exactly by \eqref{koop_op2}. Indeed, the Koopman operator uniquely determines the transition kernel, in the sense that \( P(x, B) = A_\star \mathbf 1_B(x) \); see Appendix~\ref{sec:backMP}.

\paragraph{Koopman mode decomposition.}
The study of Koopman operators allows one to analyze nonlinear dynamical systems similarly to linear ones. In particular, the spectral properties of the Koopman operator can be used to perform dimensionality reduction of the corresponding dynamical system, an approach called Koopman mode decomposition. Learning guarantees for the Koopman operator in sufficiently strong norms directly yield guarantees for the corresponding spectral quantities. In this work we focus on \(\cE_\cF\), deferring the analysis of other norms and spectral quantities.

\subsection{Least squares estimator} 

Before describing the least squares estimators, we provide some context for their derivation. 

Given a measurable \( \Phi : \cX \to \cH \), such that for all \(x\in \cX\)
\be\label{boundphi}
\nor{\Phi(x)}\le \kappa,
\ee
we restrict the space of possible observables to functions of the form 
\be\label{obs_koop_phi}
f(x) = \scal{v}{\Phi(x)}, \qquad x \in \cX.
\ee
We will assume  the map 
\begin{equation}\label{inj}
v\to \scal{v}{\Phi(\cdot)}
\end{equation}
to be injective. Note that a sufficient condition is
\[
\overline{\operatorname{span}\{ \Phi(x): x \in \cX \}}
= \cH.
\]
Then we consider estimators of the form 
\be\label{lin_est_kopp}
Af(x) = \scal{Wv}{\Phi(x)}, \qquad x \in \cX,
\ee
for every \(f\) as in~\eqref{obs_koop_phi}, and where \( W \in \cL_2(\cH)\).

With these choices we can derive an expression for the risk~\eqref{F_risk}, which is easier to approximate from data. Indeed, if for any \(f\) as in~\eqref{obs_koop_phi} we consider the risk \(\cR_f\), see~\eqref{eq:risk-def-koop}, then for \(A\) as in~\eqref{lin_est_kopp}
\[
\cR_f(A)
= \int \left( \scal{v}{\Phi( x')} - \scal{Wv}{\Phi(x)} \right)^2 \,P(x, dx')\, \pi(dx)
=  \int \left( \scal{v}{\Phi( x') - W^*\Phi(x)} \right)^2 \,P(x, dx')\, \pi(dx). 
\]
Next, let 
\be\label{F}
\cF = \{f:\cX\to \bbR~|~f= \scal{v}{\Phi(\cdot)}, ~\nor{v}\le 1\}.
\ee 
Then Condition~\eqref{boundphi} implies that 
\[
\sup_{x\in \cX}|f(x)|\le \kappa \nor{v},
\]
so that \(\cF\subset \cM_b(\cX)\).
\begin{remark}[Observables in RKHS]
Note that \( \cF \) can be seen as the unit ball in the space of considered observables~\eqref{obs_koop_phi}. Indeed, the latter is a normed space, since under the injectivity assumption~\eqref{inj}, $\nor{v}$ defines a norm for the corresponding observable~\eqref{obs_koop_phi}. It can be shown that this construction is equivalent to considering as observables the reproducing kernel Hilbert space  space of observables defined by $\Phi$, see \cite{christmann2008support}.
\end{remark}

With the above choice of \(\cF\) we can derive an upper bound for \(\cR_\cF\).

\begin{lemma}
Let \(\cF\) be as in~\eqref{F}. Then, for every \(A\) of the form~\eqref{lin_est_kopp},
\be\label{upper_riskH}
\cR_\cF(A)
\le \int \nor{\Phi(x')- W^* \Phi(x)}^2 \, P(x,dx')\,\pi(dx).
\ee
\end{lemma}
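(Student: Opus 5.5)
The plan is to bound $\cR_f(A)$ pointwise in $f\in\cF$ by the right-hand side of~\eqref{upper_riskH}, and then take the supremum over $\cF$. The key tool is Cauchy--Schwarz in $\cH$, applied inside the integral.

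Fix $f\in\cF$. By~\eqref{F}, $f=\scal{v}{\Phi(\cdot)}$ for some $v\in\cH$ with $\nor{v}\le 1$. From the computation preceding~\eqref{F}, for $A$ of the form~\eqref{lin_est_kopp} we have
\[
\cR_f(A)=\int \scal{v}{\Phi(x')-W^*\Phi(x)}^2\,P(x,dx')\,\pi(dx).
\]
This identity uses $\scal{Wv}{\Phi(x)}=\scal{v}{W^*\Phi(x)}$, which holds because $W\in\cL_2(\cH)$ is bounded. Under the injectivity assumption~\eqref{inj}, $f$ determines $v$, so $Af$ is well defined.

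Next, for each pair $(x,x')$, Cauchy--Schwarz gives
\[
\scal{v}{\Phi(x')-W^*\Phi(x)}^2\le \nor{v}^2\nor{\Phi(x')-W^*\Phi(x)}^2\le \nor{\Phi(x')-W^*\Phi(x)}^2.
\]
Both sides are nonnegative and measurable, since $\Phi$ is measurable and $W^*$ is bounded. We may therefore integrate against $P(x,dx')\,\pi(dx)$ and preserve the inequality. The right-hand side is finite by~\eqref{boundphi}, being bounded by $\kappa^2(1+\nor{W})^2$. The resulting bound is independent of $f$, so taking the supremum over $f\in\cF$ in~\eqref{F_risk} yields~\eqref{upper_riskH}.

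I do not expect a real obstacle here. The only points needing care are the well-definedness issues: the representation of $f$ by $v$ (handled by injectivity) and the measurability and integrability of the integrand (handled by the boundedness of $\Phi$ and of $W$).
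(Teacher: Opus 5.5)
Your proposal is correct and follows the paper's proof exactly: represent each \(f\in\cF\) by some \(v\) with \(\nor{v}\le 1\), apply Cauchy--Schwarz pointwise inside the integral, integrate, and take the supremum over \(\cF\). Your extra remarks on injectivity, measurability and finiteness (the bound \(\kappa^2(1+\nor{W})^2\)) are valid additions, but the inequality holds without them.
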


\begin{proof}
For every \(f\in \cF\), there exists \(v\in \cH\) with \(\nor{v}\le 1\) such that
\[
\cR_f(A)
= \int \left( \scal{v}{\Phi(x')-W^*\Phi(x)} \right)^2 \,P(x,dx')\,\pi(dx).
\]
By Cauchy--Schwarz,
\[
\left( \scal{v}{\Phi(x')-W^*\Phi(x)} \right)^2
\le \nor{v}^2 \nor{\Phi(x')-W^*\Phi(x)}^2
\le \nor{\Phi(x')-W^*\Phi(x)}^2.
\]
Integrating both sides gives
\[
\cR_f(A)
\le \int \nor{\Phi(x')- W^* \Phi(x)}^2 \, P(x,dx')\,\pi(dx).
\]
Taking the supremum over \(f\in \cF\) concludes the proof.
\end{proof}


The above discussion suggests considering the regularized least squares 
\[
\frac{1}{T} \sum_{t=0}^{T-1} \nor{\Phi(x_{t+1}) - W^*\Phi(x_t)}^2 + \la \nor{W}_{\cL_2(\cH)}^2,
\qquad \la > 0 .
\]
The error is measured by the norm of $\cX$ and the regularization term is the (squared) Hilbert--Schmidt norm. The unique solution of the above problem is denoted by \( \wh W_\la \), and the corresponding estimator by \( \wh A_\la \). By standard computations \( \wh W_\la \) can be written as
\be\label{ridgesol_koop}
\wh W_\la = (\wh \Sigma + \la I)^{-1} \wh h ,
\ee
where
\be\label{sigmah_koop}
\wh \Sigma = \frac{1}{T} \sum_{t=0}^{T-1} \Phi(x_t) \otimes \Phi(x_t),
\qquad
\wh h = \frac{1}{T} \sum_{t=0}^{T-1} \Phi(x_t) \otimes \Phi(x_{t+1}).
\ee
Note that the only difference with the estimator for {one-step prediction of} vector-valued states~\eqref{rls_vv} is the expression for $\wh h$.

\subsection{Learning guarantees}

The assumptions and corresponding learning guarantees are analogous to those for vector-valued states, with suitable adaptations to the Koopman operator setting considered in this section.

Let $g_\star:\cX\to\cH$ be defined as 
\begin{equation}
\label{eq:koop-g-star}
g_\star(x)
= \int \Phi(x')\,P(x,dx'),
\qquad x\in\cX.
\end{equation}
Under Condition~\eqref{boundphi}, $g_\star$ is well defined and $\nor{g_\star(x)}\le\kappa$, for each $x\in\cX$. Then, consider the following moment condition.

\begin{assumption}[Moment condition]
\label{bound_koop}
Assume that there exist constants $M>0$ and $\sigma>0$ such that, for all $x\in\cX$,
\be\label{res_bound_koop}
\int
\nor{\Phi(x')-g_\star(x)}^m
\,P(x,dx')
\le
\frac{m!}{2}\sigma^2M^{m-2},
\qquad
\forall m\ge 2.
\ee
\end{assumption}
The above assumption is the direct analogue of the moment condition in Assumption~\ref{bound_vv}, with the next state $x'$ replaced by  $\Phi(x')$ and $f_\star$ is replaced by $g_\star$. The boundedness condition~\eqref{boundphi} implies that
$\nor{\Phi(x')-g_\star(x)} \le 2\kappa$, for all $x\in \cX$,  and therefore implies~\eqref{res_bound_koop}, for example with $M=\sigma=2\kappa$. Nevertheless, the moment condition  is weaker and provides a more refined description of the  stochastic transition effect. In particular, for deterministic systems this effect vanishes, and the condition is satisfied trivially.

We also require the following well-specification assumption for the Koopman operator.
\begin{assumption}[Well specification]
\label{wellspec_koop}
Assume there exists $W_\star\in\cL_2(\cH)$ such that, for all $f$ as in~\eqref{obs_koop_phi},
$$
A_\star f(x)
=
\scal{W_\star v}{\Phi(x)},
\qquad
\forall x\in\cX.
$$
\end{assumption}

The above assumption is related to an invariance property of the class of observables~\eqref{obs_koop_phi} induced by $\Phi$. Indeed, it states that every observable of the form $f(x)=\scal{v}{\Phi(x)}$, with $v\in\cH$, is mapped by $A_\star$ to another observable of the same form. More precisely, the injectivity assumption introduced above, see Equation~\eqref{inj}, implies that
$A_\star f=f'$, where \( f'(x)=\scal{W_\star v}{\Phi(x)} \) for each \( x \in \cX \). This invariance assumption is strengthened by requiring \( W_\star \) to be Hilbert--Schmidt rather than merely bounded. Note that Assumption~\ref{wellspec_koop} is also equivalent to assuming that \( g_\star(x)=W_\star^\ast\Phi(x) \), for all \( x\in\cX \).

We are ready to state the learning guarantees for the least-squares estimator in this setting.

\begin{theorem}[Rates for Koopman operators]
\label{thm:wellspecsys_koop}
Let Assumptions~\ref{bound_koop}, \ref{wellspec_koop}, and~\ref{UGE} hold, and let Condition~\eqref{boundphi} be satisfied. Let $\delta\in(0,1)$. If
$$
\la
\ge
\frac{8C\kappa^2}{1-\rho}
\left(
\sqrt{\frac{2\log(6/\delta)}{T}}
+
\frac{1}{T}
\right),
$$
then, with probability at least $1-\delta$,
\begin{equation}
\label{eq:sample-rate-bound-simplified_koop}
\cE_\cF\bigl(\wh A_\la,A_\star\bigr)
\lesssim
\frac{C^2D}{(1-\rho)^2}
\left(\frac{2\log(6/\delta)}{\la T}
+ \frac{1}{\la T^2}\right)
+ \lambda \nor{W_\star}_{\cL_2(\cH)}^2,
\end{equation}
where the constant $D$ can be derived from the proof. In particular, setting
$\la=\la_T\asymp T^{-1/2}$, with probability at least $1-\delta$,
\[
\cE_\cF\bigl(\wh A_{\la_T},A_\star\bigr)
\lesssim
\frac{C^2D}{(1-\rho)^2}
\left(\frac{2\log(6/\delta)}{\sqrt{T}}+\frac{1}{T^{3/2}}\right)
+ \frac{\nor{W_\star}_{\cL_2(\cH)}^2}{\sqrt{T}}.
\]
\end{theorem}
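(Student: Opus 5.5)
The plan is to reduce Theorem~\ref{thm:wellspecsys_koop} to the vector-valued analysis of Theorem~\ref{thm:wellspecsys_vv}, taking the output space to be $\cH$ itself and the responses to be the lifted states $\Phi(x_{t+1})$.

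\textbf{Step 1: reduce the excess risk to an $L^2(\pi)$ error.} For $f=\scal{v}{\Phi(\cdot)}$ with $\nor{v}\le 1$ we have
\[
\wh A_\la f(x)-A_\star f(x)=\scal{v}{(\wh W_\la-W_\star)^*\Phi(x)}.
\]
By Cauchy--Schwarz,
\[
\cE_\cF(\wh A_\la,A_\star)\le \int \nor{(\wh W_\la-W_\star)^*\Phi(x)}^2\,\pi(dx)=\nor{\wh g_\la-g_\star}_\pi^2 ,
\]
where $\wh g_\la=\wh W_\la^*\Phi$. The equality uses Assumption~\ref{wellspec_koop} in the equivalent form $g_\star=W_\star^*\Phi$. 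It therefore suffices to bound the $L^2(\pi;\cH)$ error of a vector-valued regression, with output space $\cH$, responses $Y_{t+1}=\Phi(X_{t+1})$, and regression function $g_\star$.

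\textbf{Step 2: check the assumptions of the vector-valued analysis.} With $\cX$ replaced by $\cH$ as the output space, the estimator~\eqref{ridgesol_koop} is exactly~\eqref{ridgesol_vv} with $\wh h=\frac1T\sum_t\Phi(x_t)\otimes\Phi(x_{t+1})$. The assumptions translate as follows.
\begin{itemize}
\item The second moment condition~\eqref{2mom_vv} holds with $R=\kappa$, by~\eqref{boundphi}.
\item The noise moment condition~\eqref{res_bound_vv} is Assumption~\ref{bound_koop}.
\item The feature bound~\eqref{bound_phi_vv} is~\eqref{boundphi}.
\item Well specification holds with $W_\star\in\cL_2(\cH)$.
\end{itemize}
The input chain is still $(X_t)$ on the measurable space $\cX$, which is uniformly geometrically ergodic by Assumption~\ref{UGE}. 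The proofs of Propositions~\ref{prop:apprx} and~\ref{prop:est-error} never use a linear structure on the input space, only on the output space. So the same error decomposition~\eqref{err_dec} applies, with $W_\la=(\Sigma+\la I)^{-1}h$ and $h=\int\Phi(x)\otimes g_\star(x)\,\pi(dx)$.

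\textbf{Step 3: bound the two error terms.} The approximation term is at most $\la\nor{W_\star}_{\cL_2(\cH)}^2$, by the argument of Proposition~\ref{prop:apprx}. For the estimation term, I would split $\wh h-\wh\Sigma W_\star$ into two pieces.
\begin{itemize}
\item The martingale piece is $\frac1T\sum_t\Phi(x_t)\otimes(\Phi(x_{t+1})-g_\star(x_t))$. It is Hilbert--Schmidt valued, and a Pinelis/Bernstein martingale bound controls it using Assumption~\ref{bound_koop}; this gives the $\sigma^2$ and $M^2$ terms.
\item The covariance deviation $\wh\Sigma-\Sigma$ is handled by the Hilbert-space concentration inequality for additive functionals of UGE chains. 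That inequality rests on the Poisson-equation martingale decomposition, and it produces the factor $C/(1-\rho)$ and the boundary term $1/T$.
\end{itemize}
The threshold on $\la$ guarantees $\nor{(\Sigma+\la)^{-1/2}(\wh\Sigma-\Sigma)(\Sigma+\la)^{-1/2}}\le 1/2$. This lets us replace $(\wh\Sigma+\la)^{-1}$ by $2(\Sigma+\la)^{-1}$ up to constants. Combining the pieces gives~\eqref{eq:sample-rate-bound-simplified_koop}, with $D$ of order $\kappa^2(\kappa+\kappa\nor{W_\star})^2$ plus the noise terms. Choosing $\la\asymp T^{-1/2}$ then gives the stated rate.

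\textbf{Main obstacle.} The delicate point is the estimation error in Step 3. In Proposition~\ref{prop:est-error} that bound is derived with responses in the state space $\cX$, so I must confirm that only the Hilbert structure of the response space enters. The $R$-dependent term comes from $\wh\Sigma W_\star$-type quantities, where $\nor{g_\star(x)}\le\kappa$. The martingale term also needs checking: the increments $\Phi(x_{t+1})-g_\star(x_t)$ form a martingale difference sequence with respect to the natural filtration even when the chain is not started at stationarity. I expect both checks to go through verbatim. The one genuinely new verification is the well-specified form of $h$, namely $h=\Sigma W_\star$, which follows from $g_\star=W_\star^*\Phi$.
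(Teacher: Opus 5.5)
Your proposal is correct and follows essentially the same route as the paper: you reduce $\cE_\cF$ to $\nor{\Sigma^{1/2}(\wh W_\la-W_\star)}_{\cL_2(\cH)}^2$, split it into approximation and estimation errors, and combine the analytic bound valid under $\la\ge 2\nor{\wh\Sigma-\Sigma}$ with a Pinelis--Bernstein bound for the martingale term $\frac1T\sum_t\Phi(x_t)\otimes(\Phi(x_{t+1})-g_\star(x_t))$ and the UGE Hoeffding inequality for $\wh\Sigma-\Sigma$. The only cosmetic difference is in the remaining piece $\frac1T\sum_t(\Phi(x_t)\otimes g_\star(x_t)-h)=(\wh\Sigma-\Sigma)W_\star$: the paper controls it by a separate Hoeffding application, which produces the ``$1$'' in $D_3=96\kappa^4(1+\nor{W_\star}_{\cL_2(\cH)})^2$, whereas you absorb it into the covariance deviation, which only changes constants.
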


The proof of the above result is given in Appendix~\ref{sec:furth_proofs} and follows from the  vector-valued analysis. Indeed, Koopman operator learning can be seen  as one-step prediction in a suitable {\em lifted} space as explained next. 

{\begin{remark}[Koopman operator learning as one-step prediction in a lifted space]
The function \(g_\star\) defined in~\eqref{eq:koop-g-star} minimizes the \emph{surrogate} risk appearing on the right-hand side of~\eqref{upper_riskH}, namely,
\[
\int \nor{\Phi(x')-g(x)}^2 \,P(x,dx')\,\pi(dx).
\]
Restricting \(g\) to functions of the form~\eqref{lin_est_kopp}, reduces the problem to minimizing
\[
\int \nor{\Phi(x')-W^*\Phi(x)}^2 \,P(x,dx')\,\pi(dx).
\]
Thus, keeping in mind Inequality~\eqref{ex2risk}, Koopman operator learning can be viewed as linear one-step prediction for  the \emph{lifted} states \( x \mapsto \Phi(x)\in\cH \). Similar to the finite-state space learning  setup~\eqref{sec:surrogate-least-squares}, the feature map \(\Phi\) acts as an \emph{encoding} of the state, now taking values in a possibly infinite-dimensional Hilbert space.
\end{remark}}

Given the above remark, it is not surprising that the resulting bound has the same structure as the one obtained for vector-valued states, up to the definition of the constants and the dependence on $\nor{W_\star}_{\cL_2(\cH)}$. The structure and the nature of the different contributions have already been discussed. In particular, the bound can be compared with existing guarantees for Koopman operator learning under i.i.d.\ sampling or dependent but stationary observations, see e.g.  \cite{kostic2022learning,mollenhauer2022kernel,philipp2024error}. While the dependence on the trajectory length is analogous, our result applies to a single trajectory initialized from an arbitrary distribution and makes explicit the additional dependence on the uniform geometric ergodicity constants $C$ and $\rho$.

We note that further results concerning the estimation of Koopman eigenvalues and eigenfunctions are obtained by \citet{kostic2023sharp} under additional structural assumptions. Developing  spectral guarantees  in the present nonstationary setting is possible but left for future work.

Finally, we note that the results on universal learning in previous sections  do not extend naturally to  Koopman operator learning. Also this question is left for future work.

\section{Numerical results}

In this section, we consider several geometrically ergodic stochastic dynamical systems belonging to the class of nonlinear autoregressive processes; see Remark~\ref{rem:ar1}. We present numerical experiments illustrating the behavior of a one-step prediction estimator learned from a single trajectory.

\paragraph{Model description.} We consider a dynamical system corresponding to a scalar nonlinear autoregressive process
\[
X_{t+1}=F(X_t)+N_t,
\qquad t\in\bbN_0,
\]
where \(X_0 \sim \mu_0\) and \((N_t)_{t \in \bbN_0}\) is a sequence of i.i.d.\ random variables, independent of \( X_0 \), with common distribution \(\nu\). We take \(\cX=\bbR\), \(\nu=\cU(-1,1)\), and consider the nonlinear drift function
\[
F(x)
= \alpha x
+ \sum_{k=1}^K \bigl( a_k\sin(kx)+b_k\cos(kx)\bigr),
\qquad x \in \bbR,
\]
where \( \alpha \in \bbR \) and  \( (a_k)_{k=1}^K,(b_k)_{k=1}^K \subset \bbR \).

Since the noise is centered, the optimal one-step prediction function satisfies
\(
f_\star(x)
= F(x),
\)
which allows us to compare the learned estimator directly with the true drift. 
Under suitable contractivity assumptions on \( F \), for example when the linear part dominates at infinity with \( |\alpha|<1 \), the induced Markov process admits a unique invariant probability measure \( \pi \) and is geometrically ergodic; see, e.g., \citet[Example~11.4.3]{douc2018markov}.

\paragraph{Experimental setup.} The following protocol was used to assess the performance of the least squares estimator~\eqref{eq:krr-estimator-r}:
\begin{itemize}
    \item \textbf{Trajectory datasets.} We generate a pair of trajectories of length 15{,}000 using fixed parameters, one to be used for training and the other held out for testing.
    
    \item \textbf{i.i.d.\ datasets.} We generate long trajectories of length 150{,}000 steps and approximate the invariant distribution \( \pi \) via uniform subsampling to obtain training and test datasets of 15{,}000 samples each. Separately, to ensure convergence to invariant, multiple trajectories are generated from uniformly sampled initial conditions.

    \item \textbf{Testing procedure.} We fit 25 least squares estimators with a Gaussian kernel, using subsets of the training data ranging from 100 to 10{,}000 samples (sequentially). The kernel bandwidth and the regularization parameter \(\lambda\) were selected by grid search using cross-validation on the training data. Each estimator is evaluated in terms of mean squared error  on both the test trajectory and the independent i.i.d.~test set drawn from \( \pi \). In addition, we include an oracle predictor using the known drift function \( F \), providing a lower bound on the achievable MSE.
\end{itemize}

\begin{figure}[htbp]
  \centering
    \includegraphics[width=0.99\linewidth]{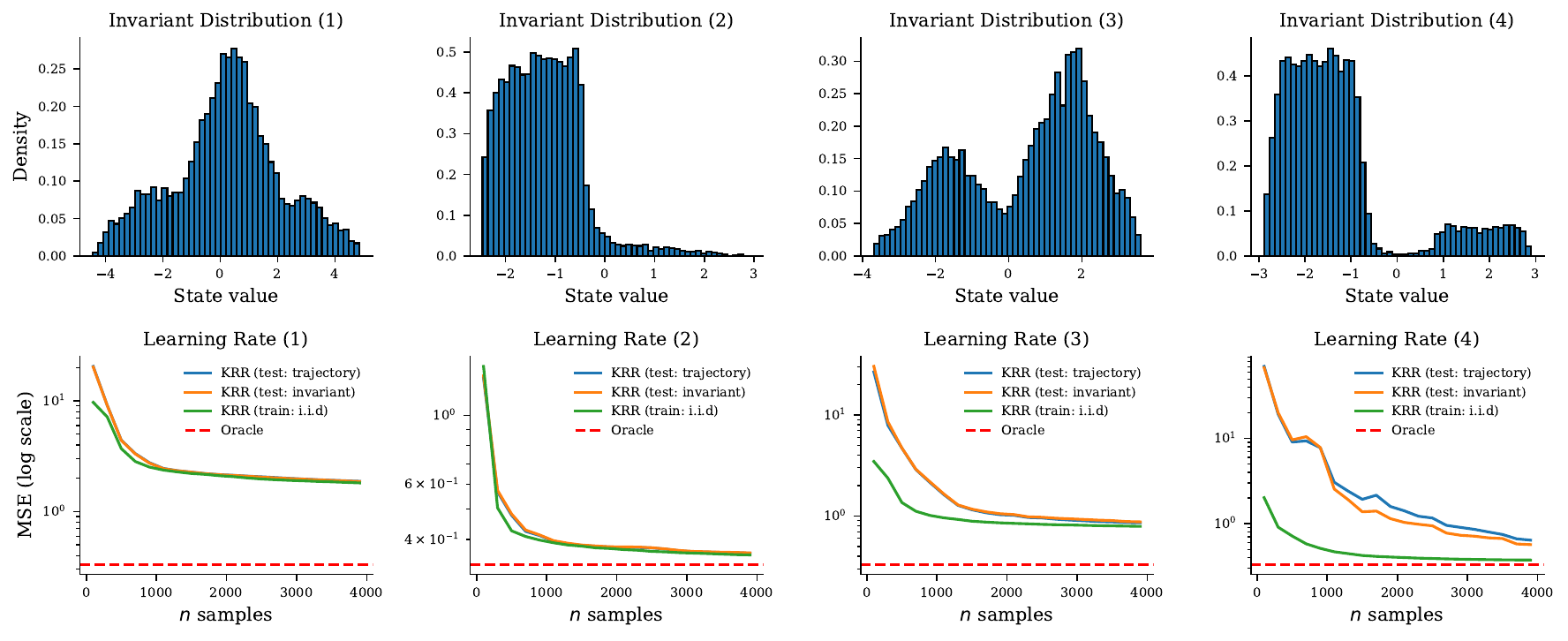}
    \caption{Dependency of the mean squared error (MSE) on the number of training samples from a single trajectory. The \emph{Learning Rate} plot shows three variants of KRR estimators trained on trajectory data and evaluated either on a separate trajectory or on i.i.d. samples. As a baseline, we include a KRR estimator trained on i.i.d. samples drawn from the invariant distribution.}
  \label{fig:ar1-learning-rate-comparison}
\end{figure}

\paragraph{Summary of observations.} We highlight several distinct scenarios (see Figure~\ref{fig:ar1-learning-rate-comparison}) that give rise to slightly different training dynamics, depending on the structure of the invariant distribution and the speed of mixing:
\begin{itemize}
    \item \textbf{Unimodal, fast mixing:} When the invariant distribution is unimodal and the chain mixes rapidly, there is virtually no difference between learning from i.i.d.\ samples and from a single trajectory. Both settings yield similar estimation errors across all sample sizes.
    \item \textbf{Bimodal, fast mixing:} When the invariant distribution has two modes but the mixing remains fast, the performance gap becomes noticeable. Estimators trained on a single trajectory tend to lag behind those trained on i.i.d.\ samples, though they eventually reach the same asymptotic error level. The trajectory-based learning still captures the overall structure but exhibits slower convergence.
    \item \textbf{Bimodal, slow mixing:} In the most challenging regime, the invariant distribution has two modes with one being dominant and the chain exhibiting slow transitions between modes. In this case, the training dynamics from a single trajectory are no longer smooth: the estimator may remain biased toward the dominant mode for a long time before it captures the full distributional structure. This leads to unstable error behavior and significantly delayed convergence.
\end{itemize}
\begin{remark}
To amplify these effects and construct more complex learning scenarios, we extended the experiment to vector-valued regression problems where each coordinate is governed by a different bimodal invariant distribution. This setup allows us to build high-dimensional stationary distributions with varying modal structure and mixing rates along each axis, resulting in rich and heterogeneous training dynamics.
\end{remark}

\section{Conclusions}
In this paper, we studied learning from a single finite trajectory of an ergodic stochastic dynamical system.
Using {non-linear} least squares estimators, we derived {one-step prediction} guarantees under uniform geometric ergodicity and showed how the analysis extends to vector-valued, higher-order, and finite-state systems, as well as to learning Koopman operators.
A main goal was to isolate the  statistical issues arising in the dynamical-systems setting while keeping the assumptions, proofs, and exposition as simple as possible.

In doing so, we  left a number of developments for future work.
Several of them require direct extensions of ideas from learning with i.i.d. data, such as faster rates under source conditions or estimates in different norms \citep{blanchard2018optimal,fischer2020sobolev,zhang2025optimal,li2024towards}.
Other estimators could also be considered, including spectral regularization methods \citep{bauer2007regularization,logerfo2008spectral}, stochastic gradient methods \citep{rosasco2015learning,lin2017optimal,pillaudvivien2018statistical}, and approaches based on random projections \citep{rudi2015less} or random features \citep{rudi2017generalization}.
A more substantial development would be to consider neural-network approaches, where the theory of linearly parametrized least squares estimators cannot be applied directly; convex neural networks may provide a useful intermediate setting \citep{bach2017breaking}. Another possible direction would be to extend analyses of benign overfitting to the dynamical-systems setting \citep{bartlett2020benign,hastie2022surprises,richards2021asymptotics} and investigate whether scaling laws analogous to supervised learning \citep{paquette2024phases} emerge.

Finally, more substantial developments would concern the nature of the systems considered, including for example non-uniform geometric ergodicity and continuous-time systems, or even non-autonomous and partially observable systems.

\section*{Acknowledgments}
S. V. has
been supported by the MUR Excellence Department Project awarded to Dipartimento di Matematica, Universita di Genova, CUP D33C23001110001 and the PNRR project “Harmonic Analysis and Optimization in Infinite-Dimensional Statistical - Future Artificial Intelligence  Fair – Spoke 10” (CUP J33C24000410007). S. V. is a member of the Gruppo Nazionale per l’Analisi Matematica, la Probabilità e le loro Applicazioni (GNAMPA) of the Istituto Nazionale di Alta Matematica (INdAM). O.\ K.\ and L.\ R.\ acknowledge the financial support of the European Commission (Horizon Europe grant ELIAS 101120237). L.\ R.\ also acknowledges the financial support of the Ministry of Education, University and Research (FARE grant ML4IP R205T7J2KP).

\bibliography{references}
\bibliographystyle{apalike}

\appendix

\section{Useful facts about  Markov processes}\label{app:SP}
\label{sec:backMP}
In this appendix, we collect the basic definitions and facts about Markov processes used throughout the paper. We further establish several auxiliary results, with self-contained proofs, that support the subsequent learning-theoretic analysis.

\subsection{Conditional expectations and conditional probabilities}

Let \((\Omega, \cA, \bbP)\) be a probability space, \((\cX,\cB(\cX))\) a measurable space, and let \(\cP(\cX)\) be the space of probability measures on \((\cX,\cB(\cX))\).  A \emph{random variable} is a map \(Y:(\Omega, \cA) \to(\cX,\cB(\cX))\) such that \(Y^{-1}(B)\in\cA\), for all \(B\in\cB(\cX)\). The law of \(Y\) is the probability measure \(\mu=Y_\#\bbP\in\cP(\cX)\) defined as  \(Y_\# \bbP(B)=\bbP(Y\in B)=\bbP (Y^{-1}(B))\), for each \(B\in\cB(\cX)\).

Let \((\cH,\scal{\cdot}{\cdot})\) be a separable Hilbert space, and denote by  \(\cB(\cH)\) its Borel \(\sigma\)-algebra. A random variable \(Y:(\Omega, \cA) \to(\cH,\cB(\cH))\) is \emph{Bochner integrable} if \(\int \|Y\|\,d\bbP<\infty\). In this case, its \emph{expectation} is \(\bbE[Y]=\int Y\,d\bbP=\int y d\mu\), where the integral is understood in the Bochner sense. For \(\cH=\bbR\), this coincides with the usual Lebesgue integral and standard integrability.

Let \(Y:(\Omega, \cA) \to(\cH,\cB(\cH))\) be Bochner integrable and  \(\cG\subseteq\cA\) be a sub-\(\sigma\)-algebra. A \emph{conditional expectation} of \(Y\) given \(\cG\) is any Bochner integrable random variable \(\bbE[Y\mid\cG]:(\Omega,\cG)\to(\cH,\cB(\cH))\) such that
\[
\int_G \bbE[Y\mid\cG]\,d\bbP
= \int_G Y\,d\bbP,
\quad \forall G\in\cG.
\]
Any two conditional expectations of \(Y\) given \(\cG\) are equal \(\bbP\)-almost surely.  

For each \(A\in\cA\), any \emph{conditional probability} of \(A\) given \(\cG\) is a random variable
\(
\bbP(A\mid \cG):(\Omega,\cG)\to[0,1]
\)
defined by
\[
\bbP(A\mid \cG)=\bbE[\mathds{1}_A\mid \cG].
\]
Let \((\Omega, \cA, \bbP)\) be a probability space, and  \((\cX,\cB(\cX)), (\cY,\cB(\cY))\) measurable spaces. 
Let \(Y:(\Omega,\cA)\to(\cY,\cB(\cY))\) and \(X:(\Omega,\cA)\to(\cX,\cB(\cX))\) be random variables.  For each \(B\in \cB(\cY)\),  define
\[
\bbP(Y\in B \mid X )
=
\bbP(Y^{-1}(B) \mid \sigma(X)),
\]
where \(\sigma(X)=\{X^{-1}(C):C\in\cB(\cX)\}\) is the \(\sigma\)-algebra generated by \(X\). A \emph{regular conditional distribution} of \(Y\) given \(X\) is a map \(Q:\cX\times\cB(\cY)\to[0,1]\) such that \(Q(x,\cdot)\in\cP(\cY)\) for all \(x\in\cX\), \(Q(\cdot,B)\) is measurable for all \(B\in\cB(\cY)\), and
\[
\bbP(Y\in B\mid X)(\omega)=Q(X(\omega),B)
\quad \bbP\text{-a.s.},
\qquad \forall B\in\cB(\cY).
\]
We then write
\(
\bbP(Y\in B\mid X=x)=Q(x,B),
\)
and regard \(Q(x,\cdot)\) as a version of the conditional probability of \(Y\) given \(X=x\).

Regular conditional distributions exist when the  measurable spaces  \((\cX,\cB(\cX)), (\cY,\cB(\cY))\)  are standard Borel spaces. A measurable space is called a \emph{standard Borel space} if it is measurably isomorphic to the Borel space of a complete separable metric space. For example, \(\bbR^d\) with its Borel \(\sigma\)-algebra is a standard Borel space.

\begin{remark}[Polish spaces are standard Borel spaces]
A \emph{Polish space} is a topological space whose topology is induced by a complete separable metric, equivalently a separable completely metrizable topological space. Thus every Polish space \(E\), equipped with its Borel \(\sigma\)-algebra \(\cB(E)\), is a standard Borel space. Conversely, every standard Borel space can be realized as the Borel space of some Polish topology. Hence standard Borel spaces retain only the measurable structure associated with Polish spaces, not a distinguished topology.
\end{remark}

\subsection{Stochastic processes and filtrations}

A \emph{stochastic process} \((X_t)_{t \in \bbN_0}\) is a family of random variables
\[
X_t : (\Omega,\cF) \to (\cX,\cB(\cX)), \quad  t \in \bbN_0.
\]
For each \(t \in \bbN_0\), the law of \(X_t\) is
\[
\mu_t(A) = (X_t)_{\#}\bbP(A)=\bbP(X_t \in A),
\quad A \in \cB(\cX),
\]
which is the marginal distribution of the process at time \(t\).

A \emph{filtration} \((\cF_t)_{t \in \bbN_0}\) is a family of sub-\(\sigma\)-algebras of \(\cF\) such that \(\cF_s \subseteq \cF_t\) for all \(s \leq t\). A process \((X_t)_{t \in \bbN_0}\) is \emph{adapted} to the filtration \((\cF_t)_{t \in \bbN_0}\) if, for every \(t \in \bbN_0\), the random variable \(X_t\) is \(\cF_t/\cB(\cX)\)-measurable. The natural filtration associated with \( (X_t)_{t \in \bbN_0} \) is \(\cF_t = \sigma(X_s : s \leq t)\), i.e., the smallest \(\sigma\)-algebra with respect to which the random variables \(X_0, \ldots, X_t\) are measurable.

\subsection{Markov processes}

Let \((\Omega, \cF, \bbP)\) be a probability space, \((\cX,\cB(\cX))\) a measurable space, and let \((X_t)_{t\in\bbN_0}\) be a stochastic process with values in \((\cX,\cB(\cX))\). By construction, the process is adapted to its  natural filtration \((\cF_t)_{t \in \bbN_0}\).

The process \((X_t)_{t\in\bbN_0}\) is called a \emph{Markov process} if it satisfies the \emph{Markov property}: that is  for all \(t\in\bbN\) and all \(A\in\cB(\cX)\),
\be\label{MP}
\bbP(X_t \in A \mid \cF_{t-1}) 
=\bbP(X_t\in A\mid X_{t-1}).
\ee
As explained before, we might use the notation 
\(\bbP(X_t\in A\mid X_{t-1},\dots,X_0)=\bbP(X_t \in A \mid \cF_{t-1})\). If the conditional probabilities in \eqref{MP} do not depend on \(t\), the process is called \emph{time-homogeneous}. Markov processes are tightly connected to the notion of transition kernel.

A \emph{transition kernel} on \((\cX,\cB(\cX))\) is a map
\[
P: \cX \times \cB(\cX) \to [0,1]
\]
such that, for every \(x\in\cX\), the map \(A \mapsto P(x,A)\) is a probability measure on \((\cX,\cB(\cX))\), and, for every \(A \in \cB(\cX)\), the map \(x \mapsto P(x,A)\) is measurable~\citep[Chapter~1]{douc2018markov}.

If \((\cX,\cB(\cX))\) is a standard Borel space, every time-homogeneous Markov process admits a transition kernel \(P\) such that
\(
\bbP(X_t\in A\mid X_{t-1})=P(X_{t-1},A),
\quad t\in\bbN,\ A\in\cB(\cX).
\)
Equivalently, one writes
\[
P(x,A)=\bbP(X_t\in A\mid X_{t-1}=x),
\quad x\in\cX,\ A\in\cB(\cX),
\]
where the right-hand side is understood as a regular conditional distribution. Conversely, given an initial distribution \(\mu_0\in\cP(\cX)\) and a transition kernel \(P\), the Ionescu--Tulcea extension theorem~\citep{kallenberg2002foundations} yields a unique probability measure on the canonical path space \(\cX^{\bbN_0}\), equipped with its product \(\sigma\)-algebra, under which the coordinate process is a time-homogeneous Markov process \((X_t)_{t\in\bbN_0}\) with initial distribution \(\mu_0\) and transition kernel \(P\), i.e.,
\[
\bbP(X_0\in A)=\mu_0(A),
\quad \bbP(X_t\in A\mid X_{t-1})=P(X_{t-1},A),
\]
for all \(t\in\bbN\) and \(A\in\cB(\cX)\). Thus the pair \((\mu_0,P)\) determines the law of the process \emph{uniquely}, although the process may admit many realizations on different probability spaces. Therefore, a time-homogeneous Markov process is often identified with the pair \((P,\mu_0)\) of  transition kernel and initial distribution. See, for example,~\citet{eberle2009markov,douc2018markov}.

\subsection{Stochastic dynamical systems}

Next,  we provide an alternative construction of the stochastic dynamical system~\eqref{eq:ds-def}, which makes it easier to investigate the connection to Markov processes and transition kernels. 

Let \((\Omega,\cF,\bbP)\) be a probability space, let \((\cX,\cB(\cX))\) and \((\cN,\cB(\cN))\) be measurable spaces, and let
\[
f:\cX\times\cN\to\cX
\]
be \(\cB(\cX)\otimes\cB(\cN)\)-measurable. Let \(X_0:(\Omega,\cF)\to(\cX,\cB(\cX))\) be a random variable, and let \(N_t:(\Omega,\cF)\to(\cN,\cB(\cN))\), \(t\in\bbN_0\), be identically distributed random variables such that \(X_0,N_0,N_1,\dots\) are mutually independent.

A \emph{stochastic dynamical system} can be  defined recursively by
\be\label{RV_SDS}
X_{t+1}=f(X_t,N_t),
\quad t\in\bbN_0.
\ee
Stochastic dynamical systems defined as above are also known as iterated random functions~\citep{diaconis1999iterated}, random
recursions~\citep{damek2017stochastic}, or stochastic recurrence
equations~\citep{vervaat1979stochastic}.  In engineering applications, a stochastic dynamical system is often written as in equation~\eqref{eq:ds-def}, namely as
\be\label{PW_SDS}
x_{t+1}=f(x_t,\eta_t),
\quad t\in\bbN_0,
\ee
where \(x_0\) is sampled according to a probability measure \(\mu_0\) on \((\cX,\cB(\cX))\), and \((\eta_t)_{t\in\bbN_0}\) are independent samples from a probability measure \(\nu\) on \((\cN,\cB(\cN))\). Here  \(x_0\) is also assumed to be  independent of \((\eta_t)_{t\in\bbN_0}\).

We don't provide an in-depth discussion of two formulations we consider for stochastic dynamical systems, that is Equation~\eqref{RV_SDS} and Equation~\eqref{PW_SDS},  but add one comment about their relation.
The  recursion~\eqref{PW_SDS} describes sample paths of the stochastic dynamical system~\eqref{RV_SDS}. Indeed, let \(\mu_0\) be the law of \(X_0\), and let \(\nu\) be the common law of the random variables \(N_t\), \(t\in\bbN_0\). If \(X_0,N_0,N_1,\dots\) are mutually independent, then {\(\bbP\)-almost every \(\omega\in\Omega\)} determines a sequence
\[
x_t=X_t(\omega),
\qquad
\eta_t=N_t(\omega),
\]
satisfying~\eqref{PW_SDS}.
Thus, \eqref{PW_SDS} is the sample-path notation, while
~\eqref{RV_SDS} is the corresponding random-variable formulation. As shown next, the latter yields an immediate connection to Markov processes.
We first add a remark regarding the existence of probability spaces supporting the random variables defining a stochastic dynamical system~\eqref{RV_SDS}.

\begin{remark}[Autoregressive models]
\label{rem:ar1}
A classical special case of~\eqref{RV_SDS} is obtained when \(\cX=\cN=\bbR^d\) and the noise enters additively. More precisely, let \(F:\bbR^d\to\bbR^d\) be measurable and set
\[
f(x,\eta)=F(x)+\eta,
\qquad x, \eta \in \bbR^d.
\]
Then the recursion~\eqref{RV_SDS} becomes
\[
X_{t+1}=F(X_t)+N_t,
\qquad t \in \bbN_0,
\]
which is a nonlinear autoregressive model of order one~\citep{Brockwell2016-re}. Note that this is the same dynamical system as in Example~\ref{ex:noisy-evolution}, now viewed from a stochastic-process perspective.
\end{remark}

\begin{remark}[Existence of the probability space]
The  formulation~\eqref{RV_SDS} assumes the existence of a probability space carrying the random variable describing the initial condition and a countable family of identically distributed, mutually independent noise random variables. Here, we recall that this assumption is not restrictive. Indeed, let
\(
\xi_0:(\Omega_X,\cF_X,\bbP_X)\to(\cX,\cB(\cX))
\)
be a random variable describing the initial condition, and let
\(
\xi_\eta:(\Omega_N,\cF_N,\bbP_N)\to(\cN,\cB(\cN))
\)
be a random variable describing the noise input. By the countable product construction, see, e.g., \citet[Chapter~36]{billingsley1995probability}, one may
consider the product probability space
\(
(\Omega,\cF,\bbP)
= \Big(
\Omega_X\times \Omega_N^{\bbN_0},
\cF_X \otimes \cF_N^{\otimes \bbN_0},
\bbP_X \otimes \bbP_N^{\otimes \bbN_0}
\Big).
\)
Writing
\[
\omega=(\omega_0,\omega_1,\omega_2,\dots),
\qquad \omega_0 \in \Omega_X, \quad \omega_j \in \Omega_N \quad \text{for } j \ge 1,
\]
define
\[
X_0(\omega)=\xi_0(\omega_0),
\qquad N_t(\omega)=\xi_\eta(\omega_{t+1}),
\quad t\in\bbN_0.
\]
Then \(X_0,N_0,N_1,\dots\) are mutually independent, \(X_0\) has law
\((\xi_0)_\#\bbP_X\), and each \(N_t\) has law \({\xi_\eta}_\#\bbP_N\). Hence the
probability space used to describe the stochastic dynamical system~\eqref{RV_SDS} above can always be realized as a product space carrying the initial condition together with countably many independent copies of the
noise.
\end{remark}

\subsection{Stochastic dynamical systems and Markov processes}

In this section we derive a basic result relating stochastic dynamical systems and Markov processes.

\begin{proposition}\label{prop:SDS_MP}
A stochastic dynamical system~\eqref{RV_SDS} defines a {time-homogeneous} Markov process
\((X_t)_{t\in\bbN_0}\) and the associated transition kernel \(P:\cX\times\cB(\cX)\to[0,1]\) is given by 
{
\be\label{SDS_TK}
P(x,B)=\int \mathbf 1_B(f(x,\eta))\,\nu(d \eta),
\qquad x\in\cX,\ B\in\cB(\cX),
\ee
}
where \(\nu\) is the common law of the random variables \(N_t\). 
\end{proposition}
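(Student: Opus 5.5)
The plan is to show directly that, for every \(t\in\bbN_0\) and \(B\in\cB(\cX)\),
\[
\bbP(X_{t+1}\in B\mid \cF_t)=P(X_t,B)\quad \bbP\text{-a.s.},
\]
with \(P\) as in~\eqref{SDS_TK} and \(\cF_t=\sigma(X_0,\dots,X_t)\). Since the right-hand side is \(\sigma(X_t)\)-measurable, the tower property then gives \(\bbP(X_{t+1}\in B\mid X_t)=P(X_t,B)\). This yields the Markov property~\eqref{MP}. The kernel does not depend on \(t\), so the process is time-homogeneous.

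\textbf{Step 1: \(P\) is a transition kernel.} For fixed \(x\), the map \(B\mapsto\int \mathbf 1_B(f(x,\eta))\,\nu(d\eta)\) is the pushforward \(f(x,\cdot)_\#\nu\), hence a probability measure. For fixed \(B\), the function \((x,\eta)\mapsto \mathbf 1_B(f(x,\eta))\) is \(\cB(\cX)\otimes\cB(\cN)\)-measurable because \(f\) is jointly measurable. Tonelli's theorem then gives measurability of \(x\mapsto P(x,B)\).

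\textbf{Step 2: independence structure.} By induction on the recursion~\eqref{RV_SDS}, \(X_t\) is a measurable function of \((X_0,N_0,\dots,N_{t-1})\). Hence \(\cF_t\subseteq\sigma(X_0,N_0,\dots,N_{t-1})\). By the mutual independence of \(X_0,N_0,N_1,\dots\), the variable \(N_t\) is independent of this \(\sigma\)-algebra, and therefore of \(\cF_t\).

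\textbf{Step 3: the freezing lemma.} This is the main step, though it is standard. Take \(G\in\cF_t\) and set \(h(x,\eta)=\mathbf 1_B(f(x,\eta))\), which is bounded and jointly measurable. The pair \((X_t,\mathbf 1_G)\) is \(\cF_t\)-measurable and independent of \(N_t\), so the joint law of \((X_t,\mathbf 1_G,N_t)\) is the product of the law of \((X_t,\mathbf 1_G)\) with \(\nu\). Fubini's theorem then gives
\[
\bbE[\mathbf 1_G\, h(X_t,N_t)]=\bbE\Big[\mathbf 1_G\int h(X_t,\eta)\,\nu(d\eta)\Big]=\bbE[\mathbf 1_G\,P(X_t,B)].
\]
This is exactly the defining property of the conditional expectation. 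Combined with the \(\cF_t\)-measurability of \(P(X_t,B)\), it proves the displayed identity.

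The only delicate point is justifying the product structure used in Step 3. This follows from the independence established in Step 2 together with the joint measurability of \(f\), which makes \(h\) product-measurable so that Fubini applies. No regularity assumption on \(\cX\), such as a standard Borel structure, is needed, because the kernel is given explicitly and is not obtained from a disintegration theorem.
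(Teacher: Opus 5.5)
Your proposal is correct and follows essentially the same route as the paper: Tonelli and the pushforward \(f(x,\cdot)_\#\nu\) to show \(P\) is a transition kernel, the inclusion \(\cF_t\subseteq\sigma(X_0,N_0,\dots,N_{t-1})\) to get independence of \(N_t\) from \(\cF_t\), the freezing identity \(\bbE[\mathbf 1_G\,\mathbf 1_B(f(X_t,N_t))]=\bbE[\mathbf 1_G\,P(X_t,B)]\) for \(G\in\cF_t\), and the tower property to pass from \(\cF_t\) to \(X_t\). The one difference is that you justify the freezing step explicitly, via the product structure of the joint law of \((X_t,\mathbf 1_G,N_t)\) and Fubini, whereas the paper asserts it in a single displayed equality.
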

The above facts are quite standard; see, e.g., \cite{diaconis1999iterated}.
We include the proof for completeness.

\begin{proof}
Fix \(B\in\cB(\cX)\). Let \(h_B:\cX\to[0,1]\) be defined by
{
\be\label{hB}
h_B(x)=\int \mathbf 1_B(f(x,\eta))\,\nu(d \eta),
\qquad x\in\cX .
\ee
}
Since \(f\) is measurable, the map
\(
(x,\eta) \mapsto \mathbf{1}_B(f(x,\eta))
\)
is nonnegative and \(\cB(\cX)\otimes\cB(\cN)\)-measurable. Hence, by Tonelli's theorem~ \citep{billingsley1995probability,eisner2015operator},
the map \( h_B \) is \(\cB(\cX)\)-measurable and takes values in \([0,1]\).

Then, fix \(t\in\bbN_0\). We claim that
\be\label{eq:markov-hb}
\bbP(X_{t+1}\in B\mid \cF_t)=h_B(X_t)
\qquad \bbP\text{-a.s.}
\ee
Let
\[
\cG_t=
\begin{cases}
\sigma(X_0), & t=0,\\
\sigma(X_0,N_0,\dots,N_{t-1}), & t\ge 1.
\end{cases}
\]
By Equation~\eqref{RV_SDS}, \(X_s\) is \(\cG_t\)-measurable for every \(s\le t\). Hence
\[
\cF_t=\sigma(X_0,\dots,X_t)\subseteq \cG_t .
\]
By mutual independence of \(X_0,N_0,N_1,\dots\), \(N_t\) is independent of
\(\cG_t\), and therefore of \(\cF_t\).
Since \(X_t\) is \(\cF_t\)-measurable and \(N_t\) is independent of \(\cF_t\), for every \(A\in\cF_t\),
\[
\int_A \mathbf 1_B(f(X_t(\omega),N_t(\omega)))\, \bbP(d \omega)
= \int_A \left(\int \mathbf 1_B(f(X_t(\omega),\eta))\,\nu(d \eta)\right) \bbP(d \omega)
= \int_A h_B(X_t(\omega))\, \bbP(d \omega).
\]
Since \(h_B(X_t)\) is \(\cF_t\)-measurable, this means
\[
\bbE[\mathbf 1_B(f(X_t,N_t))\mid \cF_t]
= h_B(X_t), \qquad \bbP\text{-a.s.}.
\]
Note that
\[
\bbP(X_{t+1}\in B\mid \cF_t)
= \bbE[\mathbf 1_B(X_{t+1})\mid \cF_t]
= \bbE[\mathbf 1_B(f(X_t,N_t))\mid \cF_t]
= h_B(X_t), \qquad \bbP\text{-a.s.},
\]
which proves~\eqref{eq:markov-hb}.

Since \(\sigma(X_t)\subseteq\cF_t\), the tower property and \eqref{eq:markov-hb} give
\[
\bbP(X_{t+1}\in B\mid X_t)
= \bbE\!\left[
    \bbE[\mathbf{1}_B(X_{t+1})\mid\cF_t]
    \,\middle|\,X_t
\right] \\
= \bbE[h_B(X_t)\mid X_t]
= h_B(X_t)
\qquad \bbP\text{-a.s.}
\]
Consequently,
\[
\bbP(X_{t+1}\in B\mid\cF_t)
=
\bbP(X_{t+1}\in B\mid X_t)
=
P(X_t,B),
\]
which proves the Markov property and time homogeneity.

Finally, the fact that~\eqref{SDS_TK} is the associated transition kernel follows from
the definition of \(h_B\). Indeed, for every \(x\in\cX\) and every
\(B\in\cB(\cX)\),
\[
P(x,B)=h_B(x).
\]
It remains only to check that \(P\) is a transition kernel. For every
\(B\in\cB(\cX)\), \(P(\cdot,B)=h_B\) is measurable. Moreover, for every fixed
\(x\in\cX\), \(P(x,\cdot)\) is the pushforward of \(\nu\) through the
measurable map \(f(x,\cdot)\). Hence \(P(x,\cdot)\) is a probability measure on
\((\cX,\cB(\cX))\). This proves the claim that \(P\) is the transition kernel associated with \((X_t)_{t\in\bbN_0}\).
\end{proof}
We end by noting that, under standard Borel regularity assumptions, converse results are also known; see, e.g., \cite{meyn2012markov}.

\subsection{Markov processes under measurable transformations}

In this subsection, we  show that an injective measurable transformation of a uniformly geometrically ergodic Markov process preserves its Markov structure, invariant measure, and the convergence rate. This observation will be crucial for deriving the learning guarantees in the finite-state setting and for Koopman operator learning.

\paragraph{Setup.} Let \((\cX,\cB(\cX))\) and \((\cY,\cB(\cY))\) be Borel measurable spaces, let \(T:\cX \to \cY\) be  measurable, and let \(\mu \in \cP(\cX)\). Recall that the \emph{push-forward} of \(\mu\) through \(T\), denoted by \(T_{\#}\mu\), is the probability measure on \((\cY,\cB(\cY))\) defined by
\begin{equation}
\label{eq:push-forward-def}
T_{\#}\mu(A)
= \mu\bigl(T^{-1}(A)\bigr),
\qquad A \in \cB(\cY).
\end{equation}
Equivalently, for every bounded measurable function \(g: \cY \to \bbR\),
\[
\int g(y)\,T_{\#}\mu(dy)
= \int g\bigl(T(x)\bigr)\,\mu(dx).
\]

\begin{lemma}[Push-forward of invariant measure and convergence rate]
\label{lem:emb-markov-conv}
Let \(X=(X_t)_{t \in \bbN_0}\) be a Markov process taking values in \((\cX,\cB(\cX))\), with transition kernel \(P\) and invariant measure \(\pi \in \cP(\cX)\). Suppose that 
\begin{equation}
\label{eq:pushforward-geometric-rate} \sup_{x\in\cX} \nor{P^t(x,\cdot)-\pi}_{\mathrm{TV}}
\leq C\rho^t,
\qquad t\in\bbN_0,
\end{equation}
for some \(C>0\) and \(\rho \in (0,1)\).
Let \(T:\cX \to \cY\) be injective and measurable. Define
\[
Y_t=T(X_t),
\qquad t \in \bbN_0,
\]
and set \(\widetilde\pi=T_{\#}\pi\). Then, \((Y_t)_{t \in \bbN_0}\) is a Markov process with associated transition kernel \(Q\), \(\widetilde\pi\) is its unique invariant measure, and
\[
\sup_{x\in\cX} \left\| Q^t\bigl(T(x),\cdot\bigr)-\widetilde\pi \right\|_{\mathrm{TV}} 
\leq C\rho^t.
\]
\end{lemma}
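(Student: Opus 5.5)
The plan is to build the transition kernel \(Q\) of \((Y_t)\) explicitly from \(P\) by pushing forward along \(T\), and then to transfer the Markov property, invariance and the total variation bound through this push-forward. The main technical point is that \(Q\) must be defined on all of \(\cY\), and \(T\) need not be onto. Here I would use that \(T\) is an injective measurable map between Borel (standard Borel) spaces. By the Lusin--Souslin theorem, the image \(T(\cX)\in\cB(\cY)\), and \(T\) is a Borel isomorphism onto its image with measurable inverse \(T^{-1}:T(\cX)\to\cX\). Fix any \(y_0\in T(\cX)\). Define
\[
Q(y,A)=P\bigl(T^{-1}(y),T^{-1}(A)\bigr)=(T_\#P(T^{-1}(y),\cdot))(A)\quad\text{for } y\in T(\cX),
\]
and \(Q(y,A)=Q(y_0,A)\) otherwise. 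Measurability in \(y\) follows from measurability of \(T^{-1}\) on the Borel set \(T(\cX)\), and each \(Q(y,\cdot)\) is a probability measure because it is a push-forward. I expect this step, measurability of the image and of the inverse, to be the main obstacle. It is settled by citing Lusin--Souslin.

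Next I would show the Markov property with kernel \(Q\). Because \(T\) is injective and measurable with measurable inverse on its range, \(\sigma(Y_0,\dots,Y_t)=\sigma(X_0,\dots,X_t)=\cF_t\) and \(\sigma(Y_t)=\sigma(X_t)\). Hence, for \(A\in\cB(\cY)\),
\[
\bbP(Y_{t+1}\in A\mid\cF_t)=\bbP(X_{t+1}\in T^{-1}(A)\mid\cF_t)=P(X_t,T^{-1}(A))=Q(Y_t,A).
\]
This gives the Markov property and time homogeneity. By induction with Chapman--Kolmogorov, \(Q^t(T(x),A)=P^t(x,T^{-1}(A))\). The inductive step integrates against \(Q^t(T(x),dy)\), which is concentrated on \(T(\cX)\), and changes variables through the push-forward identity \eqref{eq:push-forward-def}.

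For invariance, I would compute
\[
(\widetilde\pi Q)(A)=\int Q(T(x),A)\,\pi(dx)=\int P(x,T^{-1}(A))\,\pi(dx)=\pi(T^{-1}(A))=\widetilde\pi(A).
\]
For the rate, note that push-forward does not increase total variation: \(|T_\#\mu(A)-T_\#\nu(A)|=|\mu(T^{-1}A)-\nu(T^{-1}A)|\), so \(\|T_\#\mu-T_\#\nu\|_{\mathrm{TV}}\le\|\mu-\nu\|_{\mathrm{TV}}\). This holds with either convention for the TV norm, since the sup over sets or over \([-1,1]\)-valued test functions only shrinks. Applying it to \(Q^t(T(x),\cdot)=T_\#P^t(x,\cdot)\) and \(\widetilde\pi=T_\#\pi\) gives the claimed bound \(C\rho^t\) from \eqref{eq:pushforward-geometric-rate}.

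For uniqueness, suppose \(\widetilde\pi'\) is invariant for \(Q\). Invariance gives \(\widetilde\pi'=\widetilde\pi' Q\). Since \(Q(y,\cdot)\) is concentrated on \(T(\cX)\) for every \(y\), including points off the range by the choice of \(y_0\), we get \(\widetilde\pi'(T(\cX))=1\). Therefore \(\widetilde\pi'=T_\#\mu\) with \(\mu=(T^{-1})_\#\widetilde\pi'\). Then \(\mu P^t\) pushes forward to \(\widetilde\pi' Q^t=\widetilde\pi'\), and \(T_\#\) is injective on measures because \(T\) is a Borel isomorphism onto its image. Hence \(\mu\) is \(P\)-invariant. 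Uniqueness under Assumption~\ref{UGE} then forces \(\mu=\pi\), so \(\widetilde\pi'=\widetilde\pi\). Equivalently, \(\|\widetilde\pi' Q^t-\widetilde\pi\|_{\mathrm{TV}}\le C\rho^t\to0\) directly.
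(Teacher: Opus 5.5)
Your proposal is correct and follows essentially the same route as the paper: Lusin--Souslin for the measurable inverse on $T(\cX)$, the kernel $Q(T(x),A)=P(x,T^{-1}(A))$, the Markov property obtained by conditioning on $X_t$, the identity $Q^t(T(x),\cdot)=T_\#P^t(x,\cdot)$, invariance of $T_\#\pi$, and contraction of total variation under push-forward. Your two additions are the extension of $Q$ off the range via a fixed $y_0$, and the explicit uniqueness argument. The paper does not need the first because it works on the trace $\sigma$-algebra of $T(\cX)$, and it states uniqueness without arguing it.
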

\begin{proof}
Define \(\widetilde \cY:=T(\cX)\), equipped with the trace \(\sigma\)-algebra
\[
\cB(\widetilde \cY)
:= \left\{A \cap \widetilde \cY: A \in \cB(\cY)\right\};
\]
see, e.g.,~\citet[Section~3]{schilling2017measures}. Since \(T\) is injective, the inverse map \(T^{-1}: \widetilde \cY \to \cX\) is well defined. Moreover, by Lusin--Suslin theorem for standard Borel spaces \( \cX \) and \( \cY \), \( T^{-1} \) is measurable; see, e.g.,~\citet[Theorem~15.1]{kechris2012classical}. Define \(Q:\widetilde \cY \times \cB(\widetilde \cY) \to [0,1]\) by 
\begin{equation}
\label{eq:markov-factor-condition} 
Q\bigl(T(x),A\bigr)
= P\bigl(x,T^{-1}(A)\bigr),
\qquad x\in\cX,\; A \in \cB(\widetilde \cY). 
\end{equation}
This definition is unambiguous because \(T\) is injective. Moreover, for every \(y=T(x) \in \widetilde \cY\), the map \(A \mapsto Q(y,A)\) is a probability measure, while, for every \(A \in \cB(\widetilde \cY)\), the map 
\[
y \mapsto Q(y,A)
= P\bigl(T^{-1}(y),T^{-1}(A)\bigr)
\]
is measurable. Thus, \(Q\) is a proper transition kernel on \(\widetilde \cY\); see Section~\ref{sec:MP-dynamical} for details. It remains to verify that \(Q\) is the transition kernel associated with \((Y_t)_{t\in\bbN_0}\).

Let \(\cG_t=\sigma(Y_0,\ldots,Y_t)\) be a natural filtration of the process \( (Y_t)_{t \in \bbN_0} \). Since \(\cG_t \subseteq \sigma(X_0,\ldots,X_t)\), for every \(A \in \cB(\cY)\), the Markov property of \((X_t)_{t \in \bbN_0}\) and \eqref{eq:markov-factor-condition} give
\begin{align*}
\bbP\bigl(Y_{t+1}\in A\mid\cG_t\bigr)
&= \bbE\left[ \bbP\bigl(T(X_{t+1})\in A\mid X_t\bigr) \,\middle|\,\cG_t \right]
= \bbE\left[ P\bigl(X_t,T^{-1}(A)\bigr) \,\middle|\,\cG_t \right] \\
&= \bbE\left[ Q\bigl(T(X_t),A\bigr) \,\middle|\,\cG_t \right]
= Q(Y_t,A). 
\end{align*}
Therefore, \((Y_t)_{t \in \bbN_0}\) is a Markov process with transition kernel \(Q\).  Iterating~\eqref{eq:markov-factor-condition} and applying~\eqref{eq:push-forward-def} yields
\[
Q^t\bigl(T(x),A\bigr)
= P^t\bigl(x,T^{-1}(A)\bigr)
= T_{\#}P^t(x, A).
\]
Finally, let \(g:\cY \to \bbR\) be bounded measurable. Writing
\[
(Qg)(y)=\int g(y')\,Q(y,dy'),
\]
we obtain
\begin{align*}
\int Qg(y)\,\widetilde\pi(dy)
= \int Qg\bigl(T(x)\bigr)\,\pi(dx)
= \int P(g\circ T)(x)\,\pi(dx)
= \int g\bigl(T(x)\bigr)\,\pi(dx)
= \int g(y)\,\widetilde\pi(dy),
\end{align*}
where the third equality follows from invariance of \(\pi\). Thus, \(\widetilde\pi\) is invariant for \(Q\).

Fix \(x\in\cX\) and \(t\in\bbN_0\). By the definition of \( Q \) and \( \widetilde\pi \), 
\[
\nor{Q^t(x, \cdot) - \widetilde\pi}_\mathrm{TV}
= \nor{T_{\#}P^t(x,\cdot)-T_{\#}\pi}_\mathrm{TV}
= \nor{ P^t(x,\cdot)-\pi }_\mathrm{TV},
\]
by the data-processing inequality for the total variation distance under measurable
transformations; see, e.g.,~\citet[Section~7.2]{polyanskiy2025information}. Since \(T\) is a measurable isomorphism onto its image \( \widetilde \cY \), equality holds. Taking the supremum over \(x\in\cX\) and applying \eqref{eq:pushforward-geometric-rate} gives the claimed geometric rate.
\end{proof}

\begin{remark}[Assumptions on $T$]
An additional assumption beyond measurability on the map \(T:\cX \to \cY\) cannot, in general, be omitted, since a measurable transformation of a Markov process need not itself be Markov. As is apparent from the proof, it is sufficient that there exists a Markov kernel \(Q\) on \( T(\cX) \) satisfying~\eqref{eq:markov-factor-condition}. We state the lemma under the stronger assumption that \(T\) is measurable and injective, as this covers all applications we considered in this work.
\end{remark}

\subsection{Example: bounded drift with Gaussian noise}
\label{sec:MPExamples}

In this section, we provide the derivation of the bounded-drift Gaussian-noise Example~\ref{the_example} introduced in Section~\ref{sec:ergodic-measure}.

\begin{proposition}
Let \( \cX=\bbR^d \) and define
\[
X_{t+1} = b(X_t) + \sigma \xi_t,
\qquad \text{with } \xi_t \sim \cN(0,1), \, \sigma > 0,
\]
where \( \cN(0, I_d) \) is a standard Gaussian and \( b: \cX \to \cX \) is bounded. Denote \( B := \norm[\infty]{b} \) and \( a := B/\sigma \). Then, the chain is uniformly ergodic and Assumption~\ref{UGE} holds with \( C = 1 \). Moreover,
\[
\frac{1}{1 - \rho} \approx \begin{cases}
\sqrt{2 \pi} a \exp\!\left(a^2/2\right), & d = 1 \\[0.3em]
\exp\!\left( a\sqrt{d} + a^2/2 \right), & d \gg 1
\end{cases}
\]
\end{proposition}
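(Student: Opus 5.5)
The natural route is a Doeblin (minorization) argument. The transition kernel has density
\[
p(x,y)=(2\pi\sigma^2)^{-d/2}\exp\!\bigl(-\|y-b(x)\|^2/(2\sigma^2)\bigr),
\]
and since \(\|b(x)\|\le B\) for all \(x\), we get the pointwise lower bound
\[
p(x,y)\ge q(y):=\inf_{\|c\|\le B}\varphi_\sigma(y-c),
\]
where \(\varphi_\sigma\) is the \(\cN(0,\sigma^2 I_d)\) density. Setting \(\varepsilon:=\int q\,dy\in(0,1]\) and \(\nu:=q/\varepsilon\), this gives the whole-space minorization \(P(x,\cdot)\ge\varepsilon\,\nu(\cdot)\) for every \(x\in\cX\).

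The standard coupling argument then yields
\[
\sup_x\|P^t(x,\cdot)-P^t(x',\cdot)\|_{\mathrm{TV}}\le(1-\varepsilon)^t .
\]
Equivalently, the Dobrushin coefficient is at most \(1-\varepsilon\) and is submultiplicative. This contraction implies existence and uniqueness of \(\pi\): the iterates \(\mu P^t\) form a Cauchy sequence in total variation, and their limit is invariant. Integrating the contraction bound against \(\pi(dx')\) gives
\[
\sup_x\|P^t(x,\cdot)-\pi\|_{\mathrm{TV}}\le(1-\varepsilon)^t,
\]
which is Assumption~\ref{UGE} with \(C=1\) and \(\rho=1-\varepsilon\). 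Hence \(1/(1-\rho)=1/\varepsilon\), and it only remains to estimate \(\varepsilon\).

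The infimum over the ball is attained at the centre farthest from \(y\), so \(q(y)=\varphi_\sigma(\|y\|+B)\) as a radial function. After rescaling by \(\sigma\),
\[
\varepsilon=\bbP\bigl(\|Z\|+a\le\|Z'\|\bigr)\text{-type radial integral}=\int_{\bbR^d}(2\pi)^{-d/2}e^{-(\|z\|+a)^2/2}\,dz .
\]
For \(d=1\) this is exactly \(2\bar\Phi(a)\), where \(\bar\Phi\) is the Gaussian tail. Mills' ratio \(\bar\Phi(a)\approx e^{-a^2/2}/(a\sqrt{2\pi})\) for large \(a\) then gives \(1/\varepsilon\approx\sqrt{2\pi}\,a\,e^{a^2/2}\), up to the factor 2, which the ``\(\approx\)'' absorbs (or which is handled by the chosen convention).

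For general \(d\), pass to polar coordinates and expand \((r+a)^2=r^2+2ar+a^2\):
\[
\varepsilon=e^{-a^2/2}\,\bbE\bigl[e^{-a\|Z\|}\bigr],\qquad Z\sim\cN(0,I_d).
\]
For \(d\gg1\), \(\|Z\|\) concentrates around \(\sqrt d\) with \(O(1)\) fluctuations. This gives \(\bbE e^{-a\|Z\|}\approx e^{-a\sqrt d}\) up to lower-order factors, and therefore \(1/\varepsilon\approx\exp(a\sqrt d+a^2/2)\).

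The main obstacle is making this asymptotic step honest, since the statement uses ``\(\approx\)''. One option is Laplace's method on the chi density \(r^{d-1}e^{-r^2/2-ar}\); its maximiser is \(r^*\approx\sqrt{d}-a/2\), which produces correction terms of order \(e^{a^2/8}\), consistent with ``\(\approx\)'' only at logarithmic precision in \(d\). I would state the result as an equivalence of exponents and record the exact identity \(1/(1-\rho)=1/\varepsilon\). A further caveat is that the Doeblin rate is only an upper bound on the true \(\rho\), so the displayed formula should be read as the rate this proof certifies.
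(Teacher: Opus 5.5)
Your proposal follows the paper's proof: the same whole-space minorization by the radial density $(2\pi\sigma^2)^{-d/2}e^{-(\|y\|+B)^2/(2\sigma^2)}$ giving $\rho=1-\nu(\cX)$, the same exact value $2(1-\Phi(a))$ for $d=1$ with a Mills-ratio estimate, and the same identity $\nu(\cX)=e^{-a^2/2}\,\bbE e^{-a\|Z\|}$ with concentration of $\|Z\|$ near $\sqrt d$; the only difference is that you prove the Doeblin contraction by coupling, where the paper cites Meyn--Tweedie's Theorem~16.2.2. One small correction to your caveat: since $\mathrm{Var}\|Z\|\to 1/2$, the Laplace correction is of order $e^{a^2/4}$, not $e^{a^2/8}$, and Jensen's inequality $\bbE e^{-a\|Z\|}\ge e^{-a\bbE\|Z\|}\ge e^{-a\sqrt d}$ already makes the displayed expression a rigorous bound, $1/(1-\rho)\le \exp(a\sqrt d+a^2/2)$ for every $d$.
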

\begin{proof}
We begin by constructing transition kernel defined by the process \( (X_t)_{t \ge 0} \).
\[
P(x, A) = \int_A \underbrace{\frac{1}{(2 \pi \sigma^2)^{d/2}} \exp \left( - \frac{\norm[\cX]{x' - b(x)}^2}{2 \sigma^2}\right)}_{p(x, x')} dx',
\qquad x \in \cX, A \in \cB(\cX).
\]
Note that \( \operatorname{supp} P(x, \cdot) = \cX \) for every \( x \in \cX \). Therefore, Markov chain \( (X_t)_{t \in \bbN_0} \) is \( \psi \)-irreducible with respect to Lebesgue measure and aperiodic. By boundedness,
\begin{equation}
\norm[\cX]{x' - b(x)} \leq \norm[\cX]{x'} + B,
\qquad \forall x, x' \in \cX.
\label{eq:tilde-p-bound}
\end{equation}
Define \emph{transition density} \( \tilde p: \cX \to [0,1] \) and measure \( \nu \), so that for any \( x \in \cX \) and \( A \in \cB(\cX) \),
\[
\tilde p(x) = \frac{1}{(2 \pi \sigma^2)^{d/2}} \exp \left(- \frac{\left( \norm[\cX]{x} + B \right)^2}{2 \sigma^2}\right)
\quad \text{and} \quad
\nu(A) := \int_A \tilde p(x) dx.
\]
For every \( x \in \cX \), \( \tilde p(x) > 0 \), so that \( \nu \) is a non-trivial (non-zero) measure. Moreover, by~\eqref{eq:tilde-p-bound}, for every \( x \in \cX \) it holds \( \tilde p \leq p(x, \cdot) \). So that
\[
\forall x \in \cX, \, A \in \cB(\cX):
\quad P(x, A) \ge \nu(A).
\]
Therefore, by~\citet[Theorem~16.2.2]{meyn2012markov} there exists unique \( \pi \in \cP(\cX) \) such that for every \( x \in \cX \) and every \( n \ge 1 \),
\[
\norm[\mathrm{TV}]{ P^n(x, \cdot) - \pi } \leq \rho^n,
\]
where \( \rho := 1 - \nu(\cX) \).

We proceed by approximating \( \nu(\cX) \). Since \( \tilde p(x) \) depends only on the magnitude \( r = \norm[\cX]{x} \), we can use spherical coordinates:
\[
\nu(\cX) = \int_0^{+\infty} \frac{1}{(2 \pi \sigma^2)^{d/2}} \exp\!\left(- \frac{(r + B)^2}{2 \sigma^2}\right) \mathcal S_{d-1}(r) \, dr,
\]
where \( \mathcal S_{d-1}(r) = 2 \pi^{d/2} \Gamma(d/2)^{-1} r^{d-1}\) is a surface area of the sphere.
For \( d = 1 \), with a change of variable \( z := (x + B) / \sigma \) and \( dx = \sigma dz \), we get
\[
\nu(\cX)
= 2 \int_{a}^{+\infty} \frac{1}{\sqrt{2 \pi}} \exp \left(- \frac{z^2}{2}\right) dz
= 2(1 - \Phi(a)),
\]
where \( \Phi \) is standard normal CDF.
\begin{remark}
Note that for \( B = 0 \), i.e., no drift, yields \( \rho = 0 \) as expected.
\end{remark}
A simple approximation of \( \Phi \) e.g. by \citet{feller1991introduction,choudhury2007approximating} yields,
\[
1 - \rho = \nu(\cX) = 2(1 - \Phi(a))
\le \frac{\sqrt{2 \pi}}{a} \exp\left(-a^2/2\right).
\]

For large \( d \gg 1 \), we use the following observation. Let \( Z \sim \cN(0, I_d) \) and \( R := \norm[\cX]{Z} \). Then \( R \) has density
\[
p_R(x) = \frac{2}{2^{d/2} \Gamma(d/2)} r^{d-1} \exp(-r^2/2),
\]
so that with the change of variables \( r = \sigma t \) and \( dr = \sigma d t \),
\[
\nu(\cX)
= \exp\!\left(- \frac{a^2}{2}\right) \int_0^{+\infty} \exp\!\left(-a t\right) p_R(t)dt
= \exp\!\left(- \frac{a^2}{2}\right) \bbE\exp\left( - a R \right).
\]
For large \( d \), \( R \) is concentrated around \( R \approx \sqrt{d} \). Therefore,
\[
\nu(\cX) \approx \exp\!\left( - \frac{a^2}{2} - a\sqrt{d}\right).
\]
The claim is retrieved by inverting both bounds.
\end{proof}

\section{Operator theory results}\label{app_operators}
We start collecting some basic definitions and results.
\subsection{Basic definitions and notation}

Let $(\cH,\scal{\cdot}{\cdot}_\cH)$ and $(\cG,\scal{\cdot}{\cdot}_\cG)$ be real separable Hilbert spaces. 
An operator $A:\cH\to\cG$ is bounded if $\|A\|=\sup_{\|f\|_\cH\le 1}\|Af\|_\cG<\infty$. 
The adjoint operator $A^*:\cG\to\cH$ is defined by 
$\scal{Af}{g}_\cG=\scal{f}{A^*g}_\cH$ for all $f\in\cH$, $g\in\cG$. 
An operator $A:\cH\to\cH$ is called self-adjoint if $A=A^*$. It is positive if \(
\scal{Af}{f}_\cH \ge 0\),  \(\forall f\in\cH\). For $A:\cH\to\cG$ bounded, let $|A|=(A^*A)^{1/2}$.  Then there exists a partial isometry $U:\cH\to\cG$ such that $A=U|A|$, with  $\ker U=\ker A$, $U^*U$ the orthogonal projection onto 
$\overline{\operatorname{ran}(|A|)}\subseteq\cH$, and $UU^*$ the orthogonal projection onto 
$\overline{\operatorname{ran}(A)}\subseteq\cG$.
The trace of a positive operator 
$A$ is $\tr{A}=\sum_i \scal{Ae_i}{e_i}_\cH$, with $(e_i)$  any orthonormal basis of $\cH$. 
For $g\in\cG$ and $f\in\cH$, define the rank-one operator 
$g\otimes f:\cH\to\cG$ by
\(
(g\otimes f)h=\scal{h}{f}_\cH\,\)\(g\), \(~
 h\in\cH\).
If $f\in\cH$, then $f\otimes f:\cH\to\cH$ is a positive operator. 
If $\|f\|_\cH=1$, then $f\otimes f$ is an orthogonal projection. If $A\in\mathcal{L}(\mathcal{H})$, then $\sigma(A)$ denotes its spectrum. Recall that if $A$ is a bounded, positive and self-adjoint operator, then $\sigma(A)\subset[0,+\infty)$ and $\|A\|=\sup_{\sigma\in\sigma(A)} \sigma$.

An operator $A:\cH\to\cG$ is called Hilbert--Schmidt if $\tr{A^*A}<\infty$.
An operator $A:\cH\to\cH$ is trace class if $\tr{|A|}<\infty$.
The space of Hilbert--Schmidt operators is a Hilbert space with inner product 
$\scal{A}{B}_{\cL_2(\cH, \cG)}=\tr{A^*B}$ and is denoted by $\cL_2(\cH,\cG)$. When \(\cH=\cG\), we use the shorthand notation \(\cL_2(\cH)\).
The spaces of bounded and trace class operators are Banach spaces with norms 
$\nor{\cdot}$ and $\nor{\cdot}_{\mathrm{tr}}=\tr{|\cdot|}$, and are denoted by 
$\cL(\cH,\cG)$ and $\cL_1(\cH)$,  respectively.

\subsection{Operator estimates}
The following bound is useful
\begin{lemma}\label{lem:op_est}
Let $T\in\cL(\cH)$ be bounded, positive, and self-adjoint and let $\lambda>0$. 
For $0\le b \le a \le 1$,
\[
\|(T+\lambda I)^{-a}T^{\,b}\|\le \lambda^{\,b-a}.
\]
\end{lemma}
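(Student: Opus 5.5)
We reduce the operator inequality to a scalar inequality through the continuous functional calculus. Since $T$ is bounded, positive and self-adjoint, $\sigma(T)\subset[0,\|T\|]$. The operator $(T+\lambda I)^{-a}T^{b}$ equals $g(T)$, where
\[
g(s)=\frac{s^{b}}{(s+\lambda)^{a}}, \qquad s\in[0,\|T\|],
\]
with the convention $s^0=1$ when $b=0$. The function $g$ is continuous and nonnegative on $\sigma(T)$, so $g(T)$ is positive and self-adjoint. By the spectral mapping theorem and the fact recalled above that the norm of such an operator is the supremum of its spectrum,
\[
\|(T+\lambda I)^{-a}T^{b}\|=\sup_{s\in\sigma(T)} g(s)\le \sup_{s\ge 0} g(s).
\]
It therefore suffices to prove the scalar bound $g(s)\le \lambda^{b-a}$ for all $s\ge0$.

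For the scalar bound we split $g$ into two factors:
\[
g(s)=\Bigl(\frac{s}{s+\lambda}\Bigr)^{b}\,(s+\lambda)^{b-a}.
\]
The first factor lies in $[0,1]$ because $0\le s/(s+\lambda)<1$ and $b\ge 0$. The second factor is bounded by $\lambda^{b-a}$ because $s+\lambda\ge\lambda$ and the exponent $b-a\le 0$. Multiplying the two bounds gives $g(s)\le\lambda^{b-a}$.

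The edge cases are consistent with this argument. When $b=0$ the first factor is identically $1$, since $T^0=I$. When $a=b$ the second factor is $1$ and the bound reads $\|\cdot\|\le 1$. The only technical point is to justify identifying $(T+\lambda I)^{-a}T^b$ with $g(T)$, which follows because both fractional powers are continuous functions of the same self-adjoint operator $T$ and so commute. No step here is a real obstacle; the main care needed is the factorization, which is what makes the hypotheses $b\le a$ and $b\ge 0$ enter.
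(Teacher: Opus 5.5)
Your proof is correct and follows the paper's argument: you use the spectral theorem to reduce the claim to bounding $\sup_{s\ge 0} s^b/(s+\lambda)^a$, and then show this scalar supremum is at most $\lambda^{b-a}$. Your factorization $\bigl(s/(s+\lambda)\bigr)^b(s+\lambda)^{b-a}$ gives the same estimate as the paper, which instead rescales the spectral variable by $\lambda$ and uses $s^b\le(1+s)^b\le(1+s)^a$; you simply skip the substitution.
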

\begin{proof}
Since $T$ is positive and self-adjoint, by the spectral theorem
\[
\|(T+\lambda I)^{-a}T^{\,b}\|
=
\sup_{t\in\sigma(T)}\frac{t^b}{(t+\lambda)^a}
\le
\sup_{t\ge0}\frac{t^b}{(t+\lambda)^a}.
\]

Let $t=\lambda s$,  then
\[
\sup_{t\ge0}\frac{t^b}{(t+\lambda)^a}
\leq
\lambda^{\,b-a}\sup_{s\ge0}\frac{s^b}{(1+s)^a}\le 1
\]
where the last inequality follows noting that 
{$s^b\le (1+s)^b\leq (1+s)^a$} for all $s\ge0$, since $b\le a$.
\end{proof}
In particular we have the following corollary.
\begin{corollary}\label{cor:op_est}
Let $A\in\cL(\cH)$ and $\lambda>0$. For $ a\in [1/2,1]$,
\[
\|(A^*A+\lambda I)^{-a}A^*\|
\le
\lambda^{\frac12-a}.
\]
\end{corollary}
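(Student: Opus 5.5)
The plan is to reduce the claim to Lemma~\ref{lem:op_est} through the polar decomposition recalled at the beginning of this appendix, writing $A = U|A|$ with $|A| = (A^*A)^{1/2}$ and $U$ a partial isometry. Taking adjoints gives $A^* = |A|\,U^*$, since $|A|$ is self-adjoint.

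Substituting, I obtain
\[
(A^*A+\lambda I)^{-a}A^* = (|A|^2+\lambda I)^{-a}\,|A|\,U^*,
\]
and therefore
\[
\nor{(A^*A+\lambda I)^{-a}A^*} \le \nor{(|A|^2+\lambda I)^{-a}|A|}\,\nor{U^*} \le \nor{(|A|^2+\lambda I)^{-a}|A|},
\]
using that a partial isometry has norm at most one. Next I set $T := A^*A = |A|^2$. This operator is bounded, positive and self-adjoint, and by functional calculus $|A| = T^{1/2}$. The remaining factor is then $\nor{(T+\lambda I)^{-a}T^{1/2}}$.

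Now I apply Lemma~\ref{lem:op_est} with $b = 1/2$. The hypothesis $0 \le b \le a \le 1$ holds exactly because $a \in [1/2,1]$. The lemma yields the bound $\lambda^{1/2-a}$, which is the claim.

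There is no real obstacle here. The one point that needs care is the identity $|A| = T^{1/2}$ and its commutation with $(T+\lambda I)^{-a}$, both of which follow from the continuous functional calculus for $T$. As an alternative that avoids the polar decomposition, I could use
\[
\nor{BA^*}^2 = \nor{BA^*AB^*} \quad \text{with } B = (T+\lambda I)^{-a}.
\]
Here $BA^*AB^* = T(T+\lambda I)^{-2a}$, and by the spectral theorem its norm is bounded by $\sup_{t\ge0} t/(t+\lambda)^{2a}$. This is at most $\lambda^{1-2a}$ when $2a \ge 1$ and $2a \le 2$; equivalently, it is Lemma~\ref{lem:op_est} applied with exponents $(2a, 1)$ after rescaling. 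Taking square roots again gives $\lambda^{1/2-a}$.
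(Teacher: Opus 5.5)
Your main argument is correct and is exactly the paper's proof: write $A^* = |A|U^*$ via the polar decomposition, drop the partial isometry (norm at most one), and apply Lemma~\ref{lem:op_est} with $T=A^*A$ and $b=1/2$. In your alternative route, note that Lemma~\ref{lem:op_est} as stated requires the outer exponent to be at most $1$, so for $a>1/2$ you should rely on your direct estimate $\sup_{t\ge0} t/(t+\lambda)^{2a}\le\lambda^{1-2a}$ (which is valid for $2a\ge 1$) rather than invoking the lemma with exponents $(2a,1)$.
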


\begin{proof}
Since $A^*=|A|U^*$ for a partial isometry $U$, we have
\[
\|(A^*A+\lambda I)^{-a}A^*\|
\le
\|(A^*A+\lambda I)^{-a}|A|\|.
\]
Applying the lemma with $T=A^*A$ and $b=\tfrac12$ gives the claim. 
\end{proof}
 
 The following application of the Neumann series will also be useful. 

\begin{lemma}\label{lem:op_neu_est}
Let $A,B\in\cL(\cH)$ be positive self-adjoint operators, and for  $\lambda>0$ denote \(A_\lambda=A+\lambda I\) and \(B_\lambda=B+\lambda I\).
If  $\|(B-A)A_\lambda^{-1}\|<1$,  then
\[
\|A^{1/2}B_\lambda^{-1}\|
\le \frac{\|A^{1/2}A_\lambda^{-1}\|}{1-\|(B-A)A_\lambda^{-1}\|}.
\]
In particular, if $\|(B-A)A_\lambda^{-1}\|\le \tfrac12$, then
\[
\|A^{1/2}B_\lambda^{-1}\|\le 2\|A^{1/2}A_\la^{-1}\| \leq 1/\sqrt{\la}. \]
\end{lemma}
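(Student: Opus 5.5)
The plan is a second-order resolvent identity followed by a Neumann series. Writing $B_\lambda = A_\lambda + (B-A)$, we factor
\[
B_\lambda = \bigl(I + (B-A)A_\lambda^{-1}\bigr)A_\lambda ,
\]
which is valid since both sides equal $A_\lambda + (B-A)$. Set $K := (B-A)A_\lambda^{-1}$. Because $\|K\|<1$, the Neumann series shows that $I+K$ is invertible with $\|(I+K)^{-1}\| \le 1/(1-\|K\|)$. Since $B$ is positive, $B_\lambda$ is invertible anyway, so
\[
B_\lambda^{-1} = A_\lambda^{-1}(I+K)^{-1}.
\]

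Multiplying on the left by $A^{1/2}$ gives $A^{1/2}B_\lambda^{-1} = A^{1/2}A_\lambda^{-1}(I+K)^{-1}$. Submultiplicativity of the operator norm then yields
\[
\|A^{1/2}B_\lambda^{-1}\| \le \|A^{1/2}A_\lambda^{-1}\|\,\|(I+K)^{-1}\| \le \frac{\|A^{1/2}A_\lambda^{-1}\|}{1-\|K\|},
\]
which is the first claim. The one point to get right is the order of the factorization: $A_\lambda$ must sit on the right of $(I+K)$. That is exactly what places $A_\lambda^{-1}$ adjacent to $A^{1/2}$ in the inverse and makes the perturbation appear in the form $(B-A)A_\lambda^{-1}$ used in the hypothesis.

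For the second claim, $\|K\|\le 1/2$ gives the factor $1/(1-\|K\|) \le 2$. Lemma~\ref{lem:op_est} with $T=A$, $a=1$, $b=1/2$ gives $\|A^{1/2}A_\lambda^{-1}\| \le \lambda^{-1/2}$, so we obtain $\|A^{1/2}B_\lambda^{-1}\| \le 2\lambda^{-1/2}$.

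There is an obstacle here: the stated final bound is $1/\sqrt{\lambda}$, not $2/\sqrt{\lambda}$. The spectral sharpening $\sup_{t\ge0} \sqrt t/(t+\lambda) = 1/(2\sqrt\lambda)$ (attained at $t=\lambda$) gives $\|A^{1/2}A_\lambda^{-1}\| \le 1/(2\sqrt{\lambda})$. With it, $2\|A^{1/2}A_\lambda^{-1}\| \le 1/\sqrt{\lambda}$, which recovers the stated constant. I would include this one-line calculus computation, checking it via AM--GM: $t+\lambda \ge 2\sqrt{t\lambda}$, hence $\sqrt t/(t+\lambda)\le 1/(2\sqrt\lambda)$.
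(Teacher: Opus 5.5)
Your proposal is correct and matches the paper's proof step for step: the factorization $B_\lambda = (I+(B-A)A_\lambda^{-1})A_\lambda$, the Neumann-series bound on $\|(I+K)^{-1}\|$, submultiplicativity, and, for the final constant, the spectral-calculus estimate $\|A^{1/2}A_\lambda^{-1}\|\le 1/(2\sqrt{\lambda})$ in place of the weaker $\lambda^{-1/2}$ given by Lemma~\ref{lem:op_est}. Your AM--GM check simply spells out the one-line computation the paper leaves implicit.
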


\begin{proof}
Note that 
\[
B_\lambda
= A_\lambda + (B-A)
= \bigl(I + (B-A)A_\lambda^{-1}\bigr)A_\lambda.
\]
 Note that $\bigl(I + (B-A)A_\lambda^{-1}\bigr)=B_\la A_\la^{-1}$ and is therefore  invertible.
Hence,
\[
B_\lambda^{-1}
=
A_\lambda^{-1}\bigl(I+(B-A)A_\lambda^{-1}\bigr)^{-1}.
\]
Multiplying by $A^{1/2}$ gives
\[ A^{1/2}B_{\la}^{-1}=A^{1/2}A_{\lambda}^{-1}\bigl(I+(B-A)A_\lambda^{-1}\bigr)^{-1}.
\]
Hence
\begin{align}\label{eq:ABl}
\|A^{1/2}B_\lambda^{-1}\|
\le
\|A^{1/2}A_{\lambda}^{-1}\|\,
\|(I+(B-A)A_\lambda^{-1})^{-1}\|
\end{align}
Then,  recall the Neumann series: if $T\in\cL(\cH)$ and $\|T\|<1$, then $I+T$ is invertible and
\[
(I+T)^{-1}=\sum_{k=0}^{\infty}(-T)^k .
\]
Moreover,
\[
\|(I+T)^{-1}\|
\le
\sum_{k=0}^{\infty}\|T\|^k
=
\frac{1}{1-\|T\|}.
\]
Applying the above result to \(T=(B-A)A_\lambda^{-1}\), since $\|(B-A)A_\lambda^{-1}\|<1$, from \eqref{eq:ABl} we obtain 
\[\|A^{1/2}B_\lambda^{-1}\|
\le
\frac{\|A^{1/2}A_{\lambda}^{-1}\|}{1-\|(B-A)A_\lambda^{-1}\|}.
\]
If $\|(B-A)A_\lambda^{-1}\|\le \tfrac12$, then $(1-\|(B-A)A_\lambda^{-1}\|)^{-1}\le 2$, which gives the second bound. To conclude note that by  spectral calculus we have $\|A^{1/2}A_{\lambda}^{-1}\|\le 1/(2\sqrt{\la})$. 
\end{proof}

Finally, we will use the following convergence result.

\begin{lemma}\label{dom_conv}
Let $T\in\cL(\cH)$ be positive, self-adjoint and compact, and let 
$T_\lambda=T+\lambda I$ for $\lambda>0$. Then for every 
\(z\in \overline{\operatorname{ran} (T)}=(\operatorname{ker} (T))^\perp\),
\[
\lim_{\lambda\to0}\lambda\| T_\lambda^{-1}z\|=0 .
\]
\end{lemma}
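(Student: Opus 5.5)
The natural route is the spectral theorem applied to the compact positive self-adjoint operator \(T\). Let \((\sigma_j,e_j)_{j}\) be an orthonormal system of eigenvectors of \(T\) associated with its nonzero eigenvalues \(\sigma_j>0\). By compactness and self-adjointness, this system is an orthonormal basis of \(\overline{\operatorname{ran}(T)}=(\ker T)^\perp\). For \(z\) in this subspace we can then expand \(z=\sum_j \scal{z}{e_j}e_j\) with \(\sum_j \scal{z}{e_j}^2=\nor{z}^2<\infty\). Since \(T_\lambda^{-1}e_j=(\sigma_j+\lambda)^{-1}e_j\), this gives the explicit expression
\[
\lambda^2\nor{T_\lambda^{-1}z}^2=\sum_j \left(\frac{\lambda}{\sigma_j+\lambda}\right)^2 \scal{z}{e_j}^2 .
\]

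The limit \(\lambda\to 0\) is then handled by dominated convergence for series. Each summand is bounded by \(\scal{z}{e_j}^2\), which is summable and independent of \(\lambda\), because \(\lambda/(\sigma_j+\lambda)\le 1\). For each fixed \(j\), the factor \(\lambda/(\sigma_j+\lambda)\) tends to \(0\) as \(\lambda\to0\), precisely because \(\sigma_j>0\). Dominated convergence therefore gives that the sum tends to \(0\), and taking square roots proves the claim.

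An equivalent elementary argument avoids invoking dominated convergence. Fix \(\varepsilon>0\) and choose \(J\) such that the tail \(\sum_{j>J}\scal{z}{e_j}^2<\varepsilon\). Bound the tail contribution by \(\varepsilon\). The finitely many remaining terms are each at most \(\lambda^2\sigma_j^{-2}\scal{z}{e_j}^2\), so they vanish as \(\lambda\to0\), since \(\min_{j\le J}\sigma_j>0\).

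The only delicate point is where the hypothesis \(z\in\overline{\operatorname{ran}(T)}\) is actually used. It guarantees that \(z\) has no component in \(\ker T\). On \(\ker T\) one would instead have \(\lambda T_\lambda^{-1}z=z\), which does not vanish. Note also that \(\lambda\nor{T_\lambda^{-1}}\le 1\) holds uniformly in \(\lambda\), but this gives only boundedness, not convergence to zero, so the coordinatewise argument above is genuinely needed.
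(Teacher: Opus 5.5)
Your proposal is correct and follows essentially the same route as the paper. Both expand $z$ in the eigenbasis of the compact positive operator, obtain the explicit series for $\lambda^2\|T_\lambda^{-1}z\|^2$ over the nonzero eigenvalues, and pass to the limit by dominated convergence with the summable majorant $|\langle z,e_j\rangle|^2$. Your $\varepsilon$-tail argument is simply an unpacked version of that dominated-convergence step.
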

\begin{proof}
Since $T$ is positive, self-adjoint and compact, by the spectral theorem there exist  
$(\mu_j)_{j\ge1}\subset[0,\infty)$ and an orthonormal basis $(e_j)_{j\ge1}$  of $\cH$ such that
 \( T e_j = \mu_j e_j \). Hence
\[
T_\lambda^{-1} z = \sum_{j \ge 0} \frac{\langle z, e_j \rangle }{\mu_j + \lambda}\, e_j
\]
Since $z\in (\operatorname{ker} (T))^\perp$
\[
\lambda^2\| T_\lambda^{-1}z\|^2
=
\sum_{j:\mu_j\neq 0}\Big(\frac{\lambda}{\mu_j+\lambda}\Big)^2 |\langle z,e_j\rangle|^2 .
\]
For every $j$ such that $\mu_j\neq 0$, 
\[
0 < \frac{\lambda}{\mu_j+\lambda}< 1 , \qquad 
\lim_{\lambda\to0}\frac{\lambda}{\mu_j+\lambda}=0.
\]
Since $\sum_{j\ge1}|\langle z,e_j\rangle|^2=\|z\|_{\mathcal{H}}^2<\infty$, by dominated convergence \(\lim_{\la\to 0} \la^2\| T_\lambda^{-1}z\|^2=0. \)
\end{proof}

\subsection{Covariance and related operators}\label{sec:covop}

We recall basic properties of covariance (second moment) operators and related quantities. We consider a  probability measure  $\rho\in \cP(\cX)$ that later could  be chosen as the invariant measure $\pi$ or a Dirac measure on a sample, trajectory $x_0, \dots, x_{T-1}$.

Let   $\Sigma_\rho:\cH\to \cH$ be defined as 
$\Sigma_\rho= \int \Phi(x)\otimes \Phi(x)\,\rho(dx)$. 
The integral is well defined in the Bochner sense in view of Condition~\eqref{bound_phi}, since 
\[
\int \|\Phi(x)\otimes \Phi(x)\|\,\rho(dx)
=
\int \|\Phi(x)\|^2\,\rho(dx)
<
\kappa^2.
\]
It is easy to see that $\Sigma_\rho$ is self-adjoint, since for all $f,g\in\cH$
\[
\scal{\Sigma_\rho f}{g}_\cH=\int \scal{f}{\Phi(x)}_\cH \scal{\Phi(x)}{g}_\cH\,\rho(dx)
=\scal{f}{\Sigma_\rho g}_\cH .
\]
It is positive since for all $f\in\cH$
\[
\scal{\Sigma_\rho f}{f}_\cH=\int |\scal{f}{\Phi(x)}_\cH|^2\,\rho(dx)\ge 0 .
\]
Moreover, $\Sigma_\rho$ is trace class since
\( \tr{\Sigma}=\int \|\Phi(x)\|^2\,\rho(dx)\le \kappa^2 \), because of Condition~\eqref{bound_phi}.
Let $L^2(\rho)$ be the Hilbert space of square-integrable functions with respect to $\rho$ with inner product $\scal{f}{g}_\rho=\int f(x)g(x)\rho(dx)$.
Let $S_\rho :\cH\to L^2(\rho)$ be defined as 
\begin{align}\label{eq:defS}
   S_\rho w=\scal{w}{\Phi(\cdot)}_\cH 
\end{align}
which is bounded because of Condition~\eqref{bound_phi}. Then, the  adjoint operator $S_\rho ^*:L^2(\rho)\to\cH$ is given by
\be\label{sstart}
S_\rho^*f=\int \Phi(x)f(x)\,\rho(dx),
\ee
and the covariance operator satisfies 
\be\label{cov_sstar}
\Sigma_\rho=S_\rho ^*S_\rho.
\ee
Let $L_\rho:L^2(\rho)\to L^2(\rho)$ be defined as 
\be\label{int_op}
L_\rho=S_\rho S_\rho^* .
\ee
It is easy to see that $L_\rho$ is self-adjoint and positive. Moreover, it is trace class and \( \tr{L_\rho}=\tr{\Sigma_\rho}.\)

\section{Concentration inequalities for Markov chains}
\label{sec:concentration-inequalities-markov}
\label{sec:proof-concentration-poisson-solution}

In this section, we consider Hilbert-space-valued observables $f$, and  derive concentration inequalities for partial sums of the type $1/T \sum_{t=0}^{T-1} f(X_t)$ under uniform geometric ergodicity (see Definition~\ref{UGE}) of the Markov chain $(X_t)_{t \in \bbN}$. The proof relies on a martingale decomposition induced by the Poisson equation for a \( \pi \)-centered observable \( f - \pi f \), where 
 \( \pi f = \int f d \pi \).
A bounded solution yields a decomposition of the partial sum into a martingale term and a remainder, enabling the use of concentration inequalities for Banach-valued martingales. We, therefore, first show that uniform ergodicity of $(X_t)_{t \in \bbN_0}$ guarantees the existence of such a bounded solution, and then derive the concentration bounds.

This approach extends ideas in \citet{gordin1969central} and \cite{glynn2002hoeffding} from scalar to Hilbert space valued observables.

\medskip \noindent
We introduce the setting of this section. Let \( \left( \cX, \cB(\cX) \right) \) be a measurable space and let \( \left( X_t \right)_{t \in \bbN_0} \) be a time-homogeneous Markov process taking values in \( \cX \) with transition kernel \( P: \cX \times \cB(\cX) \to [0,1] \).  We consider a real separable Hilbert space \( \left( \cH, \langle \cdot, \cdot \rangle_\cH \right) \), endowed with the \( \sigma \)-algebra $\cB(\cH)$ and let
\[
\cM_b(\cX,\cH)
:= \left\{f: \cX \to \cH \, \mid \, f \text{ measurable}, \sup_{x \in \cX} \norm[\cH]{f(x)} <+\infty \right\}.
\]
Note that here strong and weak measurability coincide, since $\cH$ is separable. The space $\cM_b(\cX,\cH)$ is a Banach space if endowed with the norm \( \norm[\infty]{ f } := \sup_{x \in \cX} \norm[\cH]{ f(x) } \).

\medskip \noindent
We are ready to state the main result of the section.

\subsection{Hoeffding inequality for Markov chains}

\begin{theorem}
\label{thm:ugeimplieshoeffding}
Let Assumption~\ref{UGE} hold.
Let \( f \in \cM_b(\cX,\cH) \) be such that \( \pi f = 0 \). Fix \( T \ge 1 \). Then, for every $\delta\in(0,1)$ the following holds with probability at least $1-\delta$
\begin{equation}
\label{eq:hoeffding-type-simple}
\norm[\cH]{\frac{1}{T} \sum_{t=0}^{T-1} f(X_t)}
\leq \frac{ 2C\|f\|_\infty }{(1-\rho)} \left(\sqrt{\frac{2 \log(2/\delta)}{T}} + \frac 1 T \right).
\end{equation}
\end{theorem}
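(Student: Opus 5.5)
We follow the Poisson-equation route announced at the start of the section. First, we construct a bounded solution of the Poisson equation. Define
\[
g(x) = \sum_{k=0}^{\infty} P^k f(x), \qquad P^k f(x) := \int f(y)\,P^k(x,dy),
\]
where the integrals are Bochner integrals in \(\cH\).

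Since \(\pi f = 0\), we can write \(P^k f(x) = \int f\, d(P^k(x,\cdot)-\pi)\). Bounding the Bochner integral of \(f\) against a signed measure by \(\|f\|_\infty\) times its total variation gives
\[
\norm[\cH]{P^k f(x)} \le 2\|f\|_\infty\, \nor{P^k(x,\cdot)-\pi}_{\mathrm{TV}} \le 2C\|f\|_\infty \rho^k .
\]
The factor \(2\) assumes the convention \(\|\mu\|_{\mathrm{TV}}=\sup_A|\mu(A)|\); it disappears if \(\mathrm{TV}\) denotes the full variation norm. Either way, the series converges in \(\cM_b(\cX,\cH)\) and
\[
\|g\|_\infty \le \frac{2C\|f\|_\infty}{1-\rho}
\]
(up to that convention factor). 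Measurability of \(g\) follows from measurability of each partial sum and pointwise limits. Telescoping gives \(g - Pg = f\).

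Next comes the martingale decomposition. We set
\[
D_{t+1} = g(X_{t+1}) - Pg(X_t), \qquad t=0,\dots,T-1 .
\]
By the Markov property, \(\bbE[D_{t+1}\mid\cF_t]=0\), where we use that conditional expectation commutes with Bochner integrals of bounded \(\cH\)-valued functions. Moreover, \(\norm[\cH]{D_{t+1}}\le 2\|g\|_\infty\). Then
\[
\sum_{t=0}^{T-1} f(X_t) = \sum_{t=0}^{T-1}\bigl(g(X_t)-Pg(X_t)\bigr) = \sum_{t=1}^{T} D_t + g(X_0) - g(X_T),
\]
so the remainder has norm at most \(2\|g\|_\infty\). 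After dividing by \(T\), this remainder produces the \(1/T\) term.

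For the martingale part, \(M_T=\sum_{t=1}^T D_t\) is a martingale with increments bounded by \(c:=2\|g\|_\infty\) in a Hilbert space, which is \((2,1)\)-smooth. We apply Pinelis' Azuma--Hoeffding inequality \citep{pinelis1994optimum}:
\[
\bbP\bigl(\nor{M_T}\ge r\bigr)\le 2\exp\!\Bigl(-\frac{r^2}{2Tc^2}\Bigr).
\]
This yields, with probability at least \(1-\delta\),
\[
\nor{M_T}/T \le c\sqrt{2\log(2/\delta)/T}.
\]
Combining the two pieces gives a bound of the form \(c\bigl(\sqrt{2\log(2/\delta)/T}+1/T\bigr)\).

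The main obstacle is bookkeeping of constants to land exactly on \(2C\|f\|_\infty/(1-\rho)\). The increment bound \(\norm{D_{t+1}}\le 2\|g\|_\infty\) and the remainder bound \(2\|g\|_\infty\) both risk an extra factor \(2\). To avoid this, we would tighten both bounds. We would use the TV convention that makes \(\|g\|_\infty\le C\|f\|_\infty/(1-\rho)\). We would also bound \(D_{t+1}\) more carefully, noting that it is a centered version of a variable taking values in a ball of radius \(\|g\|_\infty\), so \(\norm{D_{t+1}}\le 2\|g\|_\infty\) gives exactly the \(2C\|f\|_\infty/(1-\rho)\) scale. If the constants still come out off by a factor \(2\), we would adjust via the \(\mathrm{TV}\) normalization, since the structure of the bound is unaffected.
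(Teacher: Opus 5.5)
Your proposal is correct and follows essentially the same route as the paper: the Neumann-series solution $g=\sum_k P^k f$ of the Poisson equation, the Gordin-type decomposition into a martingale with increments bounded by $2\|g\|_\infty$ plus the remainder $g(X_0)-g(X_T)$, and Pinelis' Azuma--Hoeffding inequality in Hilbert space. The constant worry in your last paragraph is unnecessary: the paper uses the full-variation convention $\|\int f\,d\mu\|_{\cH}\le\|f\|_\infty\|\mu\|_{\mathrm{TV}}$, so $\|g\|_\infty\le C\|f\|_\infty/(1-\rho)$, and $2\|g\|_\infty\bigl(\sqrt{2\log(2/\delta)/T}+1/T\bigr)$ is then exactly the stated bound with no further tightening.
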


\begin{remark}
In contrast to the classical Hoeffding inequality for i.i.d.\ samples, the bound in Theorem~\ref{thm:ugeimplieshoeffding} includes an additional term of order \( 1/T \). The term appears as the residual bias due to the truncation to a finite number of steps of the martingale constructed to represent the partial sum and vanishes as \( T \to \infty \) (as we will see in the proof). Moreover, the leading constants depend on the uniform geometric ergodicity assumption.
\end{remark}

\subsection{Poisson equation}

As mentioned above, the proof of Theorem~\ref{thm:ugeimplieshoeffding} is based on the existence of a solution of a corresponding  Poisson equation, see Definition\ref{eq:PE} We therefore first introduce  necessary notions and preliminary results , and postpone the proof of Theorem~\ref{thm:ugeimplieshoeffding}  to the end of the section.

With a slight abuse of notation we use \( P \) to denote the Markov operator 
\begin{align*}
P: \cM_b(\cX,\cH)& \to    \cM_b(\cX,\cH)&\\
f\quad &\mapsto  \quad P f, \qquad Pf(x) = \int f(x') P(x, dx'), \; x \in \cX. &
\end{align*}
Note that $Pf$ is measurable since $x \mapsto P(x,B)$ is measurable for every $B\in\cB({\cX})$. Moreover,
\begin{align}
\label{eq:nonexp}
\|Pf\|_{\infty}
:= \sup_{x \in \cX} \|Pf(x)\|_\cH
\leq \sup_{x \in \cX} \int \|f\|_{\infty} P(x, dx')=\|f\|_\infty,  
\end{align}
namely $P$ is nonexpansive on \( \cM_b(\cX,\cH) \), so that $Pf$ is well defined.

\begin{definition}[Poisson Equation]\label{eq:PE}
Let \(f \in \cM_b(\cX,\cH)\) be such that \( \pi f = 0 \). The \emph{Poisson equation} associated with \( f \) and \( P \) is the operator equation:
\begin{equation}
(I - P) g = f,
\label{eq:poisson-eq-def}
\end{equation}
where \( I \) denotes the identity operator on \(\cM_b(\cX,\cH)\).
\end{definition}
\noindent
See, e.g., \citet[Definition 21.2.1]{douc2018markov}.

A natural question is under which conditions on  $f$ and on the process $(X_t)_{t \in \bbN}$ the Poisson equation has a solution. In the next proposition we give an answer. 

\begin{proposition}[Poisson solution under uniform geometric  ergodicity]
\label{prop:poisson-geometric}
Under the conditions of Theorem~\ref{thm:ugeimplieshoeffding}, the Poisson equation~\eqref{eq:poisson-eq-def} admits a bounded solution given by the uniformly convergent von Neumann series
\[
g(x) = \sum_{k=0}^\infty P^k f(x), \quad x \in \cX.
\]
Moreover,
\[
\|g\|_{\infty} \leq \frac{C\|f\|_\infty }{1 - \rho}.
\]
\end{proposition}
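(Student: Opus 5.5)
The plan is to bound each term $P^k f$ pointwise using Assumption~\ref{UGE} and the centering $\pi f=0$. This gives a summable geometric majorant, from which we get uniform convergence of the series, the bound on $\|g\|_\infty$, and finally that $g$ solves~\eqref{eq:poisson-eq-def}.

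\textbf{Step 1: pointwise bound on $P^k f$.} Fix $x\in\cX$ and $k\ge 0$. Since $\pi f=0$, we can write
\[
P^k f(x)=\int f(x')\,\bigl(P^k(x,dx')-\pi(dx')\bigr).
\]
This is a Bochner integral against the signed measure $\mu=P^k(x,\cdot)-\pi$. For $\cH$-valued bounded measurable $f$ we have $\|\int f\,d\mu\|_\cH\le \int\|f\|_\cH\,d|\mu|\le \|f\|_\infty |\mu|(\cX)$.

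We must be careful with the normalization of the total variation norm. The factor $2$ appearing later in Theorem~\ref{thm:ugeimplieshoeffding} suggests the paper uses $\|\mu\|_{\mathrm{TV}}=|\mu|(\cX)$, in which case Assumption~\ref{UGE} gives $\|P^kf(x)\|_\cH\le C\rho^k\|f\|_\infty$ directly. If instead $\|\cdot\|_{\mathrm{TV}}=\sup_B|\mu(B)|$, the same bound follows from a sharper argument. Choose a unit vector $e$ and apply the real-valued estimate $|\int h\,d\mu|\le \tfrac12\operatorname{osc}(h)\,|\mu|(\cX)$ to $h=\langle f,e\rangle$, using $\mu(\cX)=0$. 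Then take the supremum over $e$, with $e$ aligned to $P^kf(x)$. Either way we get $\|P^kf\|_\infty\le C\rho^k\|f\|_\infty$. Measurability of $P^kf$ follows from the nonexpansiveness and measurability remarks around~\eqref{eq:nonexp}, iterated.

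\textbf{Step 2: convergence and the norm bound.} By Step 1, $\sum_k\|P^kf\|_\infty\le C\|f\|_\infty\sum_k\rho^k=C\|f\|_\infty/(1-\rho)<\infty$. The series therefore converges absolutely in the Banach space $\cM_b(\cX,\cH)$, i.e.\ uniformly in $x$. Its limit $g$ is measurable as a pointwise limit of measurable $\cH$-valued maps, which is valid since $\cH$ is separable. It also satisfies $\|g\|_\infty\le C\|f\|_\infty/(1-\rho)$.

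\textbf{Step 3: $g$ solves the Poisson equation.} $P$ is a bounded (nonexpansive) linear operator on $\cM_b(\cX,\cH)$ by~\eqref{eq:nonexp}, so it commutes with norm-convergent series. Hence $Pg=\sum_{k\ge1}P^kf$ and $(I-P)g=P^0f=f$. Equivalently, telescoping the partial sums gives $(I-P)\sum_{k<n}P^kf=f-P^nf\to f$.

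The only delicate point is Step 1: passing the Bochner integral through the signed measure and keeping the TV normalization consistent so that the constant is exactly $C$ rather than $2C$. The remaining steps are routine Neumann-series arguments.
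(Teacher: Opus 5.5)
Your argument is the paper's. You bound $\|P^kf(x)\|_\cH\le C\rho^k\|f\|_\infty$ from $\pi f=0$ and the duality $\|\int f\,d\mu\|_\cH\le\|f\|_\infty\|\mu\|_{\mathrm{TV}}$. You then sum the geometric majorant to get uniform convergence and the bound on $\|g\|_\infty$, and pass the nonexpansive $P$ through the limit. The paper does this with Cauchy partial sums and the identity $g_t-Pg_t=f-P^{t+1}f$, which is equivalent to your absolute-convergence argument. Under the convention $\|\mu\|_{\mathrm{TV}}=|\mu|(\cX)=\sup_{\|h\|_\infty\le 1}|\int h\,d\mu|$, which is the duality the paper's proof invokes, everything you wrote is correct.

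The hedge in Step 1 is wrong, however: under $\|\mu\|_{\mathrm{TV}}=\sup_B|\mu(B)|$ you do \emph{not} recover the constant $C$.

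For $\mu(\cX)=0$ one has $|\mu|(\cX)=2\sup_B|\mu(B)|$. Your oscillation estimate therefore gives $|\int h\,d\mu|\le\tfrac12\operatorname{osc}(h)\,|\mu|(\cX)=\operatorname{osc}(h)\sup_B|\mu(B)|\le 2\|f\|_\infty\sup_B|\mu(B)|$. The centering $\mu(\cX)=0$ is already spent in passing from $\sup|h|$ to $\tfrac12\operatorname{osc}(h)$, and $\operatorname{osc}(\langle f,e\rangle)$ can equal $2\|f\|_\infty$ even when $\pi f=0$.

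The factor $2$ cannot be removed. Take the two-state chain $P=\begin{pmatrix}1-a&a\\ a&1-a\end{pmatrix}$ with $0<a<\tfrac12$, so $\pi=(\tfrac12,\tfrac12)$, and let $f=(1,-1)$. Then $\sup_B|P^t(x,B)-\pi(B)|=\tfrac12(1-2a)^t$, so Assumption~\ref{UGE} holds in that normalization with $C=\tfrac12$ and $\rho=1-2a$. On the other hand $P^kf=(1-2a)^kf$, hence $\|g\|_\infty=\tfrac{1}{2a}=2C\|f\|_\infty/(1-\rho)$.

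So the proposition with constant $C$ holds only under the $|\mu|(\cX)$ convention; with the $\sup_B$ convention the constant must be $2C$. Fix that convention explicitly rather than asserting the bound ``either way.''
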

\begin{proof}
{\emph Existence and boundedness of $g$.} For \( t \in \bbN \), define 
\[
g_t(x) = \sum_{k=0}^t P^kf(x), \quad x \in \cX.
\]
Fix \( m>t \). By the triangle inequality,
\[
\|g_m-g_t\|_\infty
\le \sup_{x \in \cX}\sum_{k=t+1}^{m}\gnorm{P^k f(x)}_{\cH}.
\]
Fix \(k \in \bbN\). Using \(\pi f = 0\) and the duality between the total variation norm  and the supremum norm on bounded measurable functions,
\begin{align}
\norm[\cH]{P^k f(x)}
&= \norm[\cH]{ P^k f(x) - \pi f } \notag \\
&= \norm[\cH]{\int f d \bigl(P^k(x,\cdot)-\pi\bigr)} \notag \\
&\le \nor{f}_\infty \, \norm[\mathrm{TV}]{P^k(x,\cdot)-\pi}
\le \nor{f}_\infty \cdot C \rho^k.
\label{eq:poisson-iterate-geom-bound}
\end{align}
Therefore,
\[
\|g_m-g_t\|_\infty
\le \|f\|_\infty \cdot C \left(\sum_{k=n+1}^{m}\rho^k\right)
\le  \|f\|_\infty \cdot \frac{C \rho^{n+1}}{1-\rho}\xrightarrow[]{n\to\infty}0,
\]
uniformly with respect to \(m\ge t\). Hence \((g_t)_t\) is a Cauchy sequence and converges in \(\|\cdot\|_\infty\) to some \(g\), and the convergence is uniform. In particular, \(g\) is  measurable, and for every \( x \in \cX \),
\[
g(x) = \sum_{k=0}^\infty P^k f(x), 
\]
so that
\[
\big\| g(x) \big\|_{\cH} \leq \sum_{k=0}^\infty \left\| P^k f(x) \right\|_{\cH}
\leq  \|f\|_\infty \cdot C \sum_{k=0}^\infty \rho^k
\leq \|f\|_\infty \cdot \frac{C}{1 - \rho}.
\]
Therefore, \( \|g\|_\infty \leq C\|f\|_{\infty}/{(1-\rho)} \).

\medskip \noindent
{\bf \( g \) solves Poisson equation.} Note that
\[
Pg_t(x)=\sum_{k=0}^t P^{k+1}f(x).
\]
Hence, for every \( x  \in \cX \),
\[
g_t(x)-Pg_t(x)
= \sum_{k=0}^t \left(P^{k}f(x)- P^{k+1}f(x)\right)
= f(x)-P^{t+1}f(x).
\]
Taking \(t \to \infty\), we have \(\|P^{t+1}f\|_\infty\le  \|f\|_\infty C\rho^{t+1} \to  0\), so that \(\|P^{t+1}f\|_{\infty} \to 0\). Also, since \(P\) is nonexpansive on \( \cM_b(\cX, \cH) \) with respect to \(\|\cdot\|_\infty\) (see equation~\ref{eq:nonexp}),
\[
\|Pg_t-Pg\|_\infty \le \|g_t-g\|_\infty \to 0.
\]
Thus \(Pg_t \to Pg\) uniformly and we can pass to the limit in \(g_t-Pg_t=f-P^{t+1}f\), obtaining
\[
g-Pg=f,
\]
as claimed.
\end{proof}

\subsection{Martingale decomposition}

The existence of a solution of the Poisson equation is crucial, since it allows to decompose the partial sum $\sum_{t=0}^{T-1} f(X_t)$ into a martingale plus a remainder, as we show below.  

\begin{lemma}\label{lem:mardec}
Let \(f \in \cM_b(\cX,\cH) \) be such that \( \pi f = 0 \). Suppose that $g \in \cM_b(\cX,\cH)$ is a solution of the Poisson equation \eqref{eq:poisson-eq-def}. Fix \( T \in \bbN \). Define
\[
M_T=\sum_{t=1}^{T} \big( g(X_{t}) - P g(X_{t-1}) \big)
\quad\text{ and } \quad
R_T = g(X_0) - g(X_T).
\]
Then,
\[
\sum_{t=0}^{T-1} f(X_t)= M_{T} + R_{T}.
\]
Moreover,
$(M_t)_{t \geq 1}$ is a martingale with respect to the natural filtration \( \cF_t = (\sigma(X_0,\ldots,X_t) )_{t \in \bbN}\).
\end{lemma}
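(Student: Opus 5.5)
The plan has two parts: an algebraic identity and a martingale verification, both using only the Poisson equation $(I-P)g=f$ from Definition~\ref{eq:PE} and the Markov property.

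\emph{Decomposition.} Since $g$ solves the Poisson equation, $f(x)=g(x)-Pg(x)$ for every $x\in\cX$. Summing this over the trajectory gives
\[
\sum_{t=0}^{T-1} f(X_t)=\sum_{t=0}^{T-1}\bigl(g(X_t)-Pg(X_t)\bigr).
\]
Next I add and subtract $\sum_{t=1}^{T} g(X_t)$. Reindexing $s=t+1$ in the $Pg$ terms turns $\sum_{t=0}^{T-1}Pg(X_t)$ into $\sum_{t=1}^{T}Pg(X_{t-1})$. The $g$ terms then telescope:
\[
\sum_{t=0}^{T-1} g(X_t)=\sum_{t=1}^{T} g(X_t)+g(X_0)-g(X_T).
\]
Putting these together yields
\[
\sum_{t=0}^{T-1} f(X_t)=\sum_{t=1}^{T}\bigl(g(X_t)-Pg(X_{t-1})\bigr)+g(X_0)-g(X_T)=M_T+R_T,
\]
which is the claimed identity. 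This step is purely deterministic and holds pathwise.

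\emph{Martingale property.} Let $D_t:=g(X_t)-Pg(X_{t-1})$, so that $M_t=\sum_{s=1}^t D_s$.

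\begin{itemize}
\item \textbf{Adaptedness.} $D_t$ is $\cF_t$-measurable because $g$ and $Pg$ are measurable. The latter was noted when the Markov operator was introduced on $\cM_b(\cX,\cH)$.
\item \textbf{Integrability.} Since $g\in\cM_b(\cX,\cH)$ and $P$ is nonexpansive by~\eqref{eq:nonexp}, we have $\nor{D_t}\le 2\nor{g}_\infty$. Hence $M_t$ is bounded and Bochner integrable.
\item \textbf{Martingale differences.} The key step is $\bbE[D_t\mid\cF_{t-1}]=0$. Equivalently, $\bbE[g(X_t)\mid\cF_{t-1}]=Pg(X_{t-1})$ a.s., with $Pg(X_{t-1})$ already $\cF_{t-1}$-measurable.
\end{itemize}

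The main (mild) obstacle is justifying this conditional expectation identity for Hilbert-valued $g$. The Markov property~\eqref{MP} is stated only for indicators, i.e.\ $\bbE[\mathbf 1_B(X_t)\mid\cF_{t-1}]=P(X_{t-1},B)$. I will extend it in two stages:
\begin{itemize}
\item \textbf{Real-valued case.} For bounded measurable real functions the identity follows by linearity (simple functions) and then dominated convergence.
\item \textbf{Hilbert-valued case.} Fix an orthonormal basis $(e_j)$ of the separable space $\cH$. For each $j$, $\scal{g(\cdot)}{e_j}$ is real, bounded and measurable, so the real case applies to it. Since the Bochner integral commutes with bounded linear functionals, $\scal{Pg(x)}{e_j}=P\scal{g}{e_j}(x)$. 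The same commutation holds for conditional expectation, which is characterized by integrals over sets $G\in\cF_{t-1}$, so
\[
\scal{\bbE[g(X_t)\mid\cF_{t-1}]}{e_j}=\bbE\bigl[\scal{g(X_t)}{e_j}\mid\cF_{t-1}\bigr]=P\scal{g}{e_j}(X_{t-1})=\scal{Pg(X_{t-1})}{e_j}\quad\text{a.s.}
\]
for each $j$. Taking the countable intersection of these full-measure events gives equality of the $\cH$-valued random variables a.s.
\end{itemize}

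Then $\bbE[M_t\mid\cF_{t-1}]=M_{t-1}+\bbE[D_t\mid\cF_{t-1}]=M_{t-1}$, so $(M_t)_{t\ge1}$ is an $\cF_t$-martingale. The hypothesis $\pi f=0$ is not needed here; it only enters through the existence of $g$ given by Proposition~\ref{prop:poisson-geometric}.
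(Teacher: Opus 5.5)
Your proposal is correct and follows essentially the same route as the paper. Both use the same telescoping identity from $f = g - Pg$, and both show that $d_t = g(X_t) - Pg(X_{t-1})$ has zero conditional mean given $\cF_{t-1}$ by the Markov property. Your coordinate-wise extension of the Markov property to bounded $\cH$-valued $g$, with the accompanying adaptedness and integrability checks, fills in details the paper only asserts.
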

\begin{proof}
Since \( g \) is a solution to the Poisson equation, for any \( t \in \bbN \),
\[
f(X_t) = g(X_t) - P g(X_t),
\]
so that
\begin{align*}
\sum_{t=0}^{T-1} f(X_t)
&= \sum_{t=0}^{T-1} \big( g(X_{t+1}) - P g(X_t) \big) + \sum_{t=0}^{T-1} \big( g(X_t)-g(X_{t+1})\big) \\
&= \underbrace{\sum_{t=1}^{T} \big( g(X_{t}) - P g(X_{t-1}) \big)}_{M_T} +  \underbrace{g(X_0)-g(X_{T})}_{R_T}.
\end{align*}
We next show that $(M_t)_{t \geq 1}$ is a martingale. Define the  \( \cH \)-valued process \( (d_t)_{t \geq 1} \) as
\begin{equation}
\label{eq:dt-def}
d_t := g(X_{t}) - P g(X_{t-1}).
\end{equation}
Clearly $M_T=\sum_{t=1}^{T} d_t$.  Consider the natural filtration \( \cF_{t} = \sigma(X_0, X_1, \ldots, X_t)\). For \( t \geq 1 \),
by the measurability of \( P g \) and the Markov property,
\begin{align*}
\bbE[d_t \mid \cF_{t-1}] 
= \underbrace{\bbE[g(X_t) \mid \cF_{t-1}]}_{P g(X_{t-1})} - \underbrace{\bbE[P g(X_{t-1}) \mid \cF_{t-1}]}_{P g(X_{t-1})}
= 0.
\end{align*}
Thus, the process \( (d_t)_{t \ge 1} \) is adapted to the filtration \( \cF = (\cF_t)_{t \ge 1} \) and satisfies the martingale difference property. Consequently, \( M_T \) is a  martingale in \( \cH \) with respect to \( \cF \). 
\end{proof}

\begin{remark}
This decomposition originates from \citet{gordin1969central} in the study of stationary Markov chains.
\end{remark}

\subsection{Proof of Theorem~\ref{thm:ugeimplieshoeffding}.}

Provided with the martingale  decomposition in  Lemma~\ref{lem:mardec} we can  apply known concentration results to prove Theorem~\ref{thm:ugeimplieshoeffding}.

Theorem~\ref{thm:ugeimplieshoeffding}.
\begin{proof}[Proof of Theorem~\ref{thm:ugeimplieshoeffding}]
The Poisson equation~\ref{eq:poisson-eq-def} with right hand side $f$ has a solution \( g \in \cM_b(\cX, \cH) \) due to  Proposition~\ref{prop:poisson-geometric}. Moreover $\|g\|_\infty \leq C\|f\|_\infty/(1-\rho)$. Using Lemma~\ref{lem:mardec}, we have 
\[
\sum_{t=0}^{T-1} f(X_t)=M_{T}+R_{T}=\sum_{t=1}^{T} d_t +R_{T},
\]
where $(d_t)_{t \ge 1}$ is defined in~\eqref{eq:dt-def}. Since \( P \) is nonexpansive on \( \cM_b(\cX, \cH) \), we have \( \|P g\|_{\infty} \leq \|g\|_{\infty} \). Thus, for any \( t \in \bbN \), by the triangle inequality,
\[
\big\|d_t\big\|_{\cH}
\leq \big\|g(X_t)\big\|_{\cH} + \big\|P g(X_{t-1})\big\|_{\cH}
\leq 2 \|g\|_{\infty}.
\]
We now apply, a concentration inequality for bounded Banach space-valued martingales \citet[Theorem 3.5]{pinelis1994optimum}, for every \( \varepsilon > 0 \),
\[
\bbP \left( \norm[\cH]{M_T} \ge \varepsilon \right)
\leq 2 \exp \left\{ - \frac{\varepsilon^2}{2 T \cdot (2 \| g \|_\infty)^2} \right\}.
\]
Denoting the right hand side with \( \delta \) and solving for \( \varepsilon \), we get for any \( \delta \in (0, 1) \) that the   following inequality holds with probability at least \( 1 - \delta \),
\[
\norm[\cH]{M_T} \leq \sqrt{8 T \| g\|_\infty^2 \log(2/\delta)}.
\]

Next, observe that by the triangle inequality, the remainder term satisfies
\[
\norm[\cH]{R_{T}}
\leq \norm[\cH]{g(X_1)} + \norm[\cH]{g(X_T)}
\leq 2 \|g\|_{\infty}.
\]
Thus, applying the triangle inequality one more time, we conclude that
\[
\norm[\cH]{\sum_{t=0}^{T-1} f(X_t)}
\leq \norm[\cH]{M_{T}} + \norm[\cH]{R_{T}} 
\leq 2 \|g\|_{\infty} \sqrt{2 T \log(2/\delta) } + 2 \|g\|_{\infty}
\]
with probability at least \( 1 - \delta \). Finally, dividing both sides by \( T \), we get
\[
\left\| \frac{1}{T} \sum_{t=0}^{T-1} f(X_t) \right\|_{\cH}
\leq 2 \|g\|_{\infty} \left( \sqrt{\frac{2 \log\left( 2 / \delta \right)}{T}} + \frac{1}{T} \right)
\leq \frac{2 C \| f \|_\infty}{ 1 - \rho } \left( \sqrt{\frac{2 \log\left( 2 / \delta \right)}{T}} + \frac{1}{T} \right),
\]
as claimed.
\end{proof}

\subsection{Pinelis--Bernstein inequality}

In the non-asymptotic analysis of the least squares estimator, we will also use another concentration inequality for Hilbert-valued martingales. The result is a direct consequence of the Bernstein-type tail inequality established by \citet[Theorem~3.3]{pinelis1994optimum}. For clarity, we reproduce below a slightly simplified formulation of this theorem, tailored to the setting and notation used in our analysis. See also \citet[Proposition~6]{DeMol2009} for an analogous inequality for sums of independent Hilbert-valued random variables.

\begin{theorem}\label{thm:pinelis} Let $(d_t)_{t \in \bbN_0}$ be a martingale difference sequence with respect to filtration \( (\cF_t)_{t \in \bbN_0} \) taking values in a real separable Hilbert space $\cH$ and satisfying for every \( t \in \bbN_0 \)
\[
\bbE\left[\|d_t\|_\cH^m \mid \cF_{t-1} \right] \leq \frac{m!}{2}\sigma^2 M^{m-2}, \quad \forall m\geq 2
\]
for some constants $M, \sigma < \infty$. Then, for every $\delta\in(0,1)$,
\begin{equation*}
\bbP \left\{ \left\|\frac{1}{T}\sum_{t=0}^{T-1} d_t\right\|_{\cH}
\leq \sigma\sqrt{\frac{2\log(2/\delta)}{T}} +\frac{M\log(2/\delta)}{T}\right\} \ge 1 - \delta.
\end{equation*}
\end{theorem}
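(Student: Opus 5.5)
The statement is a specialisation of the Bernstein-type tail bound for Hilbert-space-valued martingales in \citet[Theorem~3.3]{pinelis1994optimum}. The plan has two parts: verify the hypotheses of that theorem, then invert its tail bound at level \(\delta\). I take for granted that Pinelis' bound, for a martingale \(f_j=\sum_{t\le j} d_t\) in a \((2,1)\)-smooth space such as \(\cH\), has the form
\[
\bbP\Bigl(\sup_{j\le T}\|f_j\|_\cH\ge r\Bigr)\le 2\exp\Bigl(-\frac{r^2}{B^2+B\sqrt{B^2+2br}}\Bigr).
\]
It applies whenever \(\sum_{t}\bbE[\|d_t\|^m\mid\cF_{t-1}]\le \frac{m!}{2}B^2 b^{m-2}\) almost surely for all \(m\ge2\). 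I have not checked that this is exactly the form in the reference, and the final constant depends on it.

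\textbf{Step 1 (hypotheses).} Summing the assumed conditional moment bound over the \(T\) increments gives the required condition with \(B^2=T\sigma^2\) and \(b=M\). Separability of \(\cH\) ensures the \(d_t\) are Bochner measurable. Since \(\|\sum_{t<T}d_t\|\le\sup_{j}\|f_j\|\), the maximal tail controls the endpoint.

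\textbf{Step 2 (inversion).} Set \(L=\log(2/\delta)\), \(s=\sigma\sqrt{2TL}\) and the candidate level \(r_0=s+ML\). It suffices to show \(r_0^2\ge L\bigl(B^2+B\sqrt{B^2+2Mr_0}\bigr)\), since the tail at \(r_0\) is then at most \(2e^{-L}=\delta\). Dividing \(r_0\) by \(T\) gives exactly \(\sigma\sqrt{2L/T}+ML/T\).

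Using \(LB^2=s^2/2\), the inequality becomes
\[
r_0^2-\tfrac{s^2}{2}\ \ge\ \sqrt{\tfrac{s^4}{4}+MLs^2r_0}.
\]
The left side equals \(M^2L^2+2MLs+s^2/2\ge0\), so we may square. Its square contains \(s^4/4+2MLs^3+M^2L^2s^2\) plus nonnegative terms. The right side squared is \(s^4/4+M^2L^2s^2+MLs^3\). The inequality therefore holds.

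\textbf{Main obstacle.} The obstacle is pinning down the exact exponent in Pinelis' theorem, because the coefficient of \(M\) in the final bound depends on it. If only the cruder form \(2\exp\bigl(-r^2/(2(B^2+br))\bigr)\) is available, the same inversion gives \(2ML/T\) in place of \(ML/T\). That would change only the constant in the scale term \(D_2\) used later in Proposition~\ref{prop:est-error}. I would cross-check the stated constants against the i.i.d.\ analogue in \citet[Proposition~6]{DeMol2009}.
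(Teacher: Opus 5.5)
Your route is the paper's: the paper gives no argument beyond specialising Pinelis' Theorem~3.3 with $B^2=T\sigma^2$ and $b=M$, and your Step~1 is fine. The gap is the tail bound you take for granted. The form $2\exp\bigl(-r^2/(B^2+B\sqrt{B^2+2br})\bigr)$ lacks the linear term $br$ in the denominator, and without it the inequality is false under your stated hypothesis. Take a single symmetric Laplace increment $d$ of scale $b$. Then $\bbE|d|^k=k!\,b^k=\frac{k!}{2}(2b^2)\,b^{k-2}$, so the hypothesis holds with $B^2=2b^2$. Your bound, however, decays like $\exp\bigl(-r^{3/2}/(2b^{3/2})\bigr)$, whereas $\bbP(|d|\ge r)=e^{-r/b}$. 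So Step~2 derives the claim from a false premise. The strict slack you find there (the extra nonnegative terms) is an artifact of the missing term.

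The correct Bernstein form is
\[
2\exp\Bigl(-\frac{r^2}{B^2+br+B\sqrt{B^2+2br}}\Bigr).
\]
Equivalently, it is $2\exp\bigl(-\tfrac{B^2}{b^2}h(br/B^2)\bigr)$ with $h(u)=1+u-\sqrt{1+2u}$, the Chernoff optimum of $\lambda r-\lambda^2B^2/(2(1-\lambda b))$. With this form your inversion needs $r_0^2-L(B^2+Mr_0)\ge LB\sqrt{B^2+2Mr_0}$. The left side equals $s^2/2+MLs\ge 0$, and both sides squared equal $s^4/4+MLs^3+M^2L^2s^2$. So the inequality holds with equality: $r_0=s+ML$ is exactly the level at which the bound equals $\delta$. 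Equivalently, $h(u)=\ell$ is solved by $u=\sqrt{2\ell}+\ell$, which gives $r=B\sqrt{2L}+bL$. Hence, once the premise is replaced, the stated term $ML/T$ is correct, with no room to spare in this inversion.

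Your fallback computation with the cruder $2\exp\bigl(-r^2/(2(B^2+br))\bigr)$ correctly gives $2ML/T$. Note that Lemma~\ref{lem:probbounds} already uses $2\kappa M\log(6/\delta)/T$, which is where $D_2=12\kappa^2M^2$ comes from. So even that weaker version would leave all downstream constants unchanged.
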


\section{Learning guarantees for scalar states: proofs}\label{sec:learning_guar}
In this section we analyze approximation and estimation error and the proof of Theorem~\ref{thm:wellspecsys}. We first collect a few basic facts.

\subsection{Preliminaries}\label{sec:D1pre}
We begin reporting some basic formulas for the different minimizers we consider.  With the definition in Section~\ref{sec:covop}, let $S=S_\pi$, and $S^*, L, \Sigma$ be defined accordingly. 

First, recall the  estimator expression  \(\wh w_\la = (\wh  \Sigma +\la I) ^{-1} \wh h\), see~\eqref{ridgesol} and~\eqref{sigmah_hat}. Then,  note that 
\be\label{pop_ridgesol}
 w_\la = ( \Sigma + \la I)^{-1}  h ,
\ee
with 
\be\label{sigmah}
 \Sigma = \int  \Phi(x) \otimes \Phi(x) \, \pi(dx),
\qquad
 h = \int x' \, \Phi(x) \, P(x,dx') \, \pi(dx).
\ee
Note that under  Assumption~\ref{wellspec}, optimality conditions imply that 
\be\label{norm_eq}
\Sigma w_\star = h.
\ee
Recall from Appendix~\ref{sec:covop}, that $\wh \Sigma, \Sigma: \cH \to \cH$ are self-adjoint,  positive, trace-class operators, hence compact. To lighten the notation we let 
$$
\wh \Sigma_\la = \wh \Sigma+\la I, \quad \quad \Sigma_\la=  \Sigma+\la I.
$$
Finally, let $L^2(\pi)$, be the Hilbert space of square-integrable functions with respect to $\pi$ with inner product $\scal{f}{g}_\pi=\int f(x)g(x)\pi(dx)$. Then 
$\cE(f, f')=\nor{f-f'}_\pi^2$, and if for all $x\in \cX$ $f(x)= \scal{w}{\Phi(x)}$, $f'(x)= \scal{v}{\Phi(x)}$ for any $w, v \in \cH$, then
\be\label{norm_equiv}
\nor{f- f'}^2_\pi=  \nor{\Sigma^{1/2}(w- v)}^2,
\ee
noting that $f= Sw, f'=Sw'$ and using the polar decomposition of $S$.

\subsection{Approximation error analysis}
In this section we report the short proof of Proposition~\ref{prop:apprx}.

\begin{proof}
With the definition in Section~\ref{sec:covop}, let $S=S_\pi$, and $S^*, L, \Sigma$ be defined accordingly. 
Then, Assumption~\ref{universal_phi} can be written as
\[
\inf_{w\in \cH}\nor{Sw-f_\star}_\pi^2=0.
\]
Since $\operatorname{ran}(S)=\{Sw:\, w\in\cH\}$, this is equivalent to $f_\star\in \overline{\operatorname{ran}(S)}$. 
Moreover, since $L=SS^*$, we have \( \ker(L)=\ker(S^*) \),
and therefore
\(
\overline{\operatorname{ran}(L)}=(\ker L)^\perp
=(\ker S^*)^\perp
=\overline{\operatorname{ran}(S)}
\).
Then Assumption~\ref{universal_phi} is also equivalent to 
$$
f_\star\in \overline{\operatorname{ran}(L)}= (\ker L)^\perp.
$$

Now, note that it follows directly from~\eqref{sigmah} and~\eqref{sstart} that $h=S^*f_\star$. Recalling the definition of \( S \) in~\eqref{eq:defS} and~\eqref{pop_ridgesol} we have
$f_\la= S w_\la= S (\Sigma+\la I)^{-1}S^* f_\star= LL_\la^{-1}f_\star$, with $L_\la = L+\la I$ and $L=L_\pi$. Then 
$f_\la- f_\star= ( LL_\la^{-1}- I)f_\star= - \la  L_\la^{-1} f_\star$, and $\cE(f_\la, f_\star)= \la^2 \nor{L_\la^{-1}f_\star}_{\pi}^2$. Finally, the limit in~\eqref{apprx_conv} follows from Lemma~\ref{dom_conv}.

To prove the bound~\eqref{apprx_bound}, we begin 
combining~\eqref{norm_eq} and \eqref{pop_ridgesol} so that $w_\la = \Sigma_\la^{-1}\Sigma w_\star$. Then,  $w_\la - w_\star= ( \Sigma_\la^{-1}\Sigma-I)w_\star= - \la  \Sigma_\la^{-1}w_\star$ and using~\eqref{norm_equiv}, $\cE(f_\la, f_\star)= \la^2 \nor{\Sigma^{1/2}\Sigma_\la^{-1}w_\star}^2$
Finally, the bound in~\eqref{apprx_bound} follows from Lemma~\ref{lem:op_est} with \( T = \Sigma \), \( a=1\) and \(b = 1/2\).
\end{proof}

\subsection{Estimation error analysis}
The proof of the estimation error bounds can be  decoupled in an analytic and a probabilistic part.

\paragraph{Analytic bounds.}

Using purely analytic arguments we can derive the following bound. 
\begin{proposition} \label{prop:est_one}
Assume that \be\label{neumann_cond}
\la\ge  2 \nor{\wh \Sigma-\Sigma},
\ee 
and suppose that Assumptions~\eqref{bound}, and~\eqref{UGE} hold. Then,
\[
\sqrt{\cE(\wh f_\la, f_\la)}
\le
 \frac{1}{\sqrt{\la}}
\left(\nor{\wh h - h}  + 
\nor{\wh \Sigma - \Sigma}\frac{R}{\sqrt{\la}} \right)
\]
If in addition Assumption~\eqref{wellspec} holds, 
 then 

\begin{equation}\label{est_bound}
\sqrt{\cE(\wh f_\la, f_\la)}
\le  
 \frac{1}{\sqrt{\la}}
\left(\nor{\wh h - h}  + 
\nor{\wh \Sigma - \Sigma}\nor{w_\star}\right).
\end{equation}

\end{proposition}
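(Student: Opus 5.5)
Using~\eqref{norm_equiv}, we have $\sqrt{\cE(\wh f_\la,f_\la)} = \nor{\Sigma^{1/2}(\wh w_\la - w_\la)}$. So the plan is to write $\wh w_\la - w_\la$ as an explicit resolvent expression and then bound it with Lemma~\ref{lem:op_neu_est}.

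Since $\wh w_\la = \wh\Sigma_\la^{-1}\wh h$ and $w_\la = \Sigma_\la^{-1} h$, the identity $\wh\Sigma_\la^{-1} - \Sigma_\la^{-1} = \wh\Sigma_\la^{-1}(\Sigma - \wh\Sigma)\Sigma_\la^{-1}$ gives
\[
\wh w_\la - w_\la = \wh\Sigma_\la^{-1}(\wh h - h) + \wh\Sigma_\la^{-1}(\Sigma-\wh\Sigma)\, w_\la .
\]
Applying $\Sigma^{1/2}$ and the triangle inequality yields
\[
\nor{\Sigma^{1/2}(\wh w_\la - w_\la)} \le \nor{\Sigma^{1/2}\wh\Sigma_\la^{-1}}\bigl(\nor{\wh h - h} + \nor{\wh\Sigma-\Sigma}\,\nor{w_\la}\bigr).
\]

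The first factor is handled by Lemma~\ref{lem:op_neu_est} with $A=\Sigma$ and $B=\wh\Sigma$. Condition~\eqref{neumann_cond} gives $\nor{(\wh\Sigma-\Sigma)\Sigma_\la^{-1}} \le \nor{\wh\Sigma-\Sigma}/\la \le 1/2$, so the lemma yields $\nor{\Sigma^{1/2}\wh\Sigma_\la^{-1}} \le 1/\sqrt{\la}$.

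It remains to bound $\nor{w_\la}$, and this is the only place where the two claims differ. The main subtlety is getting the $R/\sqrt{\la}$ factor in the first claim without assuming well specification.

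\emph{General case.} Since $h=S^*f_\star$ (as shown in the proof of Proposition~\ref{prop:apprx}), we have $w_\la = \Sigma_\la^{-1}S^* f_\star = (S^*S+\la I)^{-1}S^*f_\star$. Corollary~\ref{cor:op_est} with $a=1$ gives $\nor{w_\la} \le \la^{-1/2}\nor{f_\star}_\pi$. The bound $\nor{f_\star}_\pi \le R$ follows from~\eqref{2mom} by Jensen. This gives the first inequality.

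\emph{Well-specified case.} By~\eqref{norm_eq}, $w_\la = \Sigma_\la^{-1}\Sigma w_\star$. Since $\nor{\Sigma_\la^{-1}\Sigma}\le 1$ by spectral calculus (Lemma~\ref{lem:op_est} with $a=b=1$), we get $\nor{w_\la}\le\nor{w_\star}$, which yields~\eqref{est_bound}.

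The argument is purely deterministic. Assumption~\ref{UGE} enters only to guarantee that $\pi$ exists, so that $\Sigma$, $h$ and $w_\la$ are well defined. Assumption~\ref{bound} enters through the boundedness of $\Phi$, which makes the operators bounded and trace class, and through $R$.
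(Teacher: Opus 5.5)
Your proposal is correct and follows essentially the same route as the paper. It uses the same resolvent decomposition $\wh w_\la - w_\la = \wh\Sigma_\la^{-1}\bigl[(\wh h - h) + (\Sigma-\wh\Sigma)w_\la\bigr]$, the bound $\nor{\Sigma^{1/2}\wh\Sigma_\la^{-1}}\le 1/\sqrt{\la}$ from Lemma~\ref{lem:op_neu_est} under condition~\eqref{neumann_cond}, and the same two estimates $\nor{w_\la}\le R/\sqrt{\la}$ (via $h=S^*f_\star$ and Corollary~\ref{cor:op_est}) and $\nor{w_\la}\le\nor{w_\star}$ (via $w_\la=\Sigma_\la^{-1}\Sigma w_\star$).
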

The proof is standard and is reported for completeness. 
\begin{proof}
Using  Equations~\eqref{ridgesol} and \eqref{pop_ridgesol}, and adding and subtracting $\wh \Sigma_\la ^{-1} h$ we get
\[
\wh w_\la - w_\la =  \wh \Sigma_\la ^{-1} \left[\Big(\wh h - h\Big)+ \Big(\Sigma- \wh \Sigma \Big) w_\la\right],
\]
so that using Equation~\eqref{norm_equiv}
\begin{equation}
\nor{\wh f_\la - f_\la}_\pi \le \nor{\Sigma^{1/2}  \wh \Sigma_\la ^{-1}}\left[\nor{\wh h - h}+ \nor{\Sigma- \wh \Sigma} \nor{w_\la}\right].
\label{eq:pre-bound-template}
\end{equation}

Then, under Assumption~\eqref{bound} we have the bound
\be\label{pop_bound1}
\nor{w_\la}\le \frac{R}{\sqrt{\la}}.
\ee
This bound is derived  using the definitions in Appendices~\ref{sec:covop} and~\ref{sec:D1pre} to note that  $h=S^*f_\star$,  and $w_\la =  \Sigma_\la ^{-1} S^*f_\star$, then using Condition~\eqref{2mom}  so that $\nor{f_\star}_\pi\le R$, and the operator norm estimate $\nor{\Sigma_\la ^{-1} S^*}\le 1/\sqrt{\la}$ in Corollary~\ref{cor:op_est}.

\medskip \noindent
Under Assumption~\eqref{wellspec} we derive the  improved bound
\be\label{pop_bound2}
\nor{w_\la}\le \nor{w_\star}.
\ee
This bound is derived noting that from Equation~\eqref{norm_eq} {$w_\la =  \Sigma_\la ^{-1} \Sigma w_\star$}, and then using  the operator norm estimate $\nor{\Sigma_\la ^{-1} \Sigma}\le 1$ from Lemma~\ref{lem:op_est}. The  operator norm bound 
\be\label{op_est_neu}
\nor{\Sigma^{1/2}\wh \Sigma_\la^{-1}}\le \frac{1}{\sqrt{\la}}
\ee
follows by Lemma~\ref{lem:op_neu_est} thanks to condition~\eqref{neumann_cond}. {
The claims follow by substituting the bounds in Equations~\eqref{pop_bound1}, \eqref{pop_bound2}, and~\eqref{op_est_neu} into Equation~\eqref{eq:pre-bound-template}.
}
\end{proof}

\paragraph{Probabilistic bounds.}
Here we provide the bounds of the terms  $\|\wh h-h\|$ and $\nor{\wh \Sigma - \Sigma}$.
\begin{lemma}\label{lem:probbounds} Let $\delta\in (0,1)$. Under Assumptions~\ref{UGE} and \ref{bound}, each of the following estimates holds 
\begin{align*}
&\nor{\wh h - h}
\leq \frac{4 C \kappa R}{1-\rho} \left(\sqrt{\frac{2\log (6/\delta)}{T}}+\frac{1}{T}\right)
+ \frac{2 \kappa M \log(6/\delta)}{T} + \kappa \sigma \sqrt{\frac{2\log(6/\delta)}{T}}
\text{ with probability } 1-2\delta/3
 \\
\label{eq:empr-opr-diff-inequality}
&\nor{\wh \Sigma - \Sigma}_{\cL_2(\cH)}
\leq \frac{4 C \kappa^2 }{1-\rho} \left(\sqrt{\frac{2\log (6/\delta)}{T}}+\frac{1}{T}\right) \text{ with probability } 1-\delta/3. 
\end{align*}
\end{lemma}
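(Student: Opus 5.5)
The covariance bound comes straight from Theorem~\ref{thm:ugeimplieshoeffding}, applied in the Hilbert space $\cL_2(\cH)$ to the centered observable $F(x)=\Phi(x)\otimes\Phi(x)-\Sigma$. This $F$ is measurable and bounded. Indeed, $\nor{\Phi(x)\otimes\Phi(x)}_{\cL_2(\cH)}=\nor{\Phi(x)}^2\le\kappa^2$ and $\nor{\Sigma}_{\cL_2(\cH)}\le\kappa^2$, so $\nor{F}_\infty\le 2\kappa^2$. Moreover $\pi F=0$ by the definition of $\Sigma$ in~\eqref{sigmah}. Since $\wh\Sigma-\Sigma=\frac1T\sum_{t=0}^{T-1}F(X_t)$, Theorem~\ref{thm:ugeimplieshoeffding} with confidence $\delta/3$ gives exactly
\[
\nor{\wh\Sigma-\Sigma}_{\cL_2(\cH)}\le \frac{4C\kappa^2}{1-\rho}\Bigl(\sqrt{2\log(6/\delta)/T}+1/T\Bigr).
\]

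For $\wh h-h$, I add and subtract $f_\star(x_t)$, which splits the error into two pieces:
\[
\wh h-h=\underbrace{\frac1T\sum_{t=0}^{T-1}\Phi(X_t)\bigl(X_{t+1}-f_\star(X_t)\bigr)}_{=:A}+\underbrace{\frac1T\sum_{t=0}^{T-1}\bigl(\Phi(X_t)f_\star(X_t)\bigr)-h}_{=:B}.
\]
Here I use that $h=\int \Phi(x)f_\star(x)\,\pi(dx)=S^*f_\star$, which follows by integrating out $x'$ in~\eqref{sigmah}.

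Term $B$ is again handled by Theorem~\ref{thm:ugeimplieshoeffding}, this time in $\cH$, with $G(x)=\Phi(x)f_\star(x)-h$. This $G$ is centered under $\pi$, and $\nor{G}_\infty\le 2\kappa R$ because $\nor{f_\star}_\infty\le R$ by~\eqref{2mom} and Jensen's inequality. At confidence $\delta/3$ this yields the term $\frac{4C\kappa R}{1-\rho}(\cdots)$.

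Term $A$ is a martingale. Set $d_{t+1}=\Phi(X_t)(X_{t+1}-f_\star(X_t))$, indexed so that it is measurable with respect to the natural filtration. By the Markov property, $\bbE[d_{t+1}\mid\cF_t]=\Phi(X_t)\bigl(f_\star(X_t)-f_\star(X_t)\bigr)=0$. For the conditional moments,
\[
\bbE[\nor{d_{t+1}}^m\mid\cF_t]\le\kappa^m\int|x'-f_\star(X_t)|^mP(X_t,dx')\le\frac{m!}{2}(\kappa\sigma)^2(\kappa M)^{m-2},
\]
by~\eqref{mom_bound}. Theorem~\ref{thm:pinelis} at confidence $\delta/3$ then gives $\kappa\sigma\sqrt{2\log(6/\delta)/T}+\kappa M\log(6/\delta)/T$. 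This is within the stated bound; the extra factor $2$ in front of $\kappa M$ leaves slack, for instance to absorb the shift in the martingale index.

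A union bound over the two events for $A$ and $B$ gives probability at least $1-2\delta/3$ for the $\wh h$ estimate. The main point needing care is the martingale term. First, the filtration has to be set up so that $d_{t+1}$ is adapted and Theorem~\ref{thm:pinelis} applies to differences indexed from $1$ to $T$. Second, the Bernstein condition must be verified conditionally, which requires the moment bound~\eqref{mom_bound} to hold pointwise in $x$. It does, since the assumption is stated for all $x$. The nonstationarity causes no trouble here: the conditional bounds hold regardless of the law of $X_t$, and the ergodic corrections appear only in term $B$ and in the covariance bound, both through the Poisson-equation argument.
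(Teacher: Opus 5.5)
Your proof is correct and follows the paper's argument essentially step for step. You use the same split of $\wh h-h$ into the martingale term $\Phi(X_t)\bigl(X_{t+1}-f_\star(X_t)\bigr)$, bounded by Theorem~\ref{thm:pinelis}, and the $\pi$-centered bounded term $\Phi(x)f_\star(x)-h$, bounded by Theorem~\ref{thm:ugeimplieshoeffding}, and you treat $\Phi(x)\otimes\Phi(x)-\Sigma$ in $\cL_2(\cH)$ the same way, with the same $\delta/3$ allocation and union bound. Indexing the differences as $d_{t+1}$ adapted to $\sigma(X_0,\dots,X_t)$, rather than as $d_t$ adapted to $\sigma(X_0,\dots,X_{t+1})$, is a pure relabeling with nothing to absorb, so the factor $2$ in front of $\kappa M$ is simply slack.
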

\begin{proof}
To bound the term $\|\wh h-h\|$, consider
\begin{equation}\label{xi1}
\xi_1:\cX \times\cX \to \cH,
\quad
\xi_1(x,x')
= \Phi(x) \, x'- h,
\end{equation}
which is  is measurable. Moreover, by adding and subtracting \( \Phi(x) f_\star(x) \), it can be written as
\begin{equation}\label{z1_eta1}
    \xi_1(x, x') = \underbrace{\Phi(x)(x'-f_\star(x))}_{\zeta_1(x, x')} + \underbrace{\left( \Phi(x)f_\star(x)-h \right)}_{\eta_1(x)}.
\end{equation}

For every $t \geq 0$, define $d_t=\zeta_1(X_t,X_{t+1})$. Then $d_t$
 is measurable with respect to the $\sigma$-algebra $\cF_t=\sigma\left((X_0,X_1),\ldots,(X_t,X_{t+1})\right)=\sigma(X_0,\ldots,X_{t+1})$. In addition, by Markovianity and measurability,
\begin{align*}
\bbE[d_t \mid \cF_{t-1}]
&=\bbE[\Phi(X_t)(X_{t+1}-f_*(X_t)) \, \mid \, X_t]\\ 
&=\Phi(X_t) \, \Big(\bbE[X_{t+1} \mid X_t]-f_\star(X_t))\Big)\\
&=0.
\end{align*}
Therefore, \( (d_t)_t \) is a martingale difference sequence with respect to the filtration \( (\cF_t)_t \). Moreover, under Assumption~\ref{bound} for every \( m \ge 2 \),
\begin{align*}
\bbE[\| d_t \|^m \mid \cF_{t-1}]
&= \bbE[\| \Phi(X_t) \|^m \left| X_{t+1} - f_\star(X_t) \right|^m \mid X_t] \\
&= \| \Phi(X_t) \|^m \cdot \bbE[\left| X_{t+1} - f_\star(X_t) \right|^m \mid X_t] \\
&\leq \kappa^m \cdot \frac{m!}{2} \sigma^2 M^{m-2} = \frac{m!}{2} \left(\kappa \sigma\right)^2 \left(\kappa M\right)^{m-2}.
\end{align*}
Applying Theorem~\ref{thm:pinelis} we get with probability at least $1-\delta/3$,
\begin{equation} \label{eq:xi1}
\nor{\frac{1}{T}\sum_{t=0}^{T-1} \zeta_1(X_t,X_{t+1})}
= \nor{\frac{1}{T}\sum_{t=0}^{T-1} d_t}
\leq \frac{2 \kappa M \log(6/\delta)}{T} + \kappa \sigma \sqrt{\frac{2\log(6/\delta)}{T}}.
\end{equation}
Next, Assumptions~\ref{bound} and \ref{2mom} imply that $|\eta_1(x)|\leq 2\sup_{s\in \cX}|\Phi(s) f_\star(s)| \leq 2 \kappa R$. Moreover, 
\begin{equation*}
\pi(\eta_1)
=\int \left(\Phi(x)f_\star(x)-h\right) \, \pi(dx)
=\int \Phi(x)f_\star(x) \, \pi(dx)-\int f_\star(s)\Phi(s) \, \pi(ds)
=0.
\end{equation*}
Hence, Theorem~\ref{thm:ugeimplieshoeffding} yields that with probability at least $1-\delta/3$ it holds
\begin{align}\label{eq:eta1}
\nor{\frac{1}{T} \sum_{t=0}^{T-1} \eta_1(X_t)} 
\leq \frac{4 C \kappa R}{1-\rho}   \left(\sqrt{\frac{2\log(6/\delta)}{T}}+\frac{1}{T}\right).
\end{align}
Applying union bound to~\eqref{eq:xi1} and~\eqref{eq:eta1} we get that with probability at least $1-2\delta/3$ 
\begin{equation}\label{eq:boundhath}
\left\| \wh h - h \right\|
\leq \frac{4 C \kappa R }{1-\rho} \left(\sqrt{\frac{2\log (6/\delta)}{T}}+\frac{1}{T}\right)
+ \frac{2 \kappa M \log(6/\delta)}{T} + \kappa \sigma \sqrt{\frac{2\log(6/\delta)}{T}}.
\end{equation}
Next we provide a bound for $\nor{\wh \Sigma - \Sigma}$. We first note that
\[
\left\| \Sigma - \widehat{\Sigma} \right\| \leq \left\| \widehat{\Sigma} - \Sigma \right\|_{\cL_2(\cH)}.
\]
To bound the term on the right hand side, define the observable
\begin{equation}
\label{eq:bound-xi-1}
\xi_2: \cX \to \cL_2(\cH),
\quad \xi_2(x) = \Phi(x) \otimes \Phi(x) - \Sigma.
\end{equation}
By definition of \( \Sigma \), we have \( \pi(\xi_2) = 0 \). Since \( \nor{\Phi(x) \otimes \Phi(x)}_{\cL_2(\cH)} = \nor{\Phi(x)}^2 \leq \kappa^2 \) by Assumption \ref{bound} (equation~\ref{bound_phi}), it follows by the triangle inequality and convexity of the norm that
\[
\nor{\xi_2}_\infty
:= \sup_{x \in \cX} \nor{\xi_2(x)}_{\cL_2(\cH)}
\leq 2 \kappa^2.
\]
\Cref{thm:ugeimplieshoeffding} yields that for every \( \delta \in (0,1) \),
\begin{equation}
\label{eq:empr-opr-diff-inequality}
\nor{\wh \Sigma - \Sigma}_{\cL_2(\cH)}
= \nor{\frac{1}{T} \sum_{t=0}^{T-1} \xi_2(X_t)}_{\cL_2(\cH)}
\leq \frac{4 C \kappa^2 }{1-\rho} \left(\sqrt{\frac{2\log (6/\delta)}{T}}+\frac{1}{T}\right) 
\end{equation}
with probability at least \( 1 - \delta/3 \). 
\end{proof}

\begin{proof}[Proof of Proposition~\ref{prop:est-error}]
For \( \delta \in (0,1) \), on the event
\[
\nor{\wh \Sigma - \Sigma}_{\cL_2(\cH)}
\leq \frac{4 C \kappa^2 }{1-\rho} \left(\sqrt{\frac{2\log (6/\delta)}{T}}+\frac{1}{T}\right),
\]
the condition \(\lambda\ge \lambda_{T,\delta}\) implies
\[
\lambda_{T,\delta} \ge 2 \nor{ \wh \Sigma - \Sigma }_{\cL_2(\cH)},
\]
so that \( \lambda \ge 2 \nor{ \wh \Sigma - \Sigma } \).
The statement follows by inserting the probabilistic estimates of Lemma~\ref{lem:probbounds} into the bound derived in Proposition~\ref{prop:est_one} followed by application of \( (a + b + c)^2 \leq 3 a^2 + 3 b^2 + 3 c^2 \) inequality.
\end{proof}

The main results are a direct consequence of the previous results. 

\begin{proof}[Proof of Theorem~\ref{thm:universal}]
We first prove the almost sure convergence of the estimation error. 
For \(T \geq 1\), set \( \delta_T := T^{-2} \), so that \( \sum_{T=1}^\infty \delta_T < \infty \). Moreover,
\[
\log(6/\delta_T)=\log(6T^2)\lesssim \log T .
\]
To invoke Proposition~\ref{prop:est-error}, the choice of \( \lambda_T \) should satisfy
\[
\lambda_T \geq \lambda_{T,\delta_T}
= \frac{8 C\kappa^2}{1-\rho} \left(\sqrt{\frac{2\log(6T^2)}{T}} + \frac{1}{T}\right),
\]
Since \( \log T / (\lambda_T^2 T) \to 0 \),
we have, in particular, \( \lambda_T^2 T \gg \log T \). Hence
\[
\lambda_T
\gg \sqrt{\frac{\log T}{T}},
\]
since \( \log T/T \to 0 \). Moreover,
\[
\frac{1}{T}=o\!\left(\sqrt{\frac{\log T}{T}}\right).
\]
Therefore
\[
\lambda_{T,\delta_T} =
O\!\left(\sqrt{\frac{\log T}{T}}+\frac{1}{T}\right)
= O\!\left(\sqrt{\frac{\log T}{T}}\right).
\]
It follows that, for all sufficiently large \(T\), \( \lambda_T \geq \lambda_{T,\delta_T} \), thus Proposition~\ref{prop:est-error} is applicable for all sufficiently large \(T\).

\medskip \noindent
The dominant term in the bound given in Proposition~\ref{prop:est-error} is
\[
\cE\big(\wh f_{\lambda_T}, f_{\lambda_T}\big)
= O\!\left( \frac{\log T}{\lambda_T^2 T} \right)
\]
on event of probability at least \(1 - \delta_T\).

\medskip \noindent
By the Borel--Cantelli lemma, since \(\sum_T \delta_T<\infty\), with probability one the bound holds for all sufficiently large \(T\). Consequently,
\[
\cE\big(\wh f_{\lambda_T}, f_{\lambda_T}\big)
= O_{\mathrm{a.s.}}\!\left(
\frac{\log T}{\lambda_T^2 T}
\right).
\]
Therefore,
\[
\frac{\log T}{\lambda_T^2 T} \to 0
\implies \cE\big(\wh f_{\lambda_T}, f_{\lambda_T}\big) \to 0
\quad\text{a.s.}.
\]

\medskip \noindent
Using the  decomposition~\eqref{err_dec}, we have
$$
    \cE\big(\wh f_{\lambda_T}, f_\star\big)
\leq 2 \cE\big(\wh f_{\lambda_T}, f_{\lambda_T}\big) + 2\cE\big(f_{\lambda_T}, f_\star\big).
$$

By Proposition~\ref{prop:apprx} and Assumption~\ref{universal_phi}, \( \cE\big(f_{\lambda_T}, f_\star\big) \to 0 \) because \(\lambda_T \to 0\). Combining the two limits gives \( \cE\big(\wh f_{\lambda_T}, f_\star\big) \to 0 \quad \text{a.s.} \), as claimed.
\end{proof}

\begin{proof}[Proof of Theorem~\ref{thm:wellspecsys}]  The statement follows directly by the inequality
\begin{equation}\label{err_dec2}
\cE\big(\wh f_\la, f_\star\big)
\leq 2\cE\big(\wh f_\la, f_\la\big) + 2 \cE\big( f_\la, f_\star\big)
\end{equation}
and by Propositions~\ref{prop:apprx}  and \ref{prop:est-error} after discarding higher-order terms. Furthermore, for the choice of \( \la_T = \gamma T^{-1/2} \), for any \( \delta \in (0,1) \) there exists admissible \( \gamma \) such that \( \la_T \ge \Lambda_{T,\delta} \).
\end{proof}

\section{Further proofs}\label{sec:furth_proofs}

The proof for scalar-valued states was structured so that the proofs for the  various extensions  follow straightforwardly. We collect these proofs in the following sections, starting with vector-valued states.

\subsection{Proof of the learning guarantees for vector states}
In this section, we prove Theorem~\ref{thm:wellspecsys_vv}.
The proof follows, with minor modifications, the one for scalar-valued states.
In particular, estimators are now parametrized by Hilbert--Schmidt operators
\(W\in \cL_2(\cX,\cH)\), instead of vectors $w\in \cH$. Moreover, if \(f(x)=W^*\Phi(x)\) and \(f'(x)=V^*\Phi(x)\), with
\(W,V\in\cL_2(\cX,\cH)\), then
\be\label{norm_equiv_vv}
\cE(f,f')
= \nor{f-f'}_\pi^2
= \nor{\Sigma^{1/2}(W-V)}_{\cL_2(\cX,\cH)}^2.
\ee
This identity, analogous to Equation~\eqref{norm_equiv}, follows by noting
that, if \(f(x)=W^*\Phi(x)\), then
\[
\nor{f}_\pi^2
= \int \scal{\Phi(x)}{WW^*\Phi(x)}_\cH\,\pi(dx)
= \int \tr{WW^*\Phi(x)\otimes\Phi(x)}\,\pi(dx)
= \tr{WW^*\Sigma}
= \nor{\Sigma^{1/2}W}_{\cL_2(\cX,\cH)}^2.
\]
We are ready to give the proof of Theorem~\ref{thm:wellspecsys_vv}.
\begin{proof}
We follow the same proof structure as in the scalar case, highlighting which steps are different.
Let
\[
W_\la=\argmin_{W\in \cL_2(\cX,\cH)}
\int \nor{W^*\Phi(x)-x'}_{\cX}^2 P(x,dx')\,\pi(dx)
+\la \nor{W}_{\cL_2(\cX,\cH)}^2,
\]
and let $f_\la$ be the corresponding prediction function. Then
$W_\la=(\Sigma+\la I)^{-1}h$, where
\[
\Sigma=\int \Phi(x)\otimes\Phi(x)\,\pi(dx),
\qquad
h=\int \Phi(x)\otimes x'\,P(x,dx')\,\pi(dx).
\]
It is easy to check that $h\in\cL_2(\cX,\cH)$ under Conditions~\eqref{2mom_vv} and~\eqref{bound_phi_vv}. Then, we can consider an error decomposition as in the scalar case, see~\eqref{err_dec} and~\eqref{err_dec2}, to obtain
\begin{equation}\label{dec_vv}    
\cE\big(\wh f_{\la},f_\star\big)
\leq 2\cE\big(\wh f_{\la},f_{\la}\big)
+2\cE\big(f_{\la},f_\star\big).
\end{equation}
For the  approximation error $\cE\big(f_{\la},f_\star\big)$, we can  follow the argument in Proposition~\ref{prop:apprx} to derive the bound
\begin{equation}\label{apprx_bound_vv}
\cE\big(f_{\lambda}, f_\star\big)\le \la^2 \nor{\Sigma^{1/2} \Sigma_\la^{-1} W_\star}_{\cL_2(\cX, \cH)}^2\le \la \nor{W_\star}_{\cL_2(\cX, \cH)}^2.
\end{equation}
For the estimation error $\cE\big(\wh f_{\la},f_{\la}\big)$, adapting Proposition~\ref{prop:est_one} we find that if 
$$
\la\ge  2 \nor{\wh \Sigma-\Sigma},
$$
and Assumptions~\eqref{bound_vv}, and  Assumption~\eqref{wellspec_vv} hold, 
 then 
\begin{equation}\label{est_bound_vv}
\sqrt{\cE(\wh f_\la,f_\la)}
\le
\frac{1}{\sqrt{\la}}
\left(
\nor{\wh h-h}_{\cL_2(\cX,\cH)}
+
\nor{\wh\Sigma-\Sigma}\,
\nor{W_\star}_{\cL_2(\cX,\cH)}
\right).
\end{equation}
Note that compared to Proposition~\ref{prop:est_one}, here the deviation $\wh h - h$ is measured in $\cL_2(\cX, \cH)$  rather that in $\cH$, and $\nor{w_\star}$ is replaced  by $\nor{W_\star}_{\cL_2(\cX, \cH)}$.
 
Then, we need to adapt Lemma~\ref{lem:probbounds}. The analysis of the term
\(\nor{\wh \Sigma-\Sigma}\) is unchanged, but we need to adapt the proof to
control \(\nor{\wh h-h}_{\cL_2(\cX,\cH)}\). Extending the ideas in
Equations~\eqref{xi1} and~\eqref{z1_eta1}, we consider
\[
\xi_1:\cX\times\cX\to\cL_2(\cX,\cH),
\qquad
\xi_1(x,x')
=
\Phi(x)\otimes x'-h.
\]
Adding and subtracting \(\Phi(x)\otimes f_\star(x)\), we obtain
\[
\xi_1(x,x')
=
\underbrace{\Phi(x)\otimes\bigl(x'-f_\star(x)\bigr)}
_{\zeta_1(x,x')}
+
\underbrace{\bigl(\Phi(x)\otimes f_\star(x)-h\bigr)}
_{\eta_1(x)}.
\]
The analysis in Lemma~\ref{lem:probbounds} can then be adapted directly under Assumption ~\ref{bound_vv}, since
it is based on probabilistic bounds for Hilbert-space-valued random variables,
which can be applied in \(\cL_2(\cX,\cH)\). The resulting bound is
\[
\nor{\wh h-h}_{\cL_2(\cX,\cH)}
\leq
\frac{4C\kappa R}{1-\rho}
\left(
\sqrt{\frac{2\log(6/\delta)}{T}}+\frac{1}{T}
\right)
+
\frac{2\kappa M\log(6/\delta)}{T}
+
\kappa\sigma\sqrt{\frac{2\log(6/\delta)}{T}},
\]
with probability at least \(1-2\delta/3\). This bound has the same form as in
the scalar-valued setting, up to the choice of norm and the definitions of the
constants.
The above bound , together with the bound for \(\nor{\wh\Sigma-\Sigma}\) in Lemma~\ref{lem:probbounds} and the analytic estimate in~\eqref{est_bound_vv}, yields the final estimation error bound. In particular, if Assumption~\ref{wellspec} holds and
\[
\la \ge \frac{8 C \kappa^2}{1-\rho}
\left(
\sqrt{\frac{2\log(6/\delta)}{T}}+\frac{1}{T}
\right),
\]
then, with probability at least \(1-\delta\),
\begin{equation}\label{est_err_vv}
\cE\bigl(\wh f_\la,f_\la\bigr)
\le 
\frac{C^2D_3}{(1-\rho)^2}
\left(
\frac{2\log(6/\delta)}{\la T}+\frac{1}{\la T^2}
\right)
+\frac{D_1\log^2(6/\delta)}{\la T^2}
+\frac{D_2\log(6/\delta)}{\la T},
\end{equation}
where \(D_1:=12\kappa^2M^2\), \(D_2:=6\kappa^2\sigma^2\), and \(D_3:=96\kappa^2\bigl(R+\kappa\nor{W_\star}_{\cL_2(\cX,\cH)}\bigr)^2\).
Plugging this estimation error bound and the  approximation error bound~\eqref{apprx_bound_vv} into the error decomposition~\eqref{dec_vv} yields the error bound~\eqref{eq:sample-rate-bound-simplified_vv}. The final error rate is obtained by choosing $\la$ to balance the error contributions in the bound. 
\end{proof}

\subsection{Proof of the learning guarantees for higher-order systems}

In this section, we prove Theorem~\ref{thm:wellspecsys_p}. The proof is obtained by applying the vector-valued analysis to the lifted process $(\overline X_t)_{t\in\bbN_0}$. The main differences are that the inputs belong to $\cX^p$, the invariant measure is $\overline\pi$, and the empirical averages contain $T-p+1$ terms.

\begin{proof}
Let
$$
W_\la
=
\argmin_{W\in\cL_2(\cX,\cH)}
\left\{
\int
\nor{W^*\Phi(\overline x)-x'}_{\cX}^2
\,P(\overline x,dx')\,\overline\pi(d\overline x)
+
\la\nor{W}_{\cL_2(\cX,\cH)}^2
\right\},
$$
and let $f_\la(\overline x)=W_\la^*\Phi(\overline x)$. Then
$
W_\la=(\Sigma+\la I)^{-1}h,
$
where
\[
\Sigma
=
\int
\Phi(\overline x)\otimes\Phi(\overline x)
\,\overline\pi(d\overline x),
\quad
h
=
\int
\Phi(\overline x)\otimes x'
\,P(\overline x,dx')\,\overline\pi(d\overline x).
\]
Under Conditions~\eqref{2mom_p} and~\eqref{bound_phi_p},
$h\in\cL_2(\cX,\cH)$.

The deterministic part of the argument is unchanged from the vector-valued case. Indeed, for
$f(\overline x)=W^*\Phi(\overline x)$ and
$f'(\overline x)=V^*\Phi(\overline x)$,
\[
\cE(f,f')
= \nor{\Sigma^{1/2}(W-V)}_{\cL_2(\cX,\cH)}^2.
\]
Therefore, as in~\eqref{dec_vv},
\begin{equation}\label{dec_p}
\cE\bigl(\wh f_\la,f_\star\bigr)
\le 2\cE\bigl(\wh f_\la,f_\la\bigr)
+ 2\cE\bigl(f_\la,f_\star\bigr).
\end{equation}
Moreover, the approximation bound~\eqref{apprx_bound_vv} gives
\begin{equation}\label{apprx_bound_p}
\cE\bigl(f_\la,f_\star\bigr)
\le
\la\nor{W_\star}_{\cL_2(\cX,\cH)}^2.
\end{equation}
Similarly, the deterministic estimation bound~\eqref{est_bound_vv} yields, whenever
\[
\la\ge 2\nor{\wh\Sigma-\Sigma},
\]
\begin{equation}\label{est_bound_p}
\sqrt{\cE\bigl(\wh f_\la,f_\la\bigr)}
\le
\frac{1}{\sqrt{\la}}
\left(\nor{\wh h-h}_{\cL_2(\cX,\cH)} + \nor{\wh\Sigma-\Sigma}
\nor{W_\star}_{\cL_2(\cX,\cH)}
\right).
\end{equation}

It remains to control the two empirical deviations. The bound for
$\nor{\wh\Sigma-\Sigma}$ follows directly from Lemma~\ref{lem:probbounds}, applied to the lifted process under Assumption~\ref{UGE_p}.
We can also control $\nor{\wh h-h}_{\cL_2(\cX,\cH)}$ with minor adaptations to the arguments seen for scalar and vector states. Consider
\[
\xi_1:\cX^p\times\cX\to\cL_2(\cX,\cH),
\qquad \xi_1(\overline x,x')
= \Phi(\overline x)\otimes x'-h.
\]
Adding and subtracting
$\Phi(\overline x)\otimes f_\star(\overline x)$ gives
\[
\xi_1(\overline x,x')
= \underbrace{
\Phi(\overline x)\otimes
\bigl(x'-f_\star(\overline x)\bigr)
}_{\zeta_1(\overline x,x')}
+ \underbrace{
\left(
\Phi(\overline x)\otimes f_\star(\overline x)-h
\right)
}_{\eta_1(\overline x)}.
\]
As in the vector-valued case, the first term yields a martingale difference sequence, while the second is an additive functional of the lifted Markov process. Hence, under Assumptions~\ref{UGE_p} and~\ref{bound_p}, the argument of Lemma~\ref{lem:probbounds}, applied to the lifted process, gives
\[
\nor{\wh h-h}_{\cL_2(\cX,\cH)}
\le \frac{4C\kappa R}{1-\rho}
\left(
\sqrt{\frac{2\log(6/\delta)}{T-p+1}}
+ \frac{1}{T-p+1}
\right)
+ \frac{2\kappa M\log(6/\delta)}{T-p+1}
+ \kappa\sigma
\sqrt{\frac{2\log(6/\delta)}{T-p+1}},
\]
with probability at least $1-2\delta/3$.

Thus, the probabilistic bounds are exactly those obtained for vector-valued states, with $T$ replaced by $T-p+1$ and with the constants $C$ and $\rho$ associated with the lifted process. Combining these bounds with~\eqref{est_bound_p} shows that, if
\[
\la
\ge \frac{8C\kappa^2}{1-\rho}
\left( \sqrt{\frac{2\log(6/\delta)}{T-p+1}} + \frac{1}{T-p+1} \right),
\]
then the estimation error satisfies the same bound as in~\eqref{est_err_vv}, with $T$ replaced by $T-p+1$.

Combining the resulting estimation bound with~\eqref{apprx_bound_p} in~\eqref{dec_p} yields~\eqref{eq:sample-rate-bound-simplified_p}. The final rate follows by choosing
\[
\la\asymp(T-p+1)^{-1/2}.
\]
\end{proof}

\subsection{Proof of the learning guarantees for finite-state spaces}
\label{sec:appendix-finite-states}

Before proving the main theorem, we provide the proof the comparison Lemma~\ref{comp_ineq1}.

\begin{proof}[Proof of Lemma~\ref{comp_ineq1}]
Equation~\eqref{fisher} follows by noting that 
\[
D(g_\star(x))
\in \argmax_{x'\in \cX} \langle E(x'), g_\star(x)\rangle
= \argmax_{x'\in \cX} P(x,x'),
\]
where we used that, for one-hot encoding, the vectors \(E(x')\) form an orthonormal basis, so that
\[
\langle E(x'), E(z)\rangle =
\begin{cases}
1 & \text{if } z=x',\\
0 & \text{otherwise}, 
\end{cases}
\]
and hence
\[
\langle E(x'), g_\star(x)\rangle
= \sum_{z\in \cX} \langle E(x'), E(z)\rangle P(x,z)
= P(x,x').
\]
To prove~\eqref{eq:boundE}, fix \(x\in\cX\). If \(f(x)=f_\star(x)\), the claim is trivial. Assume \(f(x)\neq f_\star(x)\). Applying decoder \( D \), for every $x\in \cX$
\[
f(x) \in \argmax_{x'\in\cX} \langle E(x'), g(x)\rangle.
\]
Hence, taking \(x' = f_\star(x)\), we get
\[
\langle E(f(x)), g(x)\rangle
\ge \langle E(f_\star(x)), g(x)\rangle.
\]
and
\begin{equation}\label{eq:dec}
\langle E(f_\star(x)) - E(f(x)), g(x)\rangle \le 0.
\end{equation}
Next, using the definition $g_\star(x):=\sum_{x'\in\cX} E(x')\,P(x,x')\)
and the fact that one hot encoding vectors are orthonormal, we get
\[
\langle E(f_\star(x)), g_\star(x)\rangle
= P(x,f_\star(x)),
\qquad \langle E(f(x)), g_\star(x)\rangle
= P(x,f(x)).
\]
Moreover, using~\eqref{gap} and the fact that \(f(x)\neq f_\star(x)\), we get
\begin{equation}\label{eq:gap-point}
\langle E(f_\star(x)) - E(f(x)), g_\star(x)\rangle
= P(x,f_\star(x)) - P(x,f(x)) \ge \gamma.
\end{equation}
From~\eqref{eq:dec}, adding and subtracting $\langle E(f_\star(x)) - E(f(x)), g(x)\rangle$, we get
\[
\langle E(f_\star(x)) - E(f(x)), g_\star(x)\rangle
- \langle E(f_\star(x)) - E(f(x)), g(x)\rangle
\ge \langle E(f_\star(x)) - E(f(x)), g_\star(x)\rangle,
\]
so that using~\eqref{eq:gap-point}
\[
\langle E(f_\star(x)) - E(f(x)), g_\star(x)-g(x)\rangle
\ge \gamma.
\]
By Cauchy--Schwarz and \(\nor{E(f_\star(x)) - E(f(x))}_{\bbR^N}\le \sqrt{2}\),
\[
\gamma
\le \nor{g(x)-g_\star(x)}_{\bbR^N}\,\nor{E(f_\star(x)) - E(f(x))}_{\bbR^N}
\le \sqrt{2}\,\nor{g(x)-g_\star(x)}_{\bbR^N}.
\]
Since \(f(x)\neq f_\star(x)\), then 
\[
\mathds{1}\{f(x)\neq f_\star(x)\}
\le \frac{\sqrt{2}}{\gamma}\,\nor{g(x)-g_\star(x)}_{\bbR^N}
\le \frac{2}{\gamma^2} \nor{g(x)-g_\star(x)}^2.
\]
Integrating with respect to \(\pi\) yields
\[
\begin{aligned}
\cE(f,f_\star)
&= \sum_{x\in\cX} \mathds{1}\{f(x)\neq f_\star(x)\} \, \pi(x)\\
&\leq \frac{2}{\gamma^2} \sum_{x \in \cX} \nor{g(x)-g_\star(x)}^2\pi(x) \\
&= \frac{2}{\gamma^2}\nor{g-g_\star}_\pi^2
=: \frac{2}{\gamma^2} \left(\cI(g)-\cI(g_\star)\right),
\end{aligned}
\]
as claimed.
\end{proof}

We are now ready to prove Theorem~\ref{thm:fs}.

\begin{proof}[Proof of Theorem~\ref{thm:fs}]
Consider the encoded process
\[
Y_t=E(X_t),
\qquad t\in\bbN_0.
\]
The one-hot encoder is injective and measurable. Therefore, Lemma~\ref{lem:emb-markov-conv} shows that \((Y_t)_{t\in\bbN_0}\) is a Markov process with invariant measure \(E_{\#}\pi\) and satisfies the same uniform geometric convergence bound as \((X_t)_{t\in\bbN_0}\), with constants \(C\) and \(\rho\).

For the identity feature map,
\[
\nor{\Phi(E(x))}_{\bbR^N} = \nor{E(x)}_{\bbR^N} = 1,
\qquad x \in \cX,
\]
so that the bounded-feature condition holds with \(\kappa=1\); see also Remark~\ref{rem:effective-boundedness}. Moreover,
\[
\sum_{x'\in\cX} \nor{E(x')}_{\bbR^N}^2 P(x,x') 
= 1,
\qquad x \in \cX,
\] and hence the second-moment condition holds with \(R=1\). The remaining residual moment condition is precisely Assumption~\ref{bound_finite}.

Under one-hot encoding,
\[
g_\star(x)
:= \sum_{x'\in\cX}E(x')P(x,x')
= P^\top E(x),
\qquad x \in \cX.
\]
Thus, the regression problem is well specified. More precisely, under the parameterization \(g(x)=W^\top E(x)\),  \(W_\star=P\), and  \( \nor{W_\star}_{\cL_2(\bbR^N)} = \nor{P}_{\cL_2(\bbR^N)} \).

Fix \( \delta \in (0,1) \). Applying Theorem~\ref{thm:wellspecsys_vv} to the encoded process gives, for a sufficiently large \( \lambda \), with probability at least \( 1 - \delta \),
\[
\nor{\wh g_\lambda-g_\star}_\pi^2
\lesssim \frac{ 2  C^2 D }{(1-\rho)^2} \left(\frac{2\log (6/\delta)}{\la T}+\frac{1}{\la T^2}\right) + 2 \lambda\nor{P}_{\cL_2(\bbR^N)}^2,
\]
with constants in~\eqref{est_err_vv} given by \( D_1=12M^2 \), \( D_2=6\sigma^2 \), and \( D_3=96\bigl(1+\nor{P}_{\cL_2(\bbR^N)}\bigr)^2 \). Under Condition~\eqref{unique}, Lemma~\ref{comp_ineq1} yields
\[
\cE(\wh f_\lambda,f_\star)
\leq \frac{ 4  C^2 D }{ \gamma^2 (1-\rho)^2} \left(\frac{2\log (6/\delta)}{\la T}+\frac{1}{\la T^2}\right) + \frac{4 \lambda}{\gamma^2} \nor{P}_{\cL_2(\bbR^N)}^2.
\]
The stated rates follow by taking \(\lambda_T\asymp T^{-1/2}\).
\end{proof}

\begin{remark}[Boundedness on the effective state space]
\label{rem:effective-boundedness}
Strictly speaking, the identity feature map
\[
\Phi:\bbR^N \to \bbR^N,
\qquad \Phi(z)=z,
\]
is not bounded on \( \bbR^N \), and hence does not satisfy Assumption~\ref{bound_vv}, specifically~\eqref{bound_phi_vv} on all of \(\bbR^N\). However, in the application of Theorem~\ref{thm:wellspecsys_vv}, the feature map is evaluated only along the encoded process \(E(X_t)\), whose state space is contained in \(E(\cX)\). It is apparent from the proof of Theorem~\ref{thm:wellspecsys_vv} that it is sufficient to require
\[
\sup_{x\in\cX}\nor{\Phi(E(x))}_{\bbR^N}
= \sup_{z\in E(\cX)}\nor{\Phi(z)}_{\bbR^N} 
\leq \kappa.
\]
Equivalently, one may think about the encoded process as taking values in the state space \(E(\cX)\), on which the \emph{restriction} of \(\Phi\) is bounded.
\end{remark}

\subsection{Proof of the learning guarantees for Koopman operators}

In this section, we prove Theorem~\ref{thm:wellspecsys_koop}. Like for vector-valued states and higher-order process the proof follows the key idea explained for scalar-valued states. In particular, we can follow the changes done for vector-valued states adding the following key observation: under Assumption~\ref{wellspec_koop} and $\cF$ as in~\eqref{F}, for every $A$ of the form~\eqref{lin_est_kopp}, the following inequality holds, 
\begin{equation}\label{koop_error_comparison}
\cE_\cF(A,A_\star)
\le
\nor{\Sigma^{1/2}(W-W_\star)}_{\cL_2(\cH)}^2.
\end{equation}
Indeed, first, for every $v\in\cH$,
\[
\int
\left|
\scal{(W-W_\star)v}{\Phi(x)}
\right|^2
\,\pi(dx)
=
\nor{\Sigma^{1/2}(W-W_\star)v}^2.
\]
Therefore,
\[
\cE_\cF(A,A_\star)
=
\sup_{\nor{v}\le 1}
\nor{\Sigma^{1/2}(W-W_\star)v}^2
=
\nor{\Sigma^{1/2}(W-W_\star)}_{\cL(\cH)}^2
\le
\nor{\Sigma^{1/2}(W-W_\star)}_{\cL_2(\cH)}^2,
\]
where the last inequality follows since the operator norm is bounded by the Hilbert--Schmidt norm.
We are then ready to prove Theorem~\ref{thm:wellspecsys_koop}.

\begin{proof}[Proof of Theorem~\ref{thm:wellspecsys_koop}]
Let
\[
W_\la
=
\argmin_{W\in\cL_2(\cH)}
\left\{
\int
\nor{\Phi(x')-W^*\Phi(x)}_\cH^2
\,P(x,dx')\,\pi(dx)
+
\la\nor{W}_{\cL_2(\cH)}^2
\right\},
\]
and let $A_\la $ be the corresponding operator on the considered observables. Then
\(
W_\la=(\Sigma+\la I)^{-1}h,
\)
where
\[
\Sigma
=
\int
\Phi(x)\otimes\Phi(x)
\,\pi(dx),
\qquad
h
=
\int
\Phi(x)\otimes\Phi(x')
\,P(x,dx')\,\pi(dx).
\]
Condition~\eqref{boundphi} implies that $h\in\cL_2(\cH)$.

Under Assumption~\ref{wellspec_koop}, we have
\[
g_\star(x)=W_\star^*\Phi(x),
\qquad x\in\cX,
\]
and therefore
\[
\Sigma W_\star=h.
\]
Using~\eqref{koop_error_comparison}, we obtain the error decomposition
\begin{align}
\cE_\cF\bigl(\wh A_\la,A_\star\bigr)
&\le
\nor{\Sigma^{1/2}(\wh W_\la-W_\star)}_{\cL_2(\cH)}^2
\nonumber\\
&\le
2\nor{\Sigma^{1/2}(\wh W_\la-W_\la)}_{\cL_2(\cH)}^2
+
2\nor{\Sigma^{1/2}(W_\la-W_\star)}_{\cL_2(\cH)}^2.
\label{dec_koop}
\end{align}

The approximation term is controlled as in the vector-valued case. Indeed,
\[
W_\la
= \Sigma_\la^{-1}\Sigma W_\star,
\qquad \Sigma_\la=\Sigma+\la I,
\]
and hence
\begin{equation}
\label{apprx_bound_koop}
\nor{\Sigma^{1/2}(W_\la-W_\star)}_{\cL_2(\cH)}^2
\le \la\nor{W_\star}_{\cL_2(\cH)}^2.
\end{equation}

The deterministic estimation bound~\eqref{est_bound_vv} also applies directly. In particular, whenever
\[
\la\ge 2\nor{\wh\Sigma-\Sigma},
\]
we have
\begin{equation}
\label{est_bound_koop}
\nor{\Sigma^{1/2}(\wh W_\la-W_\la)}_{\cL_2(\cH)}
\le \frac{1}{\sqrt{\la}}
\left(
\nor{\wh h-h}_{\cL_2(\cH)}
+ \nor{\wh\Sigma-\Sigma} \nor{W_\star}_{\cL_2(\cH)}
\right).
\end{equation}

It remains to control the empirical deviations. The analysis of
$\nor{\wh\Sigma-\Sigma}$ is unchanged from Lemma~\ref{lem:probbounds}. To control
$\nor{\wh h-h}_{\cL_2(\cH)}$, consider
\[
\xi_1:\cX\times\cX\to\cL_2(\cH),
\qquad \xi_1(x,x') = \Phi(x)\otimes\Phi(x')-h.
\]
Adding and subtracting $\Phi(x)\otimes g_\star(x)$ gives
\[
\xi_1(x,x')
= \underbrace{
\Phi(x)\otimes\bigl(\Phi(x')-g_\star(x)\bigr)
}_{\zeta_1(x,x')}
+ \underbrace{
\bigl(\Phi(x)\otimes g_\star(x)-h\bigr)
}_{\eta_1(x)}.
\]
As in the vector-valued case, the random variables
\[
d_t=\zeta_1(X_t,X_{t+1})
\]
form a martingale difference sequence in $\cL_2(\cH)$. Moreover, using
\(
\nor{u\otimes v}_{\cL_2(\cH)}
= \nor{u}\nor{v},
\)
Conditions~\eqref{boundphi} and~\eqref{res_bound_koop} imply that, for every $m\ge 2$,
\[
\bbE\left[\nor{d_t}_{\cL_2(\cH)}^m\mid\cF_{t-1}\right]
\le \kappa^m\bbE\left[\nor{\Phi(X_{t+1})-g_\star(X_t)}_\cH^m\mid X_t\right]
\le \frac{m!}{2}(\kappa\sigma)^2(\kappa M)^{m-2}.
\]
Then, Theorem~\ref{thm:pinelis} yields the same martingale bound as in Lemma~\ref{lem:probbounds}.
Furthermore, Condition~\eqref{boundphi} implies that for all $x\in \cX$
\[
\nor{g_\star(x)}_\cH\le\kappa,
\]
and hence
\[
\nor{\eta_1(x)}_{\cL_2(\cH)}
\le 2\kappa^2.
\]
Since $\pi(\eta_1)=0$, Theorem~\ref{thm:ugeimplieshoeffding} can be applied to the second term. Combining the two estimates gives, with probability at least $1-2\delta/3$,
\begin{equation}
\label{h_bound_koop}
\nor{\wh h-h}_{\cL_2(\cH)}
\le
\frac{4C\kappa^2}{1-\rho}
\left(
\sqrt{\frac{2\log(6/\delta)}{T}} + \frac{1}{T} \right)
+ \frac{2\kappa M\log(6/\delta)}{T}
+ \kappa\sigma \sqrt{\frac{2\log(6/\delta)}{T}}.
\end{equation}

Together with the bound for $\nor{\wh\Sigma-\Sigma}$ in Lemma~\ref{lem:probbounds} and the analytic estimate~\eqref{est_bound_koop}, this yields the final estimation error bound. In particular, if
\[
\la
\ge \frac{8C\kappa^2}{1-\rho} \left( \sqrt{\frac{2\log(6/\delta)}{T}} + \frac{1}{T} \right),
\]
then, with probability at least $1-\delta$,
\begin{equation}\label{est_err_koop}
\nor{\Sigma^{1/2}(\wh W_\la-W_\la)}_{\cL_2(\cH)}^2
\le \frac{C^2D_3}{(1-\rho)^2}
\left( \frac{2\log(6/\delta)}{\la T} + \frac{1}{\la T^2} \right)
+ \frac{D_1\log^2(6/\delta)}{\la T^2}
+ \frac{D_2\log(6/\delta)}{\la T}.
\end{equation}
where $D_1:=12\kappa^2M^2$, $D_2:=6\kappa^2\sigma^2$, and $D_3:=96\kappa^4\bigl(1+\nor{W_\star}_{\cL_2(\cH)}\bigr)^2$.
Finally, combining~\eqref{est_err_koop} and~\eqref{apprx_bound_koop} in~\eqref{dec_koop} yields~\eqref{eq:sample-rate-bound-simplified_koop}. The final rate follows by choosing
\(
\la\asymp T^{-1/2},
\)
to balance the error contributions.
\end{proof}
\end{document}

%% file: commands.tex
\def\mathscr{\EuScript}

\newcommand{\bbN}{\mathbb{N}}                               \newcommand{\bbR}{\mathbb{R}}                               
\newcommand{\abs}[1]{\left|#1\right|}                       

\newcommand{\argmin}{\mathop{\arg\min}}                     
\newcommand{\argmax}{\mathop{\arg\max}}                     

\makeatletter
\def\va@a{\boldsymbol{\va@arg^{\textstyle\text{\unboldmath$\scriptstyle\va@expo$}}_{\textstyle\text{\unboldmath$\scriptstyle\va@index$}}}}
\def\va#1{\def\va@expo{}\def\va@index{}\def\va@arg{\uppercase{#1}}%
  \@ifnextchar^{\va@h}{\@ifnextchar_\va@u\va@a}}
\def\va@h^#1{\def\va@expo{#1}\@ifnextchar_\va@hu\va@a}
\def\va@u_#1{\def\va@index{#1}\@ifnextchar^\va@uh\va@a}
\def\va@hu_#1{\def\va@index{#1}\va@a}
\def\va@uh^#1{\def\va@expo{#1}\va@a}
\makeatother

\ifdefined\proof\else \newenvironment{proof}{\small{\bf Proof.}}{\hfill$\Box$\normalsize\bigskip} \fi
\newcommand{\finpreuvesymb}{$\Box$}
\newcommand{\finremarksymb}{$\Diamond$}
\newcommand{\finexemplesymb}{$\triangle$}
\newcommand{\finpreuve}{\ \hspace*{\fill}\finpreuvesymb}
\newcommand{\finremark}{\ \hspace*{\fill}\finremarksymb}
\newcommand{\finexemple}{\ \hspace*{\fill}\finexemplesymb}

\makeatletter
\ifdefined\endproof\else \def\endproof{\finpreuve\@endtheorem}\fi
\def\endremark{\finremark\@endtheorem}
\def\endexample{\finexemple\@endtheorem}
\makeatother

